\newtheorem{observation}[theorem]{Observation}
\DeclareMathAlphabet{\mymathbb}{U}{BOONDOX-ds}{m}{n}
\newcommand{\exc}[2]{{\mathbb E}\left[ #1 \,\middle \vert\, #2 \right]}
\newcommand{\exs}[2]{{\mathbb E_{#1}}\left[ #2 \right]}
\newcommand{\vars}[2]{{\mathbb V_{#1}}\left[ #2 \right]}
\newcommand{\jacobian}{{\textbf{Jb}^\theta }}
\newcommand{\excs}[3]{{\mathbb E_{#1}}\left[ #2  \,\middle \vert\, #3 \right]}
\algnewcommand{\LineComment}[1]{\State \(\triangleright\) #1}
\newcommand{\atpis}[1]{ \left[ #1 \right]_{a = \pi(s)}}
\newcommand{\atmus}[1]{ \left[ #1 \right]_{a = \mu_s}}
\newcommand{\atpist}[1]{ \left[ #1 \right]_{a = \pi(s_t)}}
\newcommand{\Qv}{ Q^{\pi_{\theta}} }
\newcommand{\Vv}{ V^{\pi_{\theta}} }
\DeclareMathOperator{\trace}{trace}
\newcommand{\est}{ \eta^s_\theta }
\newcommand{\Sigmasqr}[0]{\Sigma^{1/2}}
\DeclareMathOperator*{\argmax}{arg\,max}
\definecolor{ffqqtt}{rgb}{1,0,0.2}
\definecolor{qqwuqq}{rgb}{0,0.39215686274509803,0}
\pgfplotsset{compat=1.14}
\begin{document}

\title{Expected Policy Gradients for Reinforcement Learning}

\author{
    \\
    \name Kamil Ciosek\thanks{Work on this paper started when the first author was a post-doc at the Department of Computer Science, University of Oxford.} \email kamil.ciosek@microsoft.com \\
    \addr Microsoft Research Cambridge, \\
    21 Station Road,
    Cambridge CB1 2FB,
    United Kingdom
    \\ 
    \\
    \name Shimon Whiteson \email shimon.whiteson@cs.ox.ac.uk \\
    \addr Department of Computer Science, University of Oxford\\
    Wolfson Building,
    Parks Road,
    Oxford OX1 3QD
    United Kingdom
}

\editor{Jan Peters}

\maketitle

\begin{abstract}
We propose \emph{expected policy gradients} (EPG), which unify stochastic policy gradients (SPG) and deterministic policy gradients (DPG) for reinforcement learning. Inspired by \emph{expected sarsa}, EPG integrates (or sums) across actions when estimating the gradient, instead of relying only on the action in the sampled trajectory. For continuous action spaces, we first derive a practical result for Gaussian policies and quadratic critics and then extend it to a universal analytical method, covering a broad class of actors and critics, including Gaussian, exponential families, and policies with bounded support. For Gaussian policies, we introduce an exploration method that uses covariance proportional to $e^H$, where $H$ is the scaled Hessian of the critic with respect to the actions. For discrete action spaces, we derive a variant of EPG based on softmax policies. We also establish a new \emph{general policy gradient theorem}, of which the stochastic and deterministic policy gradient theorems are special cases. Furthermore, we prove that EPG reduces the variance of the gradient estimates without requiring deterministic policies and with little computational overhead. Finally, we provide an extensive experimental evaluation of EPG and show that it outperforms existing approaches on multiple challenging control domains.
\end{abstract}

\begin{keywords}
   policy gradients, exploration, bounded actions, reinforcement learning, Markov decision process (MDP)
\end{keywords}

\section{Introduction}
In reinforcement learning, an agent aims to learn an optimal behavior policy from trajectories sampled from the environment. In settings where it is feasible to explicitly represent the policy, \emph{policy gradient} methods \citep{sutton2000policy, peters2006policy, peters2008reinforcement, silver2014deterministic}, which optimize policies by gradient ascent, have enjoyed great success, especially with large or continuous action spaces. The archetypal algorithm optimizes an \emph{actor}, i.e., a policy, by  following a policy gradient that is estimated using a \emph{critic}, i.e., a value function.

The policy can be stochastic or deterministic, yielding \emph{stochastic policy gradients} (SPG) \citep{sutton2000policy} or \emph{deterministic policy gradients} (DPG) \citep{silver2014deterministic}. The theory underpinning these methods is quite fragmented, as each approach has a separate policy gradient theorem guaranteeing the policy gradient is unbiased under certain conditions.

Furthermore,  both approaches have significant shortcomings. For SPG,  variance in the gradient estimates means that many trajectories are usually needed for learning.  Since gathering trajectories is typically expensive, there is a great need for more sample efficient methods.

DPG's use of deterministic policies mitigates the problem of variance in the gradient but raises other difficulties.  The theoretical support for DPG is limited since it assumes a critic that approximates $\nabla_a Q$ when in practice it approximates $Q$ instead.  In addition, DPG learns \emph{off-policy},\footnote{We show in this article that, in certain settings, off-policy DPG is equivalent to EPG, our on-policy method.} which means that, unless specifically designed otherwise, it explores in a way that is oblivious to the reward signal. More importantly, learning off-policy necessitates designing a suitable \emph{exploration policy}, which is difficult in practice.  In fact, efficient exploration in DPG is an open problem and most applications simply use independent Gaussian noise or the Ornstein-Uhlenbeck heuristic \citep{uhlenbeck1930theory,lillicrap2015continuous}.

This article proposes a new approach called \emph{expected policy gradients} (EPG) that unifies policy gradients in a way that yields both theoretical and practical insights. Inspired by \emph{expected sarsa} \citep{sutton1998reinforcement,vanseijen:adprl09}, the main idea is to integrate across the action selected by the stochastic policy when estimating the gradient, instead of relying only on the action selected during the sampled trajectory. While the idea of summing over discrete actions and calculating analytic integrals has been proposed previously \citep{sutton2000comparing, bahdanau2016actor, kakade2002natural} and concurrently \citep{2017arXiv170900503A} in some specific settings, EPG is the first method to treat the technique in a unified way for both discrete and continuous action space on top of a single theoretical framework. The detailed differences between EPG and these approaches are given in Section \ref{me-sum-i}.

The contributions of this paper are threefold. First, EPG enables two general theoretical contributions (Section \ref{sec-gpgt}): 1) a new \emph{general policy gradient theorem}, of which the stochastic and deterministic policy gradient theorems are special cases, and 2) a proof that (Section \ref{sec-var-res}) EPG reduces the variance of the gradient estimates without requiring deterministic policies and, for the Gaussian case, with no computational overhead over SPG. Second, we define practical policy gradient methods. For the Gaussian case (Section \ref{sec-gaussian}), the EPG solution is not only analytically tractable but also leads to a successful exploration strategy (Section \ref{ss-hessian-exp}) for continuous problems, with an exploration covariance that is proportional to $e^H$, where $H$ is the scaled Hessian of the critic with respect to the actions. In Section \ref{sec-discrete-epg}, we derive a version of EPG for discrete control problems. We present empirical results (Section \ref{sec-gpg-exp}) confirming that this new approach to exploration substantially outperforms DPG with Ornstein-Uhlenbeck exploration in  MuJoCo continuous control tasks. Third, we provide a way of deriving tractable EPG methods for the general case of policies coming from a certain exponential family (Section \ref{sec-expfamily}) and for critics that can be reparameterized as polynomials, thus yielding analytic EPG solutions that are tractable for a broad class of problems and essentially making EPG a universal method.  Finally, in Section \ref{sec-rel}, we relate EPG to other RL approaches, including entropy-based methods and value gradient methods.

This paper is a revised and extended version of our AAAI conference submission \citep{epg-aaai}. On the theoretical side, we have added an analysis of softmax (Gibbs) policies. For continuous actions, we have analyzed a more general policy class (exponential families) and a critic class general enough to approximate any function (polynomials). We provide an analysis of the off-policy version of EPG. We also compare EPG with methods that adapt to the geometry of the policy space, entropy-based methods and value gradients. In addition, we have greatly expanded the experimental section, which now includes a comparison to reparameterized policy gradients and several ablations, in addition to results about EPG with numerical quadrature.

\section{Background}
\label{sec:bg}
A \emph{Markov decision process} \citep{puterman2014markov} is a tuple $(S,A,R_d,p,p_0,\gamma)$ where $S$ is a set of states, $A$ is a set of actions (in practice either $A=\mathbb{R}^d$ or $A$ is finite), $R_d(a, s)$ is a reward distribution (we introduce the notation $R(a,s) = \excs{R_d}{r}{a,s}$ for the mean state-action reward), $p(s' \mid a,s)$ is a transition kernel, $p_0$ is an initial state distribution, and $\gamma \in [0,1)$ is a discount factor. A policy $\pi(a \mid s)$ is a distribution over actions given a state. We denote trajectories as $\tau^\pi = (s_0,a_0,r_0,s_1,a_1,r_1,\dots)$, where $s_0 \sim p_0$, $a_t \sim \pi(\cdot \mid s_{t})$, $s_{t+1} \sim p(\cdot\mid s_t,a_t)$ and $r_t$ is the sampled reward. A policy $\pi$ induces a Markov process with transition kernel $p_\pi(s' \mid s) = \int_a d \pi (a \mid s) p(s' \mid a, s)$ where we use the symbol $d \pi (a \mid s)$ to denote Lebesgue integration against the measure $\pi (a \mid s)$ where $s$ is fixed. We assume the induced Markov process is ergodic with a single invariant measure defined for the whole state space.  The value function is $V^\pi(s) = \exs{\tau: s_0 = s}{\sum_{i=0}^\infty \gamma_i r_i}$ where actions are sampled from $\pi$. The $Q$-function is $Q^\pi(a, s) = \excs{R_d}{r}{a,s} + \gamma \excs{p_(s' \mid a, s)}{V^\pi(s')}{s}$ and the advantage function is $A^\pi(a \mid s) = Q^\pi(a, s) - V^\pi(s)$. An optimal policy maximizes the total return $J = \int_s dp_0(s) V^\pi(s)$. Since we consider only on-policy learning with just one current policy, we drop the $\pi$ super/subscript where it is redundant.

If $\pi$ is parametrized by $\theta$, then \emph{stochastic policy gradients} (SPG) \citep{sutton2000policy, peters2006policy, peters2008reinforcement} perform gradient ascent on $\nabla_\theta  J$, the gradient of $J$ with respect to $\theta$. For stochastic policies, we have
\begin{gather*}
\label{spg-update} \textstyle
\nabla_\theta  J = \int_s d \rho(s) \int_a d \pi(a \mid s) \nabla_\theta  \log \pi(a \mid s) (\Qv(a,s) + b(s)),
\end{gather*}
where $\rho$ is the discounted-ergodic occupancy measure, defined in the Appendix, and $b(s)$ is a baseline,\footnote{See also the work on action-dependent baselines by \citet{thomasPolicyGradientMethods2017} and \citet{wuVarianceReductionPolicy2018a}.} which can be any function that depends on the state but not the action, since $\int_a d \pi(a \mid s) \nabla_\theta  \log \pi(a \mid s) b(s) = 0$. Typically, because of ergodicity and Lemma \ref{lem-d-ergoidic} (see Appendix), we can approximate \eqref{spg-update} from samples from a trajectory $\tau$ of length $T$, giving
\begin{gather*}
\label{spg-samples}\textstyle
\hat{\nabla}_\theta   J = \sum_{t=0}^{T} \gamma^t \nabla_\theta  \log \pi(a_t \mid s_t) (\hat{Q}(a_t, s_t) + b(s_t)).
\end{gather*}
Here, $\hat{Q}(a_t, s_t)$ is a critic, discussed below. If the policy is deterministic (we denote it $\pi(s)$), we can use \emph{deterministic policy gradients} \citep{silver2014deterministic} instead, so that the gradient becomes
\begin{gather*}
    \label{dpg-update}\textstyle
    \nabla_\theta  J = \int_s d \rho(s) \nabla_\theta  \pi(s) \atpis{\nabla_a Q (a, s)}.
\end{gather*}
This update is then approximated using samples, giving the estimate
\begin{gather*}
    \label{dpg-samples}\textstyle
    \hat{\nabla}_\theta   J = \sum_{t=0}^{T} \gamma^t \nabla_\theta  \pi(s) \atpist{\nabla_a \hat{Q} (a, s_t)}.
\end{gather*}
Since the policy is deterministic, the problem of exploration is addressed using an external source of noise, typically modeled using a zero-mean Ornstein-Uhlenbeck (OU) process \citep{uhlenbeck1930theory, lillicrap2015continuous} parameterized by $\psi$ and $\sigma$ and generated as
\begin{gather*}
    \label{ou-noise}
    n_t \leftarrow - n_{t-1} \psi + \mathcal{N}(0,\sigma I) \quad \text{and} \quad a_t \sim \pi(s_t) + n_t.
\end{gather*}
In \eqref{spg-samples} and \eqref{dpg-samples}, $\hat{Q}$ is a \emph{critic} that approximates $Q$ and can be learned by \emph{sarsa} \citep{rummery1994line, sutton1996generalization}, using the update
\begin{align*}
\hat{Q}(a_t, s_t) \leftarrow &\hat{Q}(a_t, s_t) \; + \textstyle \; \alpha \big[ r_{t+1} + \gamma \hat{Q}(s_{t+1}, a_{t+1}) - \hat{Q}(a_t, s_t) \big].
\end{align*}

Alternatively, we can use \emph{expected sarsa} \citep{sutton1998reinforcement,vanseijen:adprl09}, which marginalizes out $a_{t+1}$, the distribution over which is specified by the known policy, to reduce the variance, using the update
\begin{align*}
\textstyle
\hat{Q}(&a_t, s_t) \leftarrow \hat{Q}(a_t, s_t) \; + \textstyle \; \alpha \big[ r_{t+1} + \gamma \int_a d \pi(a \mid s) \hat{Q}(a, s_{t+1}) - \hat{Q}(a_t, s_t) \big].
\end{align*}
Instead, we could also use advantage learning \citep{baird1995residual} or LSTDQ \citep{lagoudakis2003least}. This update also has the advantage that actions $a_t$ sampled from a policy different than $\pi$ can be used during learning, giving an off-policy algorithm. 

The theory behind policy gradient methods says that the actor follows the gradient of the total discounted return $J$ \citep{sutton2000policy} and therefore finds its local maximum if the critic's function approximator is \emph{compatible}.\footnote{This holds under the idealized setting where the critic is run to convergence and minimizes a weighted $L_2$-loss. In practice, the critic is not run until convergence and, even if it were, it is not guaranteed to minimize the weighted $L_2$ loss. Compatible critics are rarely used in practice.}

Instead of learning $\hat{Q}$, we can set $b(s) = - \Vv(s)$ so that $\Qv(a,s) + b(s) = A(a,s)$ and then use the TD error $\delta(r,s',s) = r + \gamma \Vv(s') - \Vv(s)$ as an estimate of $A(a,s)$ \citep{bhatnagar2008incremental}. The policy gradient estimate then becomes
\begin{gather}
\label{spg-samples-td} \textstyle
\hat{\nabla}_\theta   J = \sum_{t=0}^{T} \gamma^t \nabla_\theta  \log \pi(a_t \mid s_t) (r + \gamma \hat{V}(s_{t+1}) - \hat{V}(s_t)),
\end{gather}
where $\hat{V}(s)$ is an approximate value function learned using any policy evaluation algorithm. Equation \eqref{spg-samples-td} works because $\exc{\delta(r,s',s)}{a,s} = A(a,s)$, i.e., the TD error is an unbiased estimate of the advantage function. The benefit of this approach is that it is sometimes easier to approximate $V$ than $Q$ and that the return in the TD error is unprojected, i.e., it is not distorted by function approximation. However, for stochastic MDPs, the TD error is noisy, introducing variance in the gradient.

To cope with this variance, we can use an optimizer that adaptively reduces the learning rate when the variance of the gradient is high, using, e.g., \emph{Adam} \citep{kingma2014adam} or RMSprop \citep{tielemanH12}. However, this results in slow learning when the variance is high. We discuss other variance reduction techniques in Section \ref{sec-rel}.

\section{Expected Policy Gradients}
\label{sec-epg-main}
In this section, we propose \emph{expected policy gradients} (EPG). First, we introduce $I^Q_\pi(s)$ to denote the inner integral in \eqref{spg-update}, rewriting the policy gradient as
\begin{align}
\label{epg-main-eq}
\nabla_\theta  J & = \int_s d\rho(s) \underbrace{\int_a d \pi(a \mid s) \nabla_\theta  \log \pi(a \mid s) (\Qv(a,s) + b(s))}_{I^Q_\pi(s)} \nonumber \\
& = \int_s d\rho(s) \int_a d \pi(a \mid s) \nabla_\theta  \log \pi(a \mid s) \Qv(a,s) \nonumber \\
& = \int_s d \rho(s) I^Q_\pi(s).
\end{align}
This suggests a new way to write the approximate gradient, giving\footnote{The idea behind EPG was also independently and concurrently developed as Mean Actor Critic \citep{2017arXiv170900503A}, though only for discrete actions and without a supporting theoretical analysis.} 
\begin{gather*}
\label{epg-samples-i}
\hat{\nabla}_\theta  J = \sum_{t=0}^{T} \underbrace{\gamma^t I^{\hat{Q}}_\pi(s_t)}_{g_t}, \quad \text{where} \quad I^{\hat{Q}}_\pi(s) = \int_a d \pi(a \mid s) \nabla_\theta  \log \pi(a \mid s) \hat{Q}(a, s).
\end{gather*}
Here, we used Lemma \ref{lem-d-ergoidic} in the Appendix to sample from $\rho(s)$. This approach makes explicit that one step in estimating the gradient is to evaluate an integral included in the term $I^{\hat{Q}}_\pi(s)$. The main insight behind EPG is that, given a state, $I^{\hat{Q}}_\pi(s)$ is expressed fully in terms of known quantities. Hence we can manipulate it analytically to obtain a formula or we can just compute the integral using numerical quadrature if an analytical solution is impossible (in Section \ref{sec-universal} we show that this is rare). For a discrete action space, $I^{\hat{Q}}_\pi(s_t)$ becomes a sum over actions (see Section \ref{sec-discrete-epg} for more details).

SPG as given in \eqref{spg-samples} performs this quadrature using a simple one-sample Monte Carlo method, using the action $a_t \sim \pi(\cdot \mid s_t)$. It uses the update
\[
I^{\hat{Q}}_\pi(s) = \int_a d \pi(a \mid s) \nabla_\theta  \log \pi(a \mid s) \hat{Q}(a, s) \approx
\nabla_\theta  \log \pi(a_t \mid s_t) (\hat{Q}(a_t, s_t) + b(s_t)).
\]
Moreover, SPG assumes that the action $a_t$ used in the above estimation is the same action that is executed in the environment. However, relying on such a method is unnecessary. In fact, the actions used to interact with the environment need not be used at all in the evaluation of $\hat{I}_\pi^Q(s)$ since $a$ is a bound variable in the definition of $I_\pi^Q(s)$. The motivation is thus similar to that of expected sarsa but applied to the actor's gradient estimate instead of the critic's update rule. EPG, shown in Algorithm \ref{alg-epg}, uses \eqref{epg-samples-i} to form a policy gradient algorithm that repeatedly estimates $\hat{I}_\pi^Q(s)$ with an integration subroutine.

\begin{algorithm}[ht]
\begin{algorithmic}[1]
 \State $s \gets s_0$, $t \gets 0$
 \State initialize optimizer, initialize policy $\pi$ parameterized by $\theta$
\While{not converged}
 \label{ln-a-i} \State $g_t \gets \gamma^t$ \textsc{do-integral}($\hat{Q}, s, \pi_\theta $) \Comment{$g_t$ is the estimated policy gradient as per \eqref{epg-samples-i}}
 \State $\theta \gets  \theta \; + \;  $optimizer.\textsc{update}$(g_t) $
 \State $a \sim \pi(\cdot \mid s)$
 \State $s',r \gets $ simulator.\textsc{perform-action}(a)
 \State $\hat{Q}$.\textsc{update}($s,a,r,s'$)
 \State $t \gets t + 1$
 \State $s \gets s'$
\EndWhile

\end{algorithmic}
\caption{Expected policy gradients} \label{alg-epg}
\end{algorithm}

One of the motivations of DPG was precisely that the simple one-sample Monte-Carlo quadrature implicitly used by SPG often yields high variance gradient estimates, even with a good baseline. To see why, consider the setting in Figure \ref{fig-mc-p}, where we use the parametrization $\theta = \mu$. On the left, a simple Monte Carlo method evaluates the integral by sampling one or more times from $\pi(a \mid s)$ (blue) and evaluating $\nabla_\mu \log \pi(a \mid s) \Qv(a,s)$ (red) as a function of $a$. A baseline can decrease the variance by adding a multiple of $\nabla_\mu \log \pi(a \mid s)$ to the red curve. However, whatever the baseline, substantial variance persists, even with a simple linear $Q$-function, as shown in Figure \ref{fig-mc-p} (right).  DPG addressed this problem for deterministic policies but EPG extends it to stochastic ones. We show in Section \ref{sec-expfamily} that an analytical EPG solution, and thus the corresponding reduction in the variance, is possible for a wide array of critics. We also discuss  the rare case where numerical quadrature is necessary in Section \ref{ss-numq}.

\begin{figure}
    \centering
        \begin{subfigure}{0.42\textwidth}
            \includegraphics[width=\textwidth]{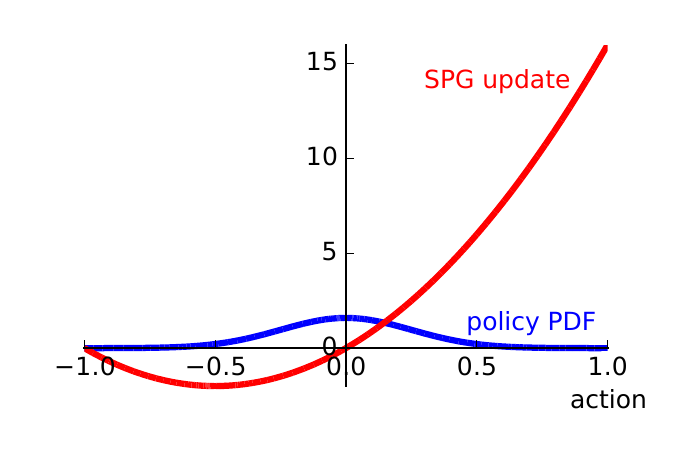}
        \end{subfigure}
        ~
        \begin{subfigure}{0.42\textwidth}
            \includegraphics[width=\textwidth]{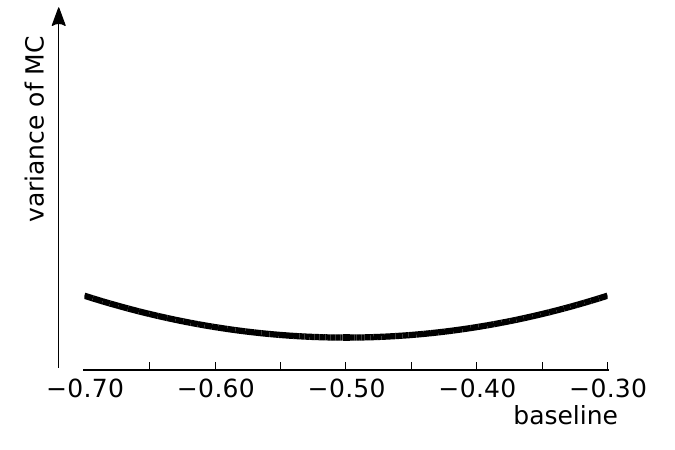}
        \end{subfigure}
        \caption{At left, $\pi(a \mid s)$ for a Gaussian policy with mean $\mu = \theta = 0$ at a given state and constant $\sigma^2$ (blue) and $\nabla_\theta \log \pi(a \mid s) \Qv(a,s)$ for $Q = \frac12 + \frac12 a$ (red). At right, the variance of a simple single-sample Monte Carlo estimator as a function of the baseline.  In a simple multi-sample Monte Carlo method, the variance would go down as the number of samples.}
        \label{fig-mc-p}
    \end{figure}

\subsection{General Policy Gradient Theorem}
\label{sec-gpgt}
We begin by stating our most general result, showing that EPG can be seen as a generalization of both SPG and DPG. To do this, we first state a new general policy gradient theorem. 
\begin{theorem}[General Policy Gradient Theorem]\ \\
If the value function $V(s)$ is bounded, continuously differentiable in the policy parameters $\theta$ and measurable in $s$ then
\begin{gather*}
\label{eq-gpgt-main}
\nabla_\theta  J = \int_s d\rho(s) \underbrace{ \left[ \nabla_\theta  \Vv(s) - \int_a d \pi(a \vert s) \nabla_\theta  \Qv(a,s) \right]}_{I_G(s)} = \int_s d\rho(s) I_G(s).
\end{gather*}
\label{th-gpgt}
\end{theorem}
\begin{proof}
We start with the expression on the left and begin expanding. We have
\begin{align*}
  \textstyle \int_s \textstyle d\rho(s) \int_a & d\pi(a \vert s)   \nabla_\theta  \Qv(a,s) \\
& \textstyle = \int_s d\rho(s) \int_a d\pi(a \vert s) \nabla_\theta  (R(a,s) + \gamma \int_{s'} d p(s' \mid a, s) \Vv(s'))  \\
&\textstyle = \int_s d \rho(s) \int_a d \pi(a \vert s)  (\underbrace{\textstyle \nabla_\theta  R(a,s)}_{\textstyle0} + \gamma \int_{s'} d p(s' \mid a, s) \nabla_\theta  \Vv(s')) \\
&\textstyle = \gamma \int_s d \rho(s) \int_{s'} d p_\pi(s' \mid s) \nabla_\theta  \Vv(s')) \\
&\textstyle = \int_s d \rho(s) \nabla_\theta  \Vv(s) -  \underbrace{\textstyle \int_s d p_0(s) \nabla_\theta  \Vv(s)} _{\nabla_\theta  J} \\
&\textstyle = \int_s d \rho(s) \nabla_\theta  \Vv(s) -  \nabla_\theta  J.
\end{align*}
In the above, we used the notation $p_\pi(s' \mid s) = \int_a d \pi (a \mid s) p(s' \mid a, s)$. The first equality follows by expanding the definition of $Q$ and the penultimate one follows from Lemma \ref{gep-lemma} in the Appendix. Then the theorem follows by rearranging terms.
\end{proof}
The crucial benefit of Theorem \ref{th-gpgt} is that it works for all policies, both stochastic and deterministic, unifying previously separate derivations for the two settings. To show this, in the following two corollaries, we use Theorem \ref{th-gpgt} to recover the \emph{stochastic policy gradient theorem} \citep{sutton2000policy}  and the \emph{deterministic policy gradient theorem} \citep{silver2014deterministic}, in each case by introducing additional assumptions to obtain a formula for $I_G(s)$ expressible in terms of known quantities.
\begin{corollary}[Stochastic Policy Gradient Theorem]
\label{cor-spg}
If $\pi(a \mid s)$, considered as a probability density function, is continuous in $s$ and continuously differentiable in $\theta$ and if $R(a,s)$ is continuous in $s$ and bounded then
\begin{align*} \textstyle
\nabla_\theta  J &= \textstyle \int_s d \rho(s) I_G(s) = \textstyle \int_s d \rho(s) \int_a d \pi(a \mid s) \nabla_\theta  \log \pi(a \mid s) \Qv(a,s).
\end{align*}
\end{corollary}
\begin{proof}
We expand $\nabla_\theta  V $, obtaining
\begin{align}
\nabla_\theta  V &= \nabla_\theta  \textstyle \int_a d \pi(a \vert s) \Qv(a,s) = \textstyle \int_a da (\nabla_\theta  \pi(a \vert s)) \Qv(a,s) +  \int_a d \pi(a \vert s) (\nabla_\theta  \Qv(a,s)).
\label{nabla-exp}
\end{align}
We obtain $I_G(s) = \int_a d \pi(a \mid s) \nabla_\theta  \log \pi(a \mid s) \Qv(a,s) = I^Q_\pi(s)$ by plugging \eqref{nabla-exp} into the definition of $I_G(s)$ as given in \eqref{eq-gpgt-main}. We obtain $\nabla_\theta  J$ by invoking  Theorem \ref{th-gpgt}, plugging in the above expression for $I_G(s)$ and observing that the regularity conditions on $V$ follow from the regularity conditions on $\pi(\cdot \mid s)$ and $R(a,s)$.
\end{proof}
 
\begin{corollary}[Deterministic Policy Gradient Theorem]
\label{cor-dpg}
If $\pi(\cdot \mid s)$ is a Dirac-delta measure, and $\pi(s)$ is continuously differentiable in $\theta$ and continuous in $s$, if $R(a,s)$ is continuous in $s$, differentiable in $a$ and bounded and the transition kernel $p(s' \mid a, s)$ is continuous in $s$ and differentiable in $a$, then
\[
\textstyle \nabla_\theta J = \int_s d \rho(s) I_G(s) =  \int_s d \rho(s) \nabla_\theta \pi(s) \atpis{\nabla_a \Qv(a,s)}.
\]
Here, we overload the notation of $\pi$ slightly. We denote by $\pi(s)$ the action taken at state $s$, i.e. $\pi(s) = \int_a a d\pi(a \mid s) $, where $\pi(\cdot \mid s)$ is the corresponding Dirac-delta measure.
\end{corollary}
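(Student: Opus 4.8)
The plan is to compute $I_G(s)$ explicitly under the Dirac-delta assumption and show that the \emph{explicit} $\theta$-dependence of $Q$ cancels out, leaving only the contribution that propagates through the action map. First I would use the defining identity $V(s) = \int_a d\pi(a\mid s)\, Q(a,s)$, which, because $\pi(\cdot\mid s)$ is a Dirac-delta measure concentrated at the deterministic action $\pi(s) = \int_a a\, d\pi(a\mid s)$, collapses to evaluation at that action, giving $V(s) = \atpis{Q(a,s)}$.

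Next I would differentiate this with respect to $\theta$. The subtlety, and the one point that must be handled carefully, is that $Q = Q^\pi$ depends on $\theta$ in two distinct ways: \emph{explicitly}, as the value function of the $\theta$-parameterised policy, and \emph{implicitly}, through the action $\pi(s)$ at which it is evaluated. Writing $\nabla_a Q$ for the gradient in the action argument and reserving the bare $\nabla$ for the total $\theta$-gradient, the chain rule yields
\[
\nabla V(s) = \nabla \pi(s) \atpis{\nabla_a Q(a,s)} + \atpis{\nabla Q(a,s)},
\]
where $\nabla\pi(s)$ is the Jacobian of the deterministic action map and the final term is the explicit $\theta$-gradient of $Q$ with the action held fixed.

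Then I would evaluate the integral subtracted in $I_G(s)$. Since $\pi(\cdot\mid s)$ is a Dirac delta, integration against it is again just substitution at $a = \pi(s)$, so $\int_a d\pi(a\mid s)\,\nabla Q(a,s) = \atpis{\nabla Q(a,s)}$, which is exactly the explicit term appearing in $\nabla V(s)$. Substituting both expressions into $I_G(s) = \nabla V(s) - \int_a d\pi(a\mid s)\,\nabla Q(a,s)$ cancels the explicit contribution and leaves $I_G(s) = \nabla\pi(s)\atpis{\nabla_a Q(a,s)}$. Plugging this into Theorem~\ref{th-gpgt} then gives the claimed formula directly.

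I expect the only genuine obstacle to be conceptual rather than computational: cleanly separating the two channels through which $\theta$ enters $Q^\pi$ so that the chain-rule expansion is correct, and justifying that both evaluation and differentiation against the Dirac-delta measure reduce to naive substitution at $a=\pi(s)$ (this is where the assumed differentiability of $Q(\cdot,s)$ is used). Once the bookkeeping of $\nabla$ versus $\nabla_a$ is fixed, the remaining manipulation is a one-line cancellation.
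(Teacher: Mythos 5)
Your proposal is correct and follows essentially the same route as the paper's proof: expand $\nabla V(s)$ via the multivariate chain rule into the explicit $\theta$-gradient of $Q$ plus the term propagating through $\pi(s)$, note that integration against the Dirac-delta collapses $\int_a d\pi(a\mid s)\,\nabla Q(a,s)$ to $\atpis{\nabla Q(a,s)}$, cancel, and invoke Theorem~\ref{th-gpgt}. Your extra care in separating the two channels of $\theta$-dependence is exactly the point the paper also emphasises.
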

\begin{proof}
We begin by expanding the term for $\nabla_\theta \Vv(s)$, which will be useful later on. We have
\begin{gather*}
\label{eq-dpg-chain}
\nabla_\theta \Vv(s) = \nabla_\theta Q(\pi(s), s) = \atpis{\nabla_\theta \Qv(a,s)} + \nabla_\theta \pi(s) \atpis{\nabla_a \Qv(a,s)}. 
\end{gather*}
The above results from applying the multivariate chain rule---observe that both $\pi(s)$ and $\Qv(a,s)$ depend on the policy parameters $\theta$; hence, the dependency appears twice in $Q(\pi(s), s)$.

We proceed to obtain an expression for $I_G(s)$. We have
\begin{align*}
I_G(s) &= \textstyle \nabla_\theta  \Vv(s) - \int_a d \pi(a \vert s) \nabla_\theta  \Qv(a,s) \\
&= \textstyle \nabla_\theta  \Vv(s) - \atpis{\nabla_\theta  \Qv(a,s)} \\
&= \textstyle \nabla_\theta  \pi(s) \atpis{\nabla_a \Qv(a,s)}.
\end{align*}
Here, the second equality follows by observing that the policy is a Dirac-delta and the third one follows from using \eqref{eq-dpg-chain}. We can then obtain $\nabla_\theta  J$ by invoking Theorem \ref{th-gpgt} and plugging in the above expression for $I_G(s)$. The regularity conditions on $V$ follow from the regularity conditions on $\pi(s)$, $R(a,s)$ and $p(s' \mid a, s) $. 
\end{proof}

These corollaries show that the choice between deterministic and stochastic policy gradients is fundamentally a choice of quadrature method. Hence, the empirical success of DPG relative to SPG \citep{silver2014deterministic, lillicrap2015continuous} can be understood in a new light. In particular, it can be attributed, not to a fundamental limitation of stochastic policies (indeed, stochastic policies are sometimes preferred), but instead to superior quadrature. DPG integrates over Dirac-delta measures, which is known to be easy, while SPG typically relies on simple Monte Carlo integration. Thanks to EPG, a deterministic approach is no longer required to obtain a method with low variance.

\subsection{Variance Analysis}
\label{sec-var-res}
In stochastic policy gradients, there are two reasons why the actor update can be inaccurate \citep{guInterpolatedPolicyGradient2017}. First, using an approximation in place of the original quadrature means that the integral estimate is stochastic and has nonzero variance. In, particular, the value of the integral  
\[ 
    \int_a d \pi(a \mid s) \nabla_\theta  \log \pi(a \mid s) (\hat{Q}(a,s) + b(s))
\]
is different from its one-sample Monte-Carlo approximation
\[
    \nabla_\theta  \log \pi(a_t \mid s_t) (\hat{Q}(a_t, s_t) + b(s_t)), \quad \text{where} \quad a_t \sim \pi(\cdot \mid s_t).
\]
Second, the learned critic value $\hat{Q}$ itself can be inaccurate -- if we use a neural net to learn $\hat{Q}$, it will have some approximation error, while if we use an advantage estimate as in  \eqref{spg-samples-td}, there will be additional variance which comes from this advantage estimate. 

We now prove that EPG completely eliminates the first kind of variance. In particular, for any policy, the EPG estimator of \eqref{epg-samples-i} has lower variance than the SPG estimator of \eqref{spg-samples}.

\begin{lemma}
If for all $s \in S$, the random variable $\nabla_\theta  \log \pi(a \mid s) \hat{Q}(a, s)$ where $a \sim \pi(\cdot \vert s)$ has nonzero variance, then
\begin{gather*} \textstyle
\textstyle \vars{\tau}{\textstyle \sum_{t=0}^{\infty} \gamma^t \nabla_\theta  \log \pi(a_t \mid s_t) (\hat{Q}(a_t, s_t) + b(s_t))} \textstyle >
\textstyle \vars{\tau}{  \sum_{t=0}^{\infty} \gamma^t I^{\hat{Q}}_\pi(s_t)  } .
\end{gather*}
\label{lem-var}
\end{lemma}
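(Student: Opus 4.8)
The plan is to recognise the EPG estimator as a Rao--Blackwellisation of the SPG estimator and then invoke the law of total variance. The starting observation is that, term by term, the EPG integrand is exactly the conditional expectation of the SPG integrand once the action is averaged out. Writing $X_t = \gamma^t \nabla \log \pi(a_t \mid s_t)(\hat{Q}(a_t,s_t)+b(s_t))$ for the $t$-th SPG summand and $Y_t = \gamma^t I^{\hat{Q}}_\pi(s_t)$ for the $t$-th EPG summand, I would first establish $\mathbb{E}[X_t \mid s_t] = Y_t$. This uses two facts: the definition $I^{\hat{Q}}_\pi(s)=\int_a d\pi(a\mid s)\nabla \log \pi(a\mid s)\hat{Q}(a,s)$ from \eqref{epg-main-eq}, and the baseline identity $\int_a d\pi(a\mid s)\nabla \log \pi(a\mid s)\,b(s)=0$ already noted after \eqref{spg-update}, so that the baseline contributes nothing in expectation over the action.

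With this identity in hand, the natural route is to condition on the realised states. Let $\mathcal{F}_t = \sigma(s_0,a_0,\dots,s_t)$ be the filtration revealing everything up to and including $s_t$ but not $a_t$; since $a_t\sim\pi(\cdot\mid s_t)$ given the past, the step above upgrades to $\mathbb{E}[X_t\mid\mathcal{F}_t]=Y_t$. I would then write the SPG estimator as the EPG estimator plus accumulated per-step action noise, $\sum_t X_t = \sum_t Y_t + \sum_t D_t$ with $D_t = X_t - Y_t$ and $\mathbb{E}[D_t\mid\mathcal{F}_t]=0$. The increments $D_t$ are martingale differences for this filtration, so a routine tower-property argument shows they are mutually uncorrelated, whence $\vars{\tau}{\sum_t D_t}=\sum_t\vars{\tau}{D_t}$, which is strictly positive precisely under the hypothesis that $\nabla \log \pi(a\mid s)\hat{Q}(a,s)$ has nonzero variance at some visited $s$. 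The target inequality then reduces to showing that adding the zero-mean noise $\sum_t D_t$ cannot decrease the variance, i.e. $\vars{\tau}{\sum_t X_t}\ge\vars{\tau}{\sum_t Y_t}$.

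The main obstacle is the cross term $\mathrm{Cov}(\sum_t Y_t,\sum_t D_t)$. The covariances $\mathrm{Cov}(Y_s,D_t)$ with $s\le t$ vanish immediately, since $Y_s$ is $\mathcal{F}_t$-measurable and $\mathbb{E}[D_t\mid\mathcal{F}_t]=0$; the real difficulty lies in the terms with $s>t$, because the action $a_t$ influences all subsequent states through the kernel $p(s'\mid s,a)$, and those states enter the later EPG summands $Y_s$. This coupling is exactly what prevents the naive claim $\mathbb{E}[\sum_t X_t\mid s_0,s_1,\dots]=\sum_t Y_t$ from holding verbatim, since conditioning on the whole state sequence reweights each action by $p(s_{t+1}\mid s_t,a_t)$. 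I would resolve this by passing to the discounted-ergodic occupancy form $\nabla J=\int_s d\rho(s)\,I^{\hat{Q}}_\pi(s)$ of \eqref{epg-main-eq} via Lemma \ref{lem-d-ergoidic}, in which the states carrying the estimate are drawn from $\rho$ and the action is a fresh draw from $\pi(\cdot\mid s)$; there the per-state law of total variance $\vars{s,a}{\,\cdot\,}=\mathbb{E}_s[\vars{a}{\,\cdot\mid s}]+\vars{s}{\mathbb{E}_a[\,\cdot\mid s]}$ applies directly, with the residual $\mathbb{E}_s[\vars{a}{\nabla \log \pi(a\mid s)\hat{Q}(a,s)\mid s}]$ strictly positive by assumption and the second term equal to the EPG variance. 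Everything after the key conditional-expectation identity is routine; the genuine work is in controlling the interaction between the averaged-out action and the future trajectory.
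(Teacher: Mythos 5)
Your setup is sound and you have correctly located the hard part of this lemma, but the step you propose to get past it does not work, so the argument has a genuine gap. The first two stages are fine: $\exc{X_t}{\mathcal{F}_t}=Y_t$ (with the baseline annihilated by $\int_a d\pi(a\mid s)\nabla\log\pi(a\mid s)b(s)=0$), the differences $D_t=X_t-Y_t$ are martingale increments and hence mutually uncorrelated, and $\mathrm{Cov}(Y_s,D_t)=0$ for $s\le t$. You are also right that everything hinges on $\mathrm{Cov}(Y_s,D_t)$ for $s>t$: writing $h(s_t,a_t)=\exc{Y_s}{s_t,a_t}$, one has $\ex{Y_sD_t}=\ex{\mathrm{Cov}(X_t,h(s_t,a)\mid s_t)}$, which is generally nonzero and of either sign because $a_t$ feeds into $s_{t+1}$ through $p(\cdot\mid s_t,a_t)$. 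The gap is that your proposed resolution does not touch these terms. Lemma \ref{lem-d-ergoidic} converts the \emph{expectation} of a discounted sum over a trajectory into an integral against $\rho$; it says nothing about the variance of that sum. The trajectory variance $\vars{\tau}{\sum_t\gamma^tf(s_t,a_t)}$ is not $\int_sd\rho(s)\,\mathbb{V}_a[f(s,a)]$ plus a variance of conditional means under $\rho$ --- it contains all the cross-time covariances, which is exactly where the problematic terms live. Passing to the occupancy form therefore substitutes a different quantity for the one in the lemma statement and leaves the obstacle intact.

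The paper closes this differently. Since the means agree, it compares second moments directly and applies its Second Moment Bellman Equation (Lemma \ref{smbe-lemma}): the second moment of each discounted sum is the value function of an auxiliary MRP with discount $\gamma^2$ and per-state reward $u(s)=\vars{X(x\mid s)}{x}+\left(\exs{X(x\mid s)}{x}\right)^2+2\gamma\exs{X(x\mid s)}{x}\exs{p(s'\mid s)}{V(s')}$. All cross-time contributions are absorbed into the last summand, which depends on the per-step variable only through its conditional mean and therefore cancels between the two estimators ($\exs{X_1}{x}=\exs{X_2}{x}$); the sole surviving difference is the conditional variance term, and Observation \ref{dom-o} (monotonicity of value functions in the reward) finishes. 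Note this works because the MRP abstraction draws the per-step reward and the next state conditionally independently given the state, so the coupling you are (rightly) worried about is absorbed at the modelling step rather than argued away term by term. To repair your Rao--Blackwell route you would either need to prove $\sum_{s>t}\mathrm{Cov}(Y_s,D_t)\ge0$ directly or derive essentially the same second-moment recursion, at which point you have reproduced the paper's proof.
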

\begin{proof}
Both random variables have the same mean so we need only show that
\begin{gather*}
\textstyle \exs{\tau}{\left (\sum_{t=0}^{\infty} \gamma^t \nabla_\theta  \log \pi(a_t \mid s_t) (\hat{Q}(a_t, s_t) + b(s_t)) \right)^2} > \textstyle \exs{\tau}{  \left ( \sum_{t=0}^{\infty} \gamma^t I^{\hat{Q}}_\pi(s_t) \right)^2  }.
\end{gather*}
We start by applying Lemma \ref{smbe-lemma} to the left-hand side and setting
\[
X = X_1(s_t) = \nabla_\theta  \log \pi(a_t \mid s_t) (\hat{Q}(a_t, s_t) + b(s_t))
\] where
$a_t \sim \pi(a_t \vert s_t)$. This shows that
\[\textstyle
\exs{\tau}{\textstyle \left (\sum_{t=0}^{\infty} \gamma^t \nabla_\theta  \log \pi(a_t \mid s_t) (\hat{Q}(a_t, s_t) + b(s_t)) \right)^2}
\]
is the total return of the Markov reward process (MRP)\footnote{See Definition \ref{def-MRP} in the Appendix. } $(p, p_0, u_1, \gamma^2)$, where
\begin{gather*}
u_1(s) =
\vars{X_1( x \mid s)}{x} +
\left( \exs{X_1( x \mid s)}{x}\right)^2 +
2 \gamma \exs{X_1( x \mid s)}{x}  \exs{p( s' \mid s)} {\Vv(s)')}.
\end{gather*}
Likewise, applying Lemma \ref{smbe-lemma} again to the right-hand side, instantiating $X$ as a deterministic random variable $X_2(s_t) = I^{\hat{Q}}_\pi(s_t) $, we have that $\exs{\tau}{  \sum_{t=0}^{\infty} \left ( \gamma^t I^{\hat{Q}}_\pi(s_t) \right)^2  }$ is the total return of the MRP $(p, p_0, u_2, \gamma^2)$, where
\[
u_2(s) = \left(\exs{X_2( x \mid s)}{x}\right)^2 + 2 \gamma \exs{X_2( x \mid s)}{x} \exs{p( s' \mid s)}{\Vv(s)')}.
\]
Note that $\exs{X_1( x \mid s)}{x} = \exs{X_2( x \mid s)}{x}$ and therefore $u_1(s) \geq u_2(s)$ for all states $s$. Furthermore, by assumption of the lemma, the inequality is strict.  The lemma then follows by applying Observation \ref{dom-o}. 
\end{proof}

For convenience, Lemma \ref{lem-var} also assumes infinite length trajectories.  However, this is not a practical limitation since all policy gradient methods implicitly assume trajectories are long enough to be modeled as infinite.  Furthermore, a finite trajectory variant also holds, though the proof is messier.

Lemma \ref{lem-var}'s assumption is reasonable since the only way a random variable \[
\nabla_\theta~\log~\pi(a~\mid~s)~\hat{Q}(a, s)
\] 
could have zero variance is if it were the same for all actions in the policy's support (except for sets of measure zero), in which case optimizing the policy would be unnecessary. Since we know that both the estimators of  \eqref{spg-samples} and \eqref{epg-samples-i} are unbiased,\footnote{They provide an unbiased estimate of an integral that includes $\hat{Q}$. Of course the gradient can still be biased if $\hat{Q}$ itself is biased.} the estimator with lower variance has lower MSE. Moreover, we observe that Lemma \ref{lem-var} holds for the case where the computation of $I^{\hat{Q}}_\pi$ is exact. Section \ref{sec-expfamily} shows that this is often possible.

\section{Expected policy gradients for Gaussian Policies}
\label{sec-gaussian}
EPG is particularly useful when we make the common assumption of a Gaussian policy: we can then perform the integration analytically under reasonable conditions. We show below (see Corollary \ref{lem-agpg}) that the update to the policy mean computed by EPG is equivalent to the DPG update. Moreover, we derive a simple formula for the covariance (see Lemma \ref{lem-hexplore}). Algorithms \ref{alg-epg-fixpoint} and \ref{alg-p-quad} show the resulting special case of EPG, which we call \emph{Gaussian policy gradients} (GPG).

\begin{algorithm}[ht]
    \begin{algorithmic}[1]
     \State $s \gets s_0$, $t \gets 0$
 \State initialize optimizer
    \While{not converged}
     \State $g_t \gets \gamma^t$ \textsc{do-integral-Gauss}($\hat{Q}, s, \pi_\theta $)
     \State $\theta \gets  \theta \; + \;  $optimiser.\textsc{update}$(g_t) $ \Comment{policy parameters $\theta$ are updated using gradient}
     \State $\Sigma^{1/2}_s \gets $  \textsc{get-covariance}($\hat{Q}, s, \pi_\theta $) \Comment{$\Sigma^{1/2}_s$ computed from scratch}
     \State $a \sim \pi(\cdot \mid s)$ \Comment {$\pi(\cdot \mid s)= N(\mu_s, \Sigma_s)$}
     \State $s',r \gets $ simulator.\textsc{perform-action}(a)
     \State $\hat{Q}$.\textsc{update}($s,a,r,s'$)
     \State $t \gets t + 1$
     \State $s \gets s'$
    \EndWhile

    \end{algorithmic}
    \caption{Gaussian policy gradients} \label{alg-epg-fixpoint}
\end{algorithm}
\begin{algorithm}[ht]
    \begin{algorithmic}[1]
    \Function{do-integral-Gauss}{$\hat{Q}, s, \pi_\theta $}
    \State $I^{\hat{Q}}_{\pi(s), \mu_s} \leftarrow (\nabla_\theta  \mu_s) \nabla_a \hat{Q}(a = \mu_s,s)$ \Comment{Use Lemma \ref{lem-gpg} }
    \State \Return $I^{\hat{Q}}_{\pi(s), \mu_s}$
    \EndFunction
    \\
    \Function{get-covariance}{$\hat{Q}, s, \pi_\theta $}
    \State $H \leftarrow$ \textsc{compute-Hessian}($\hat{Q}(\mu_s,s)$)
    \State \Return $\sigma_0 e^{cH}$ \Comment{Use Lemma \ref{lem-hexplore}}
    \EndFunction
\end{algorithmic}
\caption{Gaussian integrals} \label{alg-p-quad}
\end{algorithm}

Surprisingly, GPG is on-policy but nonetheless fully equivalent to DPG, an off-policy method, with a particular form of exploration.  Hence, GPG, by specifying the policy's covariance, can be seen as a derivation of an exploration strategy for DPG. In this way, GPG addresses an important open question. As we show in Section \ref{sec-gpg-exp}, this leads to improved performance in practice.

The computational cost of GPG is small: while it must store a Hessian matrix $H(a, s) = \nabla^2_a \hat{Q}(a,s)$, its size is only $d \times d$, where $A=\mathbb{R}^d$, which is typically small, e.g., $d=6$ for HalfCheetah, one of the MuJoCo tasks we use for our experiments in Section \ref{sec-gpg-exp}. This Hessian is the same size as the policy's covariance matrix, which any policy gradient must store anyway, and should not be confused with the Hessian with respect to the parameters of the neural network, as used with Newton's or natural gradient methods \citep{peters2008natural, furmston2016approximate}, which can easily have thousands of entries. Hence, GPG obtains EPG's variance reduction essentially for free.

\subsection{Analytical Quadrature for Gaussian Policies}
We now derive a lemma supporting GPG.
\begin{lemma}[Gaussian Policy Gradients]
    \label{lem-gpg}
    For Gaussian policies, i.e. $\pi(\cdot \vert s) \sim \mathcal{N}(\mu_s,\Sigma_s)$ with $\mu_s$ and $\Sigmasqr_s$ parametrized by $\theta$, where $\Sigmasqr_s$ is symmetric, $\Sigmasqr_s \Sigmasqr_s = \Sigma_s$ and the critic is of the form $\hat{Q}(a,s) = a^\top A(s) a + a^\top B(s) + \text{const}$ where $A(s)$ is symmetric for every $s$, then $I^{\hat{Q}}_\pi(s) =  I^{\hat{Q}}_{\pi(s), \mu_s} + I^{\hat{Q}}_{\pi(s), \Sigmasqr_s}$, where the mean and covariance components are given by 
    \begin{align}
    I^{\hat{Q}}_{\pi(s), \mu_s} &= (\nabla_\theta \mu_s) ( 2 A(s) \mu + B(s)) \quad \text{and} \quad \nonumber \\ I^{\hat{Q}}_{\pi(s), \Sigmasqr_s} &= (\nabla_\theta \Sigmasqr_s) 2 A(s) \Sigmasqr_s.
    \label{eq-cov-gpg} 
    \end{align}
    \end{lemma}
    \begin{proof}
    
    First, we observe that the critic ${\hat{Q}}$ defined in the statement of the lemma does not depend on the policy parameters $\theta$ because ${\hat{Q}}$ is an approximation to the $Q$-function maintained by the algorithm, as opposed to the true $Q$-function, which is defined with respect to the policy and does depend on it.

    We can hence move the differentiation outside of the integral, obtaining
    \[
        I^{\hat{Q}}_\pi(s) = \nabla_\theta \int_a \pi(a \vert s) {\hat{Q}}(a,s) da = \nabla_\theta \exs{\pi}{{\hat{Q}}(a,s)}. 
    \]

    We now expand the expectation using the expression 
    $
        \exs{ \pi }{ {\hat{Q}}(a,s) } = \trace (A(s) \Sigma ) + \mu^\top A(s) \mu + B(s)^\top \mu
    $
    for the expectation of a quadratic form. This yields the derivatives
    \begin{align*}
        \nabla_{\Sigmasqr} \exs{\pi}{Q(a,s)} &= \nabla_{\Sigmasqr} (\trace (A(s) \Sigma ) + \mu^\top A(s) \mu + B(s)^\top \mu) = 2 A(s) \Sigmasqr \;\;\;\; \text{and} \\
        \nabla_\mu \exs{\pi}{{\hat{Q}}(a,s)} &= \nabla_\mu (\trace (A(s) \Sigma ) + \mu^\top A(s) \mu + B(s)^\top \mu) = 2 A(s) \mu + B(s).
    \end{align*}

    We now obtain the result by applying the chain rule, giving
    \[
        I^{\hat{Q}}_\pi(s) = I^{\hat{Q}}_{\pi(s), \mu_s} + I^{\hat{Q}}_{\pi(s), \Sigmasqr_s} = 
        (\nabla_\theta \mu) (2 A(s) \mu + B(s)) + (\nabla_\theta \Sigmasqr) (2 A(s) \Sigmasqr).
    \] 

\end{proof}

While Lemma \ref{lem-gpg} requires the critic to be quadratic in the actions, this assumption is not very restrictive since the coefficients $B(s)$ and $A(s)$ can be arbitrary continuous functions of the state, e.g., a neural network.

\subsection{Exploration using the Hessian}
\label{ss-hessian-exp}

Equation \eqref{eq-cov-gpg} suggests that we can include the covariance in the actor network and learn it along with the mean, using the update rule
\begin{gather*}
    \label{eq-cov-upd}
    \Sigma^{1/2}_s \leftarrow \Sigma^{1/2}_s + \alpha H(s) \Sigma^{1/2}_s .
\end{gather*}
In practice, this update has two disadvantages. First, our policy network must include outputs for the covariance. Second, we must constrain our covariance matrix to be positive semi-definite, which is not trivial in a deep learning setup without influencing the gradient norm in a way that slows learning. Instead, we sidestep the issue by not performing incremental covariance updates at all. Instead, we analytically compute the matrix that would have been obtained if we ran the update \eqref{eq-cov-upd} for long enough. 

The following lemma derives the covariance from scratch at each iteration by analytically computing the result of applying \eqref{eq-cov-upd} infinitely many times.
\begin{lemma}[Exploration Limit]
\label{lem-hexplore}
For any fixed total number of iterations $n$ and the same constant learning rate $\alpha = 1/n$ applied at every iteration, the iterative procedure defined by \eqref{eq-cov-upd} applied $n$ times yields $(\Sigma^{1/2}_s)_{n+1} = U (I + \frac1n \Lambda)^n U^\top \sigma_0$. Moreover, the limit as $n \rightarrow \infty$, is $\Sigma^{1/2}_s \propto e^{H(s)}$.

\end{lemma}
\begin{proof}
First, let $n$ be fixed. Consider the sequence $(\Sigma^{1/2}_s)_1 = \sigma_0 I$, $(\Sigma^{1/2}_s)_n = (\Sigma^{1/2}_s)_{n-1} + \frac1n H(s) (\Sigma^{1/2}_s)_{n-1} $. We diagonalize the Hessian as $H(s) = U \Lambda U^\top$ for some orthonormal matrix $U$ and obtain the following expression for the $n$-th element of the sequence
\[
    (\Sigma^{1/2}_s)_{n+1} = \left(I + \frac1n H(s) \right)^n \sigma_0 = U \left(I + \frac1n \Lambda \right)^n U^\top \sigma_0.
\]
We now let $n \rightarrow \infty$. Since we have $\lim_{n \rightarrow \infty} (1 + \frac1n \lambda)^n = e^\lambda$ for each eigenvalue of the Hessian, we obtain the identity
\[
    \lim_{n \rightarrow \infty} U \left (I + \frac1n \Lambda \right)^n U^\top \sigma_0 = \sigma_0 e^{H(s)}.
\]
\end{proof}

The practical implication of  Lemma \ref{lem-hexplore} is that, in a policy gradient method, it is justified\footnote{Lemma \ref{lem-hexplore} relies crucially on the use of special constant step sizes that diminish as we consider longer and longer trajectories. This step sequence serves as a useful intermediate stage between simply taking \emph{one} PG step of \eqref{eq-cov-upd} and using conventional step sizes, which would mean that the covariance would either converge to zero or diverge to infinity.} to use Gaussian exploration with covariance proportional to $e^{cH}$ for some reward scaling constant $c$, as in
\begin{gather*}
\label{eq=exp-sigma}
\Sigma = \sigma_0 e^H  = \sigma_0 U e^\Lambda U^\top \quad \text{where} \quad H(s) = U \Lambda U^\top.
\end{gather*}
Thus, by exploring with (scaled) covariance $e^{cH}$, we obtain a critic-driven alternative to the Ornstein-Uhlenbeck heuristic of \eqref{ou-noise}. Our results below show that it also performs much better in practice.

In order to show that exponentiating the eigenvalues is really necessary, we also implemented a simpler version, which we call 1-step EPG. It approximates the matrix exponential as
\begin{gather*}
    \label{eq=1step-sigma}
    \sigma_0 e^H \approx \sigma_0 U \max(1 + \Lambda, 0) U^\top \quad \text{where} \quad H(s) = U \Lambda U^\top.
\end{gather*}
Here, the $\max$ operator applies to each entry in the diagonal matrix separately. This process corresponds to truncating the Taylor series of an exponential function after the linear term and then constraining the eigenvalues to be positive. It can also be interpreted as performing just one iteration of \eqref{eq-cov-upd}, starting with initial covariance $\sigma_0 I$. 

Lemma \ref{lem-hexplore} has an intuitive interpretation. If $H(s)$ has a large positive eigenvalue $\lambda$, then $\hat{Q}(\cdot,s)$ has a sharp minimum along the corresponding eigenvector, and the corresponding eigenvalue of $\Sigma^{1/2}$ is $e^\lambda$, i.e., also large.
This is easiest to see with a one-dimensional action space, where the Hessian and its only eigenvalue are scalar. The exploration mechanism in the one-dimensional case is illustrated in Figure \ref{fig-explore-1d}. If $\lambda$ is positive, we have a minimum and apply exploration noise. If, on the other hand, $\lambda$ is negative, then $\hat{Q}(\cdot,s)$ has a maximum and so $e^\lambda$ is small, leading to a policy that does not deviate too much from the existing maximum.

\begin{figure}[t]
    \centering
        \begingroup%
  \makeatletter%
  \providecommand\color[2][]{%
    \errmessage{(Inkscape) Color is used for the text in Inkscape, but the package 'color.sty' is not loaded}%
    \renewcommand\color[2][]{}%
  }%
  \providecommand\transparent[1]{%
    \errmessage{(Inkscape) Transparency is used (non-zero) for the text in Inkscape, but the package 'transparent.sty' is not loaded}%
    \renewcommand\transparent[1]{}%
  }%
  \providecommand\rotatebox[2]{#2}%
  \ifx\svgwidth\undefined%
    \setlength{\unitlength}{396.85040104bp}%
    \ifx\svgscale\undefined%
      \relax%
    \else%
      \setlength{\unitlength}{\unitlength * \real{\svgscale}}%
    \fi%
  \else%
    \setlength{\unitlength}{\svgwidth}%
  \fi%
  \global\let\svgwidth\undefined%
  \global\let\svgscale\undefined%
  \makeatother%
  \begin{picture}(1,0.2353654)%
    \put(0,0.035){\includegraphics[width=\unitlength,page=1]{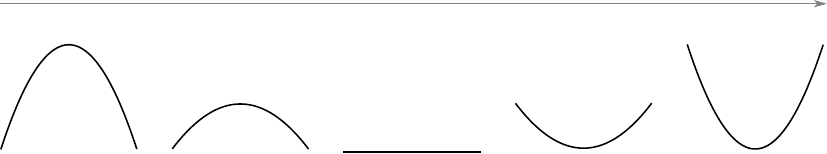}}%
    \tiny
    \put(0.42,0.2253504){\color[rgb]{0,0,0}\makebox(0,0)[lb]{\smash{eigenvalue increases}}}%
    \put(0.02,0.01739747){\color[rgb]{0,0,0}\makebox(0,0)[lb]{\smash{sharp maximum,}}}%
    \put(0.858,0.01772701){\color[rgb]{0,0,0}\makebox(0,0)[lb]{\smash{sharp minimum,}}}%
    \put(0.415,0.01640887){\color[rgb]{0,0,0}\makebox(0,0)[lb]{\smash{moderate exploration}}}%
    \put(0.001,0.0){\color[rgb]{0,0,0}\makebox(0,0)[lb]{\smash{very little exploration}}}%
    \put(0.847,0.0){\color[rgb]{0,0,0}\makebox(0,0)[lb]{\smash{lots of exploration}}}%
  \end{picture}%
\endgroup%

        \caption{The parabolas show different possible curvatures of the critic $\hat{Q}(\cdot, s)$. We set exploration to be the strongest for sharp mimima, on the left side of the figure. The exploration strength then increases as we move towards the right. There is almost no exploration to the far left, where we have a sharp maximum.}
        \label{fig-explore-1d}
\end{figure}

In the multi-dimensional case, the critic can have saddle points, as shown in Figure \ref{fig-saddle-point}. For the case shown in the figure, we explore  little along the blue eigenvector (since the intersection of $Q(\cdot,s)$ with the blue plane shows a maximum) and much more along the red eigenvector (since the intersection of $Q(\cdot,s)$ with the red plane shows a minimum, which we want to escape). In essence, we apply the one-dimensional reasoning shown in Figure \ref{fig-explore-1d} to each plane separately, where the planes are spanned by the corresponding eigenvector and the $z$-axis. This way, we can escape saddle points and minima.\footnote{Of course the optimization is still local and there is no guarantee of finding a global optimum---we can merely increase our chances.} In practice, we show experimentally that good performance is obtained if we consider quadratic functions constrained to have a diagonal Hessian, i.e., where eigenvectors are axis-aligned.

\begin{figure}[t]
    \centering
        \def\svgwidth{6cm} \tiny
        \begingroup%
  \makeatletter%
  \providecommand\color[2][]{%
    \errmessage{(Inkscape) Color is used for the text in Inkscape, but the package 'color.sty' is not loaded}%
    \renewcommand\color[2][]{}%
  }%
  \providecommand\transparent[1]{%
    \errmessage{(Inkscape) Transparency is used (non-zero) for the text in Inkscape, but the package 'transparent.sty' is not loaded}%
    \renewcommand\transparent[1]{}%
  }%
  \providecommand\rotatebox[2]{#2}%
  \ifx\svgwidth\undefined%
    \setlength{\unitlength}{338.89057876bp}%
    \ifx\svgscale\undefined%
      \relax%
    \else%
      \setlength{\unitlength}{\unitlength * \real{\svgscale}}%
    \fi%
  \else%
    \setlength{\unitlength}{\svgwidth}%
  \fi%
  \global\let\svgwidth\undefined%
  \global\let\svgscale\undefined%
  \makeatother%
  \begin{picture}(1,0.83535079)%
    \put(0,0){\includegraphics[width=\unitlength,page=1]{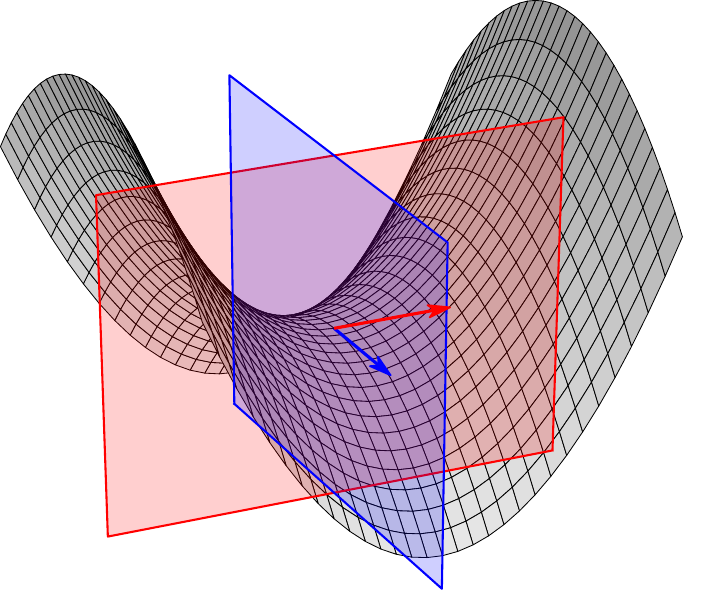}}%
    \put(0.86443659,0.77555611){\color[rgb]{0,0,0}\makebox(0,0)[lb]{\smash{$Q(\cdot,s)$}}}%
    \put(0.51476622,0.3993285){\color[rgb]{0,0,0}\makebox(0,0)[lb]{\smash{$u_1$}}}%
    \put(0.45452074,0.32787558){\color[rgb]{0,0,0}\makebox(0,0)[lb]{\smash{$u_2$}}}%
  \end{picture}%
\endgroup%
        \caption{In multi-dimensional action spaces, the critic $\hat{Q}(\cdot,s)$ can have saddle points. In this case, we define exploration along each eigenvector separately.}
        \label{fig-saddle-point}
\end{figure}

\subsection{Behavior of Policy Gradients across Optimization Time}
\begin{figure}
\centering
\small
\begin{subfigure}{0.3\textwidth}
\centering
\begin{tikzpicture}[line cap=round,line join=round,>=triangle 45,x=0.6cm,y=0.6cm]
\begin{axis}[
x=0.6cm,y=0.6cm,
axis lines=middle,
ymajorgrids=true,
xmajorgrids=true,
xmin=-3.0200000000000045,
xmax=3.2599999999999976,
ymin=-3.430000000000009,
ymax=1.4500000000000035,
xtick={-3,-2,...,3},
ytick={-3,-2,...,1},]
\clip(-3.02,-3.43) rectangle (3.26,1.45);
\draw[line width=2pt,color=qqwuqq,smooth,samples=100,domain=-3.0200000000000045:3.2599999999999976] plot(\x,{0-(\x)^(2)});
\begin{scriptsize}
\draw[color=qqwuqq] (-1.18,-3.1) node {$\hat{Q}$};
\draw[] (3,0.2) node {$a$};
\end{scriptsize}
\end{axis}
\end{tikzpicture}
\caption{ Continuous bandit. }
\label{fig-cb-def}
\end{subfigure}
\begin{subfigure}{0.65\textwidth}
\centering
\begin{tikzpicture}[line cap=round,line join=round,>=triangle 45,x=0.015cm,y=3cm]
\begin{axis}[
x=0.015cm,y=3cm,
axis lines=middle,
ymajorgrids=true,
xmajorgrids=true,
xmin=-0.01,
xmax=500.0000000000001,
ymin=-0.047641509433962344,
ymax=1.202358490566037,
xtick={0,50,...,500},
ytick={0,0.2,...,1.2000000000000002},]
\clip(-0.01,-0.047641509433962344) rectangle (500.0000000000001,1.202358490566037);
\draw[line width=2pt,color=ffqqtt] (-0.01,1.0001000050001667) -- (-0.01,1.0001000050001667);
\draw[line width=2pt,color=ffqqtt] (-0.01,1.0001000050001667) -- (1.2400250000000002,0.9876763162928744);
\draw[line width=2pt,color=ffqqtt] (1.2400250000000002,0.9876763162928744) -- (2.49005,0.9754069601926456);
\draw[line width=2pt,color=ffqqtt] (2.49005,0.9754069601926456) -- (3.740075,0.963290019510941);
\draw[line width=2pt,color=ffqqtt] (3.740075,0.963290019510941) -- (4.9901,0.9513236008753932);
\draw[line width=2pt,color=ffqqtt] (4.9901,0.9513236008753932) -- (6.240125,0.939505834433952);
\draw[line width=2pt,color=ffqqtt] (6.240125,0.939505834433952) -- (7.49015,0.9278348735627037);
\draw[line width=2pt,color=ffqqtt] (7.49015,0.9278348735627037) -- (8.740175,0.9163088945773211);
\draw[line width=2pt,color=ffqqtt] (8.740175,0.9163088945773211) -- (9.990200000000002,0.9049260964480983);
\draw[line width=2pt,color=ffqqtt] (9.990200000000002,0.9049260964480983) -- (11.240225000000002,0.8936847005185239);
\draw[line width=2pt,color=ffqqtt] (11.240225000000002,0.8936847005185239) -- (12.490250000000003,0.8825829502273518);
\draw[line width=2pt,color=ffqqtt] (12.490250000000003,0.8825829502273518) -- (13.740275000000004,0.8716191108341239);
\draw[line width=2pt,color=ffqqtt] (13.740275000000004,0.8716191108341239) -- (14.990300000000005,0.8607914691481027);
\draw[line width=2pt,color=ffqqtt] (14.990300000000005,0.8607914691481027) -- (16.240325000000006,0.85009833326057);
\draw[line width=2pt,color=ffqqtt] (16.240325000000006,0.85009833326057) -- (17.490350000000007,0.8395380322804541);
\draw[line width=2pt,color=ffqqtt] (17.490350000000007,0.8395380322804541) -- (18.740375000000007,0.8291089160732371);
\draw[line width=2pt,color=ffqqtt] (18.740375000000007,0.8291089160732371) -- (19.990400000000008,0.8188093550031094);
\draw[line width=2pt,color=ffqqtt] (19.990400000000008,0.8188093550031094) -- (21.24042500000001,0.8086377396783242);
\draw[line width=2pt,color=ffqqtt] (21.24042500000001,0.8086377396783242) -- (22.49045000000001,0.7985924806997182);
\draw[line width=2pt,color=ffqqtt] (22.49045000000001,0.7985924806997182) -- (23.74047500000001,0.7886720084123534);
\draw[line width=2pt,color=ffqqtt] (23.74047500000001,0.7886720084123534) -- (24.99050000000001,0.7788747726602464);
\draw[line width=2pt,color=ffqqtt] (24.99050000000001,0.7788747726602464) -- (26.240525000000012,0.7691992425441434);
\draw[line width=2pt,color=ffqqtt] (26.240525000000012,0.7691992425441434) -- (27.490550000000013,0.7596439061823045);
\draw[line width=2pt,color=ffqqtt] (27.490550000000013,0.7596439061823045) -- (28.740575000000014,0.7502072704742596);
\draw[line width=2pt,color=ffqqtt] (28.740575000000014,0.7502072704742596) -- (29.990600000000015,0.7408878608674992);
\draw[line width=2pt,color=ffqqtt] (29.990600000000015,0.7408878608674992) -- (31.240625000000016,0.7316842211270637);
\draw[line width=2pt,color=ffqqtt] (31.240625000000016,0.7316842211270637) -- (32.49065000000002,0.7225949131079934);
\draw[line width=2pt,color=ffqqtt] (32.49065000000002,0.7225949131079934) -- (33.74067500000002,0.713618516530608);
\draw[line width=2pt,color=ffqqtt] (33.74067500000002,0.713618516530608) -- (34.99070000000002,0.7047536287585753);
\draw[line width=2pt,color=ffqqtt] (34.99070000000002,0.7047536287585753) -- (36.24072500000002,0.6959988645797374);
\draw[line width=2pt,color=ffqqtt] (36.24072500000002,0.6959988645797374) -- (37.49075000000002,0.6873528559896603);
\draw[line width=2pt,color=ffqqtt] (37.49075000000002,0.6873528559896603) -- (38.74077500000002,0.6788142519778722);
\draw[line width=2pt,color=ffqqtt] (38.74077500000002,0.6788142519778722) -- (39.99080000000002,0.6703817183167559);
\draw[line width=2pt,color=ffqqtt] (39.99080000000002,0.6703817183167559) -- (41.24082500000002,0.6620539373530655);
\draw[line width=2pt,color=ffqqtt] (41.24082500000002,0.6620539373530655) -- (42.49085000000002,0.653829607802032);
\draw[line width=2pt,color=ffqqtt] (42.49085000000002,0.653829607802032) -- (43.740875000000024,0.6457074445440266);
\draw[line width=2pt,color=ffqqtt] (43.740875000000024,0.6457074445440266) -- (44.990900000000025,0.6376861784237502);
\draw[line width=2pt,color=ffqqtt] (44.990900000000025,0.6376861784237502) -- (46.240925000000026,0.6297645560519173);
\draw[line width=2pt,color=ffqqtt] (46.240925000000026,0.6297645560519173) -- (47.490950000000026,0.6219413396094039);
\draw[line width=2pt,color=ffqqtt] (47.490950000000026,0.6219413396094039) -- (48.74097500000003,0.6142153066538274);
\draw[line width=2pt,color=ffqqtt] (48.74097500000003,0.6142153066538274) -- (49.99100000000003,0.6065852499285302);
\draw[line width=2pt,color=ffqqtt] (49.99100000000003,0.6065852499285302) -- (51.24102500000003,0.5990499771739363);
\draw[line width=2pt,color=ffqqtt] (51.24102500000003,0.5990499771739363) -- (52.49105000000003,0.5916083109412497);
\draw[line width=2pt,color=ffqqtt] (52.49105000000003,0.5916083109412497) -- (53.74107500000003,0.5842590884084692);
\draw[line width=2pt,color=ffqqtt] (53.74107500000003,0.5842590884084692) -- (54.99110000000003,0.5770011611986879);
\draw[line width=2pt,color=ffqqtt] (54.99110000000003,0.5770011611986879) -- (56.24112500000003,0.5698333952006491);
\draw[line width=2pt,color=ffqqtt] (56.24112500000003,0.5698333952006491) -- (57.49115000000003,0.5627546703915325);
\draw[line width=2pt,color=ffqqtt] (57.49115000000003,0.5627546703915325) -- (58.741175000000034,0.5557638806619414);
\draw[line width=2pt,color=ffqqtt] (58.741175000000034,0.5557638806619414) -- (59.991200000000035,0.5488599336430635);
\draw[line width=2pt,color=ffqqtt] (59.991200000000035,0.5488599336430635) -- (61.241225000000036,0.5420417505359798);
\draw[line width=2pt,color=ffqqtt] (61.241225000000036,0.5420417505359798) -- (62.491250000000036,0.5353082659430929);
\draw[line width=2pt,color=ffqqtt] (62.491250000000036,0.5353082659430929) -- (63.74127500000004,0.52865842770165);
\draw[line width=2pt,color=ffqqtt] (63.74127500000004,0.52865842770165) -- (64.99130000000004,0.5220911967193336);
\draw[line width=2pt,color=ffqqtt] (64.99130000000004,0.5220911967193336) -- (66.24132500000003,0.5156055468118951);
\draw[line width=2pt,color=ffqqtt] (66.24132500000003,0.5156055468118951) -- (67.49135000000003,0.5092004645428043);
\draw[line width=2pt,color=ffqqtt] (67.49135000000003,0.5092004645428043) -- (68.74137500000002,0.5028749490648924);
\draw[line width=2pt,color=ffqqtt] (68.74137500000002,0.5028749490648924) -- (69.99140000000001,0.4966280119639606);
\draw[line width=2pt,color=ffqqtt] (69.99140000000001,0.4966280119639606) -- (71.241425,0.4904586771043327);
\draw[line width=2pt,color=ffqqtt] (71.241425,0.4904586771043327) -- (72.49145,0.48436598047632545);
\draw[line width=2pt,color=ffqqtt] (72.49145,0.48436598047632545) -- (73.741475,0.4783489700456144);
\draw[line width=2pt,color=ffqqtt] (73.741475,0.4783489700456144) -- (74.99149999999999,0.4724067056044703);
\draw[line width=2pt,color=ffqqtt] (74.99149999999999,0.4724067056044703) -- (76.24152499999998,0.46653825862484416);
\draw[line width=2pt,color=ffqqtt] (76.24152499999998,0.46653825862484416) -- (77.49154999999998,0.4607427121132768);
\draw[line width=2pt,color=ffqqtt] (77.49154999999998,0.4607427121132768) -- (78.74157499999997,0.4550191604676112);
\draw[line width=2pt,color=ffqqtt] (78.74157499999997,0.4550191604676112) -- (79.99159999999996,0.4493667093354846);
\draw[line width=2pt,color=ffqqtt] (79.99159999999996,0.4493667093354846) -- (81.24162499999996,0.4437844754745785);
\draw[line width=2pt,color=ffqqtt] (81.24162499999996,0.4437844754745785) -- (82.49164999999995,0.4382715866146048);
\draw[line width=2pt,color=ffqqtt] (82.49164999999995,0.4382715866146048) -- (83.74167499999994,0.43282718132100617);
\draw[line width=2pt,color=ffqqtt] (83.74167499999994,0.43282718132100617) -- (84.99169999999994,0.4274504088603502);
\draw[line width=2pt,color=ffqqtt] (84.99169999999994,0.4274504088603502) -- (86.24172499999993,0.4221404290673945);
\draw[line width=2pt,color=ffqqtt] (86.24172499999993,0.4221404290673945) -- (87.49174999999993,0.4168964122138048);
\draw[line width=2pt,color=ffqqtt] (87.49174999999993,0.4168964122138048) -- (88.74177499999992,0.41171753887850226);
\draw[line width=2pt,color=ffqqtt] (88.74177499999992,0.41171753887850226) -- (89.99179999999991,0.4066029998196228);
\draw[line width=2pt,color=ffqqtt] (89.99179999999991,0.4066029998196228) -- (91.2418249999999,0.4015519958480656);
\draw[line width=2pt,color=ffqqtt] (91.2418249999999,0.4015519958480656) -- (92.4918499999999,0.3965637377026141);
\draw[line width=2pt,color=ffqqtt] (92.4918499999999,0.3965637377026141) -- (93.7418749999999,0.39163744592660643);
\draw[line width=2pt,color=ffqqtt] (93.7418749999999,0.39163744592660643) -- (94.99189999999989,0.3867723507461397);
\draw[line width=2pt,color=ffqqtt] (94.99189999999989,0.3867723507461397) -- (96.24192499999988,0.38196769194978575);
\draw[line width=2pt,color=ffqqtt] (96.24192499999988,0.38196769194978575) -- (97.49194999999987,0.3772227187698024);
\draw[line width=2pt,color=ffqqtt] (97.49194999999987,0.3772227187698024) -- (98.74197499999987,0.3725366897648194);
\draw[line width=2pt,color=ffqqtt] (98.74197499999987,0.3725366897648194) -- (99.99199999999986,0.3679088727039822);
\draw[line width=2pt,color=ffqqtt] (99.99199999999986,0.3679088727039822) -- (101.24202499999986,0.36333854445253466);
\draw[line width=2pt,color=ffqqtt] (101.24202499999986,0.36333854445253466) -- (102.49204999999985,0.35882499085882363);
\draw[line width=2pt,color=ffqqtt] (102.49204999999985,0.35882499085882363) -- (103.74207499999984,0.3543675066427065);
\draw[line width=2pt,color=ffqqtt] (103.74207499999984,0.3543675066427065) -- (104.99209999999984,0.34996539528534526);
\draw[line width=2pt,color=ffqqtt] (104.99209999999984,0.34996539528534526) -- (106.24212499999983,0.34561796892037);
\draw[line width=2pt,color=ffqqtt] (106.24212499999983,0.34561796892037) -- (107.49214999999982,0.34132454822639396);
\draw[line width=2pt,color=ffqqtt] (107.49214999999982,0.34132454822639396) -- (108.74217499999982,0.33708446232086386);
\draw[line width=2pt,color=ffqqtt] (108.74217499999982,0.33708446232086386) -- (109.99219999999981,0.33289704865522884);
\draw[line width=2pt,color=ffqqtt] (109.99219999999981,0.33289704865522884) -- (111.2422249999998,0.328761652911412);
\draw[line width=2pt,color=ffqqtt] (111.2422249999998,0.328761652911412) -- (112.4922499999998,0.3246776288995678);
\draw[line width=2pt,color=ffqqtt] (112.4922499999998,0.3246776288995678) -- (113.7422749999998,0.3206443384571092);
\draw[line width=2pt,color=ffqqtt] (113.7422749999998,0.3206443384571092) -- (114.99229999999979,0.3166611513489899);
\draw[line width=2pt,color=ffqqtt] (114.99229999999979,0.3166611513489899) -- (116.24232499999978,0.3127274451692245);
\draw[line width=2pt,color=ffqqtt] (116.24232499999978,0.3127274451692245) -- (117.49234999999977,0.30884260524363266);
\draw[line width=2pt,color=ffqqtt] (117.49234999999977,0.30884260524363266) -- (118.74237499999977,0.30500602453379116);
\draw[line width=2pt,color=ffqqtt] (118.74237499999977,0.30500602453379116) -- (119.99239999999976,0.3012171035421791);
\draw[line width=2pt,color=ffqqtt] (119.99239999999976,0.3012171035421791) -- (121.24242499999976,0.29747525021850113);
\draw[line width=2pt,color=ffqqtt] (121.24242499999976,0.29747525021850113) -- (122.49244999999975,0.29377987986717524);
\draw[line width=2pt,color=ffqqtt] (122.49244999999975,0.29377987986717524) -- (123.74247499999974,0.2901304150559688);
\draw[line width=2pt,color=ffqqtt] (123.74247499999974,0.2901304150559688) -- (124.99249999999974,0.2865262855257703);
\draw[line width=2pt,color=ffqqtt] (124.99249999999974,0.2865262855257703) -- (126.24252499999973,0.2829669281014812);
\draw[line width=2pt,color=ffqqtt] (126.24252499999973,0.2829669281014812) -- (127.49254999999972,0.2794517866040157);
\draw[line width=2pt,color=ffqqtt] (127.49254999999972,0.2794517866040157) -- (128.74257499999973,0.2759803117633928);
\draw[line width=2pt,color=ffqqtt] (128.74257499999973,0.2759803117633928) -- (129.99259999999973,0.27255196113290836);
\draw[line width=2pt,color=ffqqtt] (129.99259999999973,0.27255196113290836) -- (131.24262499999972,0.26916619900437333);
\draw[line width=2pt,color=ffqqtt] (131.24262499999972,0.26916619900437333) -- (132.4926499999997,0.265822496324405);
\draw[line width=2pt,color=ffqqtt] (132.4926499999997,0.265822496324405) -- (133.7426749999997,0.26252033061175795);
\draw[line width=2pt,color=ffqqtt] (133.7426749999997,0.26252033061175795) -- (134.9926999999997,0.2592591858756819);
\draw[line width=2pt,color=ffqqtt] (134.9926999999997,0.2592591858756819) -- (136.2427249999997,0.25603855253529423);
\draw[line width=2pt,color=ffqqtt] (136.2427249999997,0.25603855253529423) -- (137.4927499999997,0.2528579273399532);
\draw[line width=2pt,color=ffqqtt] (137.4927499999997,0.2528579273399532) -- (138.74277499999968,0.24971681329062162);
\draw[line width=2pt,color=ffqqtt] (138.74277499999968,0.24971681329062162) -- (139.99279999999968,0.24661471956220576);
\draw[line width=2pt,color=ffqqtt] (139.99279999999968,0.24661471956220576) -- (141.24282499999967,0.2435511614268606);
\draw[line width=2pt,color=ffqqtt] (141.24282499999967,0.2435511614268606) -- (142.49284999999966,0.2405256601782467);
\draw[line width=2pt,color=ffqqtt] (142.49284999999966,0.2405256601782467) -- (143.74287499999966,0.2375377430567285);
\draw[line width=2pt,color=ffqqtt] (143.74287499999966,0.2375377430567285) -- (144.99289999999965,0.23458694317550155);
\draw[line width=2pt,color=ffqqtt] (144.99289999999965,0.23458694317550155) -- (146.24292499999964,0.23167279944763788);
\draw[line width=2pt,color=ffqqtt] (146.24292499999964,0.23167279944763788) -- (147.49294999999964,0.22879485651403705);
\draw[line width=2pt,color=ffqqtt] (147.49294999999964,0.22879485651403705) -- (148.74297499999963,0.22595266467227273);
\draw[line width=2pt,color=ffqqtt] (148.74297499999963,0.22595266467227273) -- (149.99299999999963,0.22314577980632272);
\draw[line width=2pt,color=ffqqtt] (149.99299999999963,0.22314577980632272) -- (151.24302499999962,0.22037376331717248);
\draw[line width=2pt,color=ffqqtt] (151.24302499999962,0.22037376331717248) -- (152.4930499999996,0.2176361820542801);
\draw[line width=2pt,color=ffqqtt] (152.4930499999996,0.2176361820542801) -- (153.7430749999996,0.21493260824789315);
\draw[line width=2pt,color=ffqqtt] (153.7430749999996,0.21493260824789315) -- (154.9930999999996,0.2122626194422059);
\draw[line width=2pt,color=ffqqtt] (154.9930999999996,0.2122626194422059) -- (156.2431249999996,0.20962579842934734);
\draw[line width=2pt,color=ffqqtt] (156.2431249999996,0.20962579842934734) -- (157.4931499999996,0.20702173318418882);
\draw[line width=2pt,color=ffqqtt] (157.4931499999996,0.20702173318418882) -- (158.74317499999958,0.2044500167999618);
\draw[line width=2pt,color=ffqqtt] (158.74317499999958,0.2044500167999618) -- (159.99319999999958,0.20191024742467525);
\draw[line width=2pt,color=ffqqtt] (159.99319999999958,0.20191024742467525) -- (161.24322499999957,0.1994020281983229);
\draw[line width=2pt,color=ffqqtt] (161.24322499999957,0.1994020281983229) -- (162.49324999999956,0.19692496719087071);
\draw[line width=2pt,color=ffqqtt] (162.49324999999956,0.19692496719087071) -- (163.74327499999956,0.19447867734101446);
\draw[line width=2pt,color=ffqqtt] (163.74327499999956,0.19447867734101446) -- (164.99329999999955,0.19206277639569824);
\draw[line width=2pt,color=ffqqtt] (164.99329999999955,0.19206277639569824) -- (166.24332499999954,0.1896768868503842);
\draw[line width=2pt,color=ffqqtt] (166.24332499999954,0.1896768868503842) -- (167.49334999999954,0.18732063589006442);
\draw[line width=2pt,color=ffqqtt] (167.49334999999954,0.18732063589006442) -- (168.74337499999953,0.18499365533100545);
\draw[line width=2pt,color=ffqqtt] (168.74337499999953,0.18499365533100545) -- (169.99339999999953,0.1826955815632165);
\draw[line width=2pt,color=ffqqtt] (169.99339999999953,0.1826955815632165) -- (171.24342499999952,0.18042605549363236);
\draw[line width=2pt,color=ffqqtt] (171.24342499999952,0.18042605549363236) -- (172.4934499999995,0.17818472249000225);
\draw[line width=2pt,color=ffqqtt] (172.4934499999995,0.17818472249000225) -- (173.7434749999995,0.17597123232547549);
\draw[line width=2pt,color=ffqqtt] (173.7434749999995,0.17597123232547549) -- (174.9934999999995,0.17378523912387572);
\draw[line width=2pt,color=ffqqtt] (174.9934999999995,0.17378523912387572) -- (176.2435249999995,0.17162640130565474);
\draw[line width=2pt,color=ffqqtt] (176.2435249999995,0.17162640130565474) -- (177.4935499999995,0.16949438153451804);
\draw[line width=2pt,color=ffqqtt] (177.4935499999995,0.16949438153451804) -- (178.74357499999948,0.16738884666471318);
\draw[line width=2pt,color=ffqqtt] (178.74357499999948,0.16738884666471318) -- (179.99359999999947,0.16530946768897298);
\draw[line width=2pt,color=ffqqtt] (179.99359999999947,0.16530946768897298) -- (181.24362499999947,0.16325591968710526);
\draw[line width=2pt,color=ffqqtt] (181.24362499999947,0.16325591968710526) -- (182.49364999999946,0.16122788177522168);
\draw[line width=2pt,color=ffqqtt] (182.49364999999946,0.16122788177522168) -- (183.74367499999946,0.15922503705559674);
\draw[line width=2pt,color=ffqqtt] (183.74367499999946,0.15922503705559674) -- (184.99369999999945,0.15724707256714998);
\draw[line width=2pt,color=ffqqtt] (184.99369999999945,0.15724707256714998) -- (186.24372499999944,0.15529367923654308);
\draw[line width=2pt,color=ffqqtt] (186.24372499999944,0.15529367923654308) -- (187.49374999999944,0.15336455182988482);
\draw[line width=2pt,color=ffqqtt] (187.49374999999944,0.15336455182988482) -- (188.74377499999943,0.15145938890503555);
\draw[line width=2pt,color=ffqqtt] (188.74377499999943,0.15145938890503555) -- (189.99379999999942,0.14957789276450453);
\draw[line width=2pt,color=ffqqtt] (189.99379999999942,0.14957789276450453) -- (191.24382499999942,0.1477197694089321);
\draw[line width=2pt,color=ffqqtt] (191.24382499999942,0.1477197694089321) -- (192.4938499999994,0.14588472849114986);
\draw[line width=2pt,color=ffqqtt] (192.4938499999994,0.14588472849114986) -- (193.7438749999994,0.14407248327081154);
\draw[line width=2pt,color=ffqqtt] (193.7438749999994,0.14407248327081154) -- (194.9938999999994,0.1422827505695875);
\draw[line width=2pt,color=ffqqtt] (194.9938999999994,0.1422827505695875) -- (196.2439249999994,0.14051525072691573);
\draw[line width=2pt,color=ffqqtt] (196.2439249999994,0.14051525072691573) -- (197.4939499999994,0.1387697075563025);
\draw[line width=2pt,color=ffqqtt] (197.4939499999994,0.1387697075563025) -- (198.74397499999938,0.137045848302166);
\draw[line width=2pt,color=ffqqtt] (198.74397499999938,0.137045848302166) -- (199.99399999999937,0.1353434035972161);
\draw[line width=2pt,color=ffqqtt] (199.99399999999937,0.1353434035972161) -- (201.24402499999937,0.13366210742036336);
\draw[line width=2pt,color=ffqqtt] (201.24402499999937,0.13366210742036336) -- (202.49404999999936,0.13200169705515094);
\draw[line width=2pt,color=ffqqtt] (202.49404999999936,0.13200169705515094) -- (203.74407499999936,0.13036191304870337);
\draw[line width=2pt,color=ffqqtt] (203.74407499999936,0.13036191304870337) -- (204.99409999999935,0.12874249917118427);
\draw[line width=2pt,color=ffqqtt] (204.99409999999935,0.12874249917118427) -- (206.24412499999934,0.12714320237575896);
\draw[line width=2pt,color=ffqqtt] (206.24412499999934,0.12714320237575896) -- (207.49414999999934,0.12556377275905337);
\draw[line width=2pt,color=ffqqtt] (207.49414999999934,0.12556377275905337) -- (208.74417499999933,0.1240039635221047);
\draw[line width=2pt,color=ffqqtt] (208.74417499999933,0.1240039635221047) -- (209.99419999999932,0.12246353093179711);
\draw[line width=2pt,color=ffqqtt] (209.99419999999932,0.12246353093179711) -- (211.24422499999932,0.12094223428277626);
\draw[line width=2pt,color=ffqqtt] (211.24422499999932,0.12094223428277626) -- (212.4942499999993,0.11943983585983718);
\draw[line width=2pt,color=ffqqtt] (212.4942499999993,0.11943983585983718) -- (213.7442749999993,0.11795610090077936);
\draw[line width=2pt,color=ffqqtt] (213.7442749999993,0.11795610090077936) -- (214.9942999999993,0.11649079755972301);
\draw[line width=2pt,color=ffqqtt] (214.9942999999993,0.11649079755972301) -- (216.2443249999993,0.11504369687088145);
\draw[line width=2pt,color=ffqqtt] (216.2443249999993,0.11504369687088145) -- (217.4943499999993,0.11361457271278319);
\draw[line width=2pt,color=ffqqtt] (217.4943499999993,0.11361457271278319) -- (218.74437499999928,0.11220320177293865);
\draw[line width=2pt,color=ffqqtt] (218.74437499999928,0.11220320177293865) -- (219.99439999999927,0.11080936351294558);
\draw[line width=2pt,color=ffqqtt] (219.99439999999927,0.11080936351294558) -- (221.24442499999927,0.1094328401340283);
\draw[line width=2pt,color=ffqqtt] (221.24442499999927,0.1094328401340283) -- (222.49444999999926,0.10807341654300472);
\draw[line width=2pt,color=ffqqtt] (222.49444999999926,0.10807341654300472) -- (223.74447499999926,0.10673088031867624);
\draw[line width=2pt,color=ffqqtt] (223.74447499999926,0.10673088031867624) -- (224.99449999999925,0.10540502167863532);
\draw[line width=2pt,color=ffqqtt] (224.99449999999925,0.10540502167863532) -- (226.24452499999924,0.10409563344648494);
\draw[line width=2pt,color=ffqqtt] (226.24452499999924,0.10409563344648494) -- (227.49454999999924,0.10280251101946594);
\draw[line width=2pt,color=ffqqtt] (227.49454999999924,0.10280251101946594) -- (228.74457499999923,0.10152545233648592);
\draw[line width=2pt,color=ffqqtt] (228.74457499999923,0.10152545233648592) -- (229.99459999999922,0.10026425784654558);
\draw[line width=2pt,color=ffqqtt] (229.99459999999922,0.10026425784654558) -- (231.24462499999922,0.09901873047755716);
\draw[line width=2pt,color=ffqqtt] (231.24462499999922,0.09901873047755716) -- (232.4946499999992,0.09778867560555041);
\draw[line width=2pt,color=ffqqtt] (232.4946499999992,0.09778867560555041) -- (233.7446749999992,0.09657390102426087);
\draw[line width=2pt,color=ffqqtt] (233.7446749999992,0.09657390102426087) -- (234.9946999999992,0.09537421691509619);
\draw[line width=2pt,color=ffqqtt] (234.9946999999992,0.09537421691509619) -- (236.2447249999992,0.09418943581747515);
\draw[line width=2pt,color=ffqqtt] (236.2447249999992,0.09418943581747515) -- (237.4947499999992,0.09301937259953566);
\draw[line width=2pt,color=ffqqtt] (237.4947499999992,0.09301937259953566) -- (238.74477499999918,0.09186384442920627);
\draw[line width=2pt,color=ffqqtt] (238.74477499999918,0.09186384442920627) -- (239.99479999999917,0.09072267074563711);
\draw[line width=2pt,color=ffqqtt] (239.99479999999917,0.09072267074563711) -- (241.24482499999917,0.0895956732309858);
\draw[line width=2pt,color=ffqqtt] (241.24482499999917,0.0895956732309858) -- (242.49484999999916,0.08848267578255377);
\draw[line width=2pt,color=ffqqtt] (242.49484999999916,0.08848267578255377) -- (243.74487499999915,0.08738350448526883);
\draw[line width=2pt,color=ffqqtt] (243.74487499999915,0.08738350448526883) -- (244.99489999999915,0.08629798758450946);
\draw[line width=2pt,color=ffqqtt] (244.99489999999915,0.08629798758450946) -- (246.24492499999914,0.08522595545926662);
\draw[line width=2pt,color=ffqqtt] (246.24492499999914,0.08522595545926662) -- (247.49494999999914,0.08416724059563925);
\draw[line width=2pt,color=ffqqtt] (247.49494999999914,0.08416724059563925) -- (248.74497499999913,0.08312167756065875);
\draw[line width=2pt,color=ffqqtt] (248.74497499999913,0.08312167756065875) -- (249.99499999999912,0.08208910297643868);
\draw[line width=2pt,color=ffqqtt] (249.99499999999912,0.08208910297643868) -- (251.24502499999912,0.08106935549464564);
\draw[line width=2pt,color=ffqqtt] (251.24502499999912,0.08106935549464564) -- (252.4950499999991,0.08006227577128726);
\draw[line width=2pt,color=ffqqtt] (252.4950499999991,0.08006227577128726) -- (253.7450749999991,0.07906770644181337);
\draw[line width=2pt,color=ffqqtt] (253.7450749999991,0.07906770644181337) -- (254.9950999999991,0.07808549209652647);
\draw[line width=2pt,color=ffqqtt] (254.9950999999991,0.07808549209652647) -- (256.2451249999991,0.07711547925629776);
\draw[line width=2pt,color=ffqqtt] (256.2451249999991,0.07711547925629776) -- (257.4951499999991,0.07615751634858461);
\draw[line width=2pt,color=ffqqtt] (257.4951499999991,0.07615751634858461) -- (258.7451749999991,0.07521145368374624);
\draw[line width=2pt,color=ffqqtt] (258.7451749999991,0.07521145368374624) -- (259.9951999999991,0.07427714343165337);
\draw[line width=2pt,color=ffqqtt] (259.9951999999991,0.07427714343165337) -- (261.2452249999991,0.07335443959858859);
\draw[line width=2pt,color=ffqqtt] (261.2452249999991,0.07335443959858859) -- (262.4952499999991,0.07244319800443368);
\draw[line width=2pt,color=ffqqtt] (262.4952499999991,0.07244319800443368) -- (263.7452749999991,0.07154327626014004);
\draw[line width=2pt,color=ffqqtt] (263.7452749999991,0.07154327626014004) -- (264.9952999999991,0.07065453374547959);
\draw[line width=2pt,color=ffqqtt] (264.9952999999991,0.07065453374547959) -- (266.24532499999907,0.06977683158707138);
\draw[line width=2pt,color=ffqqtt] (266.24532499999907,0.06977683158707138) -- (267.49534999999906,0.06891003263668162);
\draw[line width=2pt,color=ffqqtt] (267.49534999999906,0.06891003263668162) -- (268.74537499999906,0.06805400144979314);
\draw[line width=2pt,color=ffqqtt] (268.74537499999906,0.06805400144979314) -- (269.99539999999905,0.06720860426444096);
\draw[line width=2pt,color=ffqqtt] (269.99539999999905,0.06720860426444096) -- (271.24542499999905,0.06637370898031097);
\draw[line width=2pt,color=ffqqtt] (271.24542499999905,0.06637370898031097) -- (272.49544999999904,0.0655491851380982);
\draw[line width=2pt,color=ffqqtt] (272.49544999999904,0.0655491851380982) -- (273.74547499999903,0.06473490389912125);
\draw[line width=2pt,color=ffqqtt] (273.74547499999903,0.06473490389912125) -- (274.995499999999,0.06393073802519045);
\draw[line width=2pt,color=ffqqtt] (274.995499999999,0.06393073802519045) -- (276.245524999999,0.0631365618587257);
\draw[line width=2pt,color=ffqqtt] (276.245524999999,0.0631365618587257) -- (277.495549999999,0.06235225130312143);
\draw[line width=2pt,color=ffqqtt] (277.495549999999,0.06235225130312143) -- (278.745574999999,0.061577683803355536);
\draw[line width=2pt,color=ffqqtt] (278.745574999999,0.061577683803355536) -- (279.995599999999,0.06081273832683909);
\draw[line width=2pt,color=ffqqtt] (279.995599999999,0.06081273832683909) -- (281.245624999999,0.060057295344504);
\draw[line width=2pt,color=ffqqtt] (281.245624999999,0.060057295344504) -- (282.495649999999,0.05931123681212563);
\draw[line width=2pt,color=ffqqtt] (282.495649999999,0.05931123681212563) -- (283.745674999999,0.058574446151877416);
\draw[line width=2pt,color=ffqqtt] (283.745674999999,0.058574446151877416) -- (284.995699999999,0.05784680823411453);
\draw[line width=2pt,color=ffqqtt] (284.995699999999,0.05784680823411453) -- (286.24572499999897,0.05712820935938405);
\draw[line width=2pt,color=ffqqtt] (286.24572499999897,0.05712820935938405) -- (287.49574999999896,0.0564185372406584);
\draw[line width=2pt,color=ffqqtt] (287.49574999999896,0.0564185372406584) -- (288.74577499999896,0.055717680985789574);
\draw[line width=2pt,color=ffqqtt] (288.74577499999896,0.055717680985789574) -- (289.99579999999895,0.055025531080181336);
\draw[line width=2pt,color=ffqqtt] (289.99579999999895,0.055025531080181336) -- (291.24582499999894,0.054341979369676656);
\draw[line width=2pt,color=ffqqtt] (291.24582499999894,0.054341979369676656) -- (292.49584999999894,0.05366691904365772);
\draw[line width=2pt,color=ffqqtt] (292.49584999999894,0.05366691904365772) -- (293.74587499999893,0.0530002446183559);
\draw[line width=2pt,color=ffqqtt] (293.74587499999893,0.0530002446183559) -- (294.9958999999989,0.052341851920368984);
\draw[line width=2pt,color=ffqqtt] (294.9958999999989,0.052341851920368984) -- (296.2459249999989,0.05169163807038331);
\draw[line width=2pt,color=ffqqtt] (296.2459249999989,0.05169163807038331) -- (297.4959499999989,0.05104950146709797);
\draw[line width=2pt,color=ffqqtt] (297.4959499999989,0.05104950146709797) -- (298.7459749999989,0.05041534177134877);
\draw[line width=2pt,color=ffqqtt] (298.7459749999989,0.05041534177134877) -- (299.9959999999989,0.049789059890429394);
\draw[line width=2pt,color=ffqqtt] (299.9959999999989,0.049789059890429394) -- (301.2460249999989,0.049170557962607365);
\draw[line width=2pt,color=ffqqtt] (301.2460249999989,0.049170557962607365) -- (302.4960499999989,0.04855973934183234);
\draw[line width=2pt,color=ffqqtt] (302.4960499999989,0.04855973934183234) -- (303.7460749999989,0.04795650858263436);
\draw[line width=2pt,color=ffqqtt] (303.7460749999989,0.04795650858263436) -- (304.9960999999989,0.047360771425209655);
\draw[line width=2pt,color=ffqqtt] (304.9960999999989,0.047360771425209655) -- (306.24612499999887,0.04677243478069186);
\draw[line width=2pt,color=ffqqtt] (306.24612499999887,0.04677243478069186) -- (307.49614999999886,0.04619140671660607);
\draw[line width=2pt,color=ffqqtt] (307.49614999999886,0.04619140671660607) -- (308.74617499999886,0.045617596442503586);
\draw[line width=2pt,color=ffqqtt] (308.74617499999886,0.045617596442503586) -- (309.99619999999885,0.04505091429577522);
\draw[line width=2pt,color=ffqqtt] (309.99619999999885,0.04505091429577522) -- (311.24622499999884,0.04449127172764073);
\draw[line width=2pt,color=ffqqtt] (311.24622499999884,0.04449127172764073) -- (312.49624999999884,0.04393858128931235);
\draw[line width=2pt,color=ffqqtt] (312.49624999999884,0.04393858128931235) -- (313.74627499999883,0.04339275661833019);
\draw[line width=2pt,color=ffqqtt] (313.74627499999883,0.04339275661833019) -- (314.9962999999988,0.04285371242506742);
\draw[line width=2pt,color=ffqqtt] (314.9962999999988,0.04285371242506742) -- (316.2463249999988,0.04232136447940296);
\draw[line width=2pt,color=ffqqtt] (316.2463249999988,0.04232136447940296) -- (317.4963499999988,0.041795629597559954);
\draw[line width=2pt,color=ffqqtt] (317.4963499999988,0.041795629597559954) -- (318.7463749999988,0.04127642562910753);
\draw[line width=2pt,color=ffqqtt] (318.7463749999988,0.04127642562910753) -- (319.9963999999988,0.040763671444124155);
\draw[line width=2pt,color=ffqqtt] (319.9963999999988,0.040763671444124155) -- (321.2464249999988,0.04025728692052039);
\draw[line width=2pt,color=ffqqtt] (321.2464249999988,0.04025728692052039) -- (322.4964499999988,0.03975719293151915);
\draw[line width=2pt,color=ffqqtt] (322.4964499999988,0.03975719293151915) -- (323.7464749999988,0.039263311333291515);
\draw[line width=2pt,color=ffqqtt] (323.7464749999988,0.039263311333291515) -- (324.9964999999988,0.03877556495274609);
\draw[line width=2pt,color=ffqqtt] (324.9964999999988,0.03877556495274609) -- (326.24652499999877,0.03829387757547003);
\draw[line width=2pt,color=ffqqtt] (326.24652499999877,0.03829387757547003) -- (327.49654999999876,0.03781817393381999);
\draw[line width=2pt,color=ffqqtt] (327.49654999999876,0.03781817393381999) -- (328.74657499999876,0.037348379695160895);
\draw[line width=2pt,color=ffqqtt] (328.74657499999876,0.037348379695160895) -- (329.99659999999875,0.03688442145025083);
\draw[line width=2pt,color=ffqqtt] (329.99659999999875,0.03688442145025083) -- (331.24662499999874,0.03642622670177024);
\draw[line width=2pt,color=ffqqtt] (331.24662499999874,0.03642622670177024) -- (332.49664999999874,0.03597372385299366);
\draw[line width=2pt,color=ffqqtt] (332.49664999999874,0.03597372385299366) -- (333.74667499999873,0.03552684219660209);
\draw[line width=2pt,color=ffqqtt] (333.74667499999873,0.03552684219660209) -- (334.9966999999987,0.03508551190363446);
\draw[line width=2pt,color=ffqqtt] (334.9966999999987,0.03508551190363446) -- (336.2467249999987,0.034649664012576124);
\draw[line width=2pt,color=ffqqtt] (336.2467249999987,0.034649664012576124) -- (337.4967499999987,0.034219230418583255);
\draw[line width=2pt,color=ffqqtt] (337.4967499999987,0.034219230418583255) -- (338.7467749999987,0.03379414386284075);
\draw[line width=2pt,color=ffqqtt] (338.7467749999987,0.03379414386284075) -- (339.9967999999987,0.03337433792205255);
\draw[line width=2pt,color=ffqqtt] (339.9967999999987,0.03337433792205255) -- (341.2468249999987,0.03295974699806241);
\draw[line width=2pt,color=ffqqtt] (341.2468249999987,0.03295974699806241) -- (342.4968499999987,0.032550306307603694);
\draw[line width=2pt,color=ffqqtt] (342.4968499999987,0.032550306307603694) -- (343.7468749999987,0.0321459518721764);
\draw[line width=2pt,color=ffqqtt] (343.7468749999987,0.0321459518721764) -- (344.9968999999987,0.03174662050805002);
\draw[line width=2pt,color=ffqqtt] (344.9968999999987,0.03174662050805002) -- (346.24692499999867,0.031352249816390565);
\draw[line width=2pt,color=ffqqtt] (346.24692499999867,0.031352249816390565) -- (347.49694999999866,0.030962778173510182);
\draw[line width=2pt,color=ffqqtt] (347.49694999999866,0.030962778173510182) -- (348.74697499999866,0.030578144721238012);
\draw[line width=2pt,color=ffqqtt] (348.74697499999866,0.030578144721238012) -- (349.99699999999865,0.030198289357410542);
\draw[line width=2pt,color=ffqqtt] (349.99699999999865,0.030198289357410542) -- (351.24702499999864,0.029823152726480177);
\draw[line width=2pt,color=ffqqtt] (351.24702499999864,0.029823152726480177) -- (352.49704999999864,0.029452676210240426);
\draw[line width=2pt,color=ffqqtt] (352.49704999999864,0.029452676210240426) -- (353.74707499999863,0.02908680191866632);
\draw[line width=2pt,color=ffqqtt] (353.74707499999863,0.02908680191866632) -- (354.9970999999986,0.028725472680868624);
\draw[line width=2pt,color=ffqqtt] (354.9970999999986,0.028725472680868624) -- (356.2471249999986,0.028368632036160434);
\draw[line width=2pt,color=ffqqtt] (356.2471249999986,0.028368632036160434) -- (357.4971499999986,0.02801622422523465);
\draw[line width=2pt,color=ffqqtt] (357.4971499999986,0.02801622422523465) -- (358.7471749999986,0.027668194181451222);
\draw[line width=2pt,color=ffqqtt] (358.7471749999986,0.027668194181451222) -- (359.9971999999986,0.027324487522232474);
\draw[line width=2pt,color=ffqqtt] (359.9971999999986,0.027324487522232474) -- (361.2472249999986,0.026985050540565377);
\draw[line width=2pt,color=ffqqtt] (361.2472249999986,0.026985050540565377) -- (362.4972499999986,0.026649830196609398);
\draw[line width=2pt,color=ffqqtt] (362.4972499999986,0.026649830196609398) -- (363.7472749999986,0.026318774109408583);
\draw[line width=2pt,color=ffqqtt] (363.7472749999986,0.026318774109408583) -- (364.9972999999986,0.02599183054870659);
\draw[line width=2pt,color=ffqqtt] (364.9972999999986,0.02599183054870659) -- (366.24732499999857,0.025668948426863434);
\draw[line width=2pt,color=ffqqtt] (366.24732499999857,0.025668948426863434) -- (367.49734999999856,0.025350077290872554);
\draw[line width=2pt,color=ffqqtt] (367.49734999999856,0.025350077290872554) -- (368.74737499999856,0.0250351673144772);
\draw[line width=2pt,color=ffqqtt] (368.74737499999856,0.0250351673144772) -- (369.99739999999855,0.02472416929038461);
\draw[line width=2pt,color=ffqqtt] (369.99739999999855,0.02472416929038461) -- (371.24742499999854,0.024417034622576984);
\draw[line width=2pt,color=ffqqtt] (371.24742499999854,0.024417034622576984) -- (372.49744999999854,0.024113715318717945);
\draw[line width=2pt,color=ffqqtt] (372.49744999999854,0.024113715318717945) -- (373.74747499999853,0.023814163982653343);
\draw[line width=2pt,color=ffqqtt] (373.74747499999853,0.023814163982653343) -- (374.9974999999985,0.023518333807005215);
\draw[line width=2pt,color=ffqqtt] (374.9974999999985,0.023518333807005215) -- (376.2475249999985,0.023226178565857736);
\draw[line width=2pt,color=ffqqtt] (376.2475249999985,0.023226178565857736) -- (377.4975499999985,0.022937652607534057);
\draw[line width=2pt,color=ffqqtt] (377.4975499999985,0.022937652607534057) -- (378.7475749999985,0.022652710847462808);
\draw[line width=2pt,color=ffqqtt] (378.7475749999985,0.022652710847462808) -- (379.9975999999985,0.02237130876113332);
\draw[line width=2pt,color=ffqqtt] (379.9975999999985,0.02237130876113332) -- (381.2476249999985,0.02209340237713826);
\draw[line width=2pt,color=ffqqtt] (381.2476249999985,0.02209340237713826) -- (382.4976499999985,0.021818948270302743);
\draw[line width=2pt,color=ffqqtt] (382.4976499999985,0.021818948270302743) -- (383.7476749999985,0.0215479035548988);
\draw[line width=2pt,color=ffqqtt] (383.7476749999985,0.0215479035548988) -- (384.9976999999985,0.021280225877944113);
\draw[line width=2pt,color=ffqqtt] (384.9976999999985,0.021280225877944113) -- (386.24772499999847,0.021015873412584013);
\draw[line width=2pt,color=ffqqtt] (386.24772499999847,0.021015873412584013) -- (387.49774999999846,0.02075480485155571);
\draw[line width=2pt,color=ffqqtt] (387.49774999999846,0.02075480485155571) -- (388.74777499999846,0.020496979400733636);
\draw[line width=2pt,color=ffqqtt] (388.74777499999846,0.020496979400733636) -- (389.99779999999845,0.020242356772755096);
\draw[line width=2pt,color=ffqqtt] (389.99779999999845,0.020242356772755096) -- (391.24782499999844,0.01999089718072498);
\draw[line width=2pt,color=ffqqtt] (391.24782499999844,0.01999089718072498) -- (392.49784999999844,0.01974256133199876);
\draw[line width=2pt,color=ffqqtt] (392.49784999999844,0.01974256133199876) -- (393.74787499999843,0.019497310422042675);
\draw[line width=2pt,color=ffqqtt] (393.74787499999843,0.019497310422042675) -- (394.9978999999984,0.019255106128370204);
\draw[line width=2pt,color=ffqqtt] (394.9978999999984,0.019255106128370204) -- (396.2479249999984,0.01901591060455386);
\draw[line width=2pt,color=ffqqtt] (396.2479249999984,0.01901591060455386) -- (397.4979499999984,0.01877968647431138);
\draw[line width=2pt,color=ffqqtt] (397.4979499999984,0.01877968647431138) -- (398.7479749999984,0.018546396825665338);
\draw[line width=2pt,color=ffqqtt] (398.7479749999984,0.018546396825665338) -- (399.9979999999984,0.018316005205175408);
\draw[line width=2pt,color=ffqqtt] (399.9979999999984,0.018316005205175408) -- (401.2480249999984,0.018088475612242135);
\draw[line width=2pt,color=ffqqtt] (401.2480249999984,0.018088475612242135) -- (402.4980499999984,0.01786377249348163);
\draw[line width=2pt,color=ffqqtt] (402.4980499999984,0.01786377249348163) -- (403.7480749999984,0.01764186073716996);
\draw[line width=2pt,color=ffqqtt] (403.7480749999984,0.01764186073716996) -- (404.9980999999984,0.017422705667756704);
\draw[line width=2pt,color=ffqqtt] (404.9980999999984,0.017422705667756704) -- (406.24812499999837,0.017206273040446658);
\draw[line width=2pt,color=ffqqtt] (406.24812499999837,0.017206273040446658) -- (407.49814999999836,0.016992529035848684);
\draw[line width=2pt,color=ffqqtt] (407.49814999999836,0.016992529035848684) -- (408.74817499999835,0.01678144025469128);
\draw[line width=2pt,color=ffqqtt] (408.74817499999835,0.01678144025469128) -- (409.99819999999835,0.01657297371260354);
\draw[line width=2pt,color=ffqqtt] (409.99819999999835,0.01657297371260354) -- (411.24822499999834,0.01636709683496121);
\draw[line width=2pt,color=ffqqtt] (411.24822499999834,0.01636709683496121) -- (412.49824999999834,0.016163777451796483);
\draw[line width=2pt,color=ffqqtt] (412.49824999999834,0.016163777451796483) -- (413.74827499999833,0.015962983792771328);
\draw[line width=2pt,color=ffqqtt] (413.74827499999833,0.015962983792771328) -- (414.9982999999983,0.01576468448221298);
\draw[line width=2pt,color=ffqqtt] (414.9982999999983,0.01576468448221298) -- (416.2483249999983,0.01556884853421129);
\draw[line width=2pt,color=ffqqtt] (416.2483249999983,0.01556884853421129) -- (417.4983499999983,0.015375445347776965);
\draw[line width=2pt,color=ffqqtt] (417.4983499999983,0.015375445347776965) -- (418.7483749999983,0.015184444702059801);
\draw[line width=2pt,color=ffqqtt] (418.7483749999983,0.015184444702059801) -- (419.9983999999983,0.01499581675162654);
\draw[line width=2pt,color=ffqqtt] (419.9983999999983,0.01499581675162654) -- (421.2484249999983,0.014809532021797163);
\draw[line width=2pt,color=ffqqtt] (421.2484249999983,0.014809532021797163) -- (422.4984499999983,0.014625561404039348);
\draw[line width=2pt,color=ffqqtt] (422.4984499999983,0.014625561404039348) -- (423.7484749999983,0.014443876151419899);
\draw[line width=2pt,color=ffqqtt] (423.7484749999983,0.014443876151419899) -- (424.9984999999983,0.014264447874112874);
\draw[line width=2pt,color=ffqqtt] (424.9984999999983,0.014264447874112874) -- (426.24852499999827,0.014087248534963447);
\draw[line width=2pt,color=ffqqtt] (426.24852499999827,0.014087248534963447) -- (427.49854999999826,0.013912250445106801);
\draw[line width=2pt,color=ffqqtt] (427.49854999999826,0.013912250445106801) -- (428.74857499999825,0.013739426259641611);
\draw[line width=2pt,color=ffqqtt] (428.74857499999825,0.013739426259641611) -- (429.99859999999825,0.013568748973357065);
\draw[line width=2pt,color=ffqqtt] (429.99859999999825,0.013568748973357065) -- (431.24862499999824,0.013400191916513189);
\draw[line width=2pt,color=ffqqtt] (431.24862499999824,0.013400191916513189) -- (432.49864999999824,0.013233728750673378);
\draw[line width=2pt,color=ffqqtt] (432.49864999999824,0.013233728750673378) -- (433.74867499999823,0.013069333464588887);
\draw[line width=2pt,color=ffqqtt] (433.74867499999823,0.013069333464588887) -- (434.9986999999982,0.012906980370134283);
\draw[line width=2pt,color=ffqqtt] (434.9986999999982,0.012906980370134283) -- (436.2487249999982,0.01274664409829349);
\draw[line width=2pt,color=ffqqtt] (436.2487249999982,0.01274664409829349) -- (437.4987499999982,0.012588299595195713);
\draw[line width=2pt,color=ffqqtt] (437.4987499999982,0.012588299595195713) -- (438.7487749999982,0.012431922118200477);
\draw[line width=2pt,color=ffqqtt] (438.7487749999982,0.012431922118200477) -- (439.9987999999982,0.012277487232031476);
\draw[line width=2pt,color=ffqqtt] (439.9987999999982,0.012277487232031476) -- (441.2488249999982,0.012124970804958272);
\draw[line width=2pt,color=ffqqtt] (441.2488249999982,0.012124970804958272) -- (442.4988499999982,0.011974349005025598);
\draw[line width=2pt,color=ffqqtt] (442.4988499999982,0.011974349005025598) -- (443.7488749999982,0.011825598296329329);
\draw[line width=2pt,color=ffqqtt] (443.7488749999982,0.011825598296329329) -- (444.9988999999982,0.011678695435338891);
\draw[line width=2pt,color=ffqqtt] (444.9988999999982,0.011678695435338891) -- (446.24892499999817,0.011533617467265193);
\draw[line width=2pt,color=ffqqtt] (446.24892499999817,0.011533617467265193) -- (447.49894999999816,0.011390341722473779);
\draw[line width=2pt,color=ffqqtt] (447.49894999999816,0.011390341722473779) -- (448.74897499999815,0.011248845812942544);
\draw[line width=2pt,color=ffqqtt] (448.74897499999815,0.011248845812942544) -- (449.99899999999815,0.011109107628763347);
\draw[line width=2pt,color=ffqqtt] (449.99899999999815,0.011109107628763347) -- (451.24902499999814,0.010971105334687233);
\draw[line width=2pt,color=ffqqtt] (451.24902499999814,0.010971105334687233) -- (452.49904999999814,0.010834817366712421);
\draw[line width=2pt,color=ffqqtt] (452.49904999999814,0.010834817366712421) -- (453.74907499999813,0.010700222428714825);
\draw[line width=2pt,color=ffqqtt] (453.74907499999813,0.010700222428714825) -- (454.9990999999981,0.010567299489120286);
\draw[line width=2pt,color=ffqqtt] (454.9990999999981,0.010567299489120286) -- (456.2491249999981,0.010436027777618255);
\draw[line width=2pt,color=ffqqtt] (456.2491249999981,0.010436027777618255) -- (457.4991499999981,0.010306386781916272);
\draw[line width=2pt,color=ffqqtt] (457.4991499999981,0.010306386781916272) -- (458.7491749999981,0.010178356244534701);
\draw[line width=2pt,color=ffqqtt] (458.7491749999981,0.010178356244534701) -- (459.9991999999981,0.010051916159641395);
\draw[line width=2pt,color=ffqqtt] (459.9991999999981,0.010051916159641395) -- (461.2492249999981,0.009927046769925545);
\draw[line width=2pt,color=ffqqtt] (461.2492249999981,0.009927046769925545) -- (462.4992499999981,0.009803728563510517);
\draw[line width=2pt,color=ffqqtt] (462.4992499999981,0.009803728563510517) -- (463.7492749999981,0.00968194227090488);
\draw[line width=2pt,color=ffqqtt] (463.7492749999981,0.00968194227090488) -- (464.9992999999981,0.009561668861991468);
\draw[line width=2pt,color=ffqqtt] (464.9992999999981,0.009561668861991468) -- (466.24932499999807,0.009442889543053692);
\draw[line width=2pt,color=ffqqtt] (466.24932499999807,0.009442889543053692) -- (467.49934999999806,0.009325585753838904);
\draw[line width=2pt,color=ffqqtt] (467.49934999999806,0.009325585753838904) -- (468.74937499999805,0.009209739164658226);
\draw[line width=2pt,color=ffqqtt] (468.74937499999805,0.009209739164658226) -- (469.99939999999805,0.009095331673522321);
\draw[line width=2pt,color=ffqqtt] (469.99939999999805,0.009095331673522321) -- (471.24942499999804,0.008982345403312879);
\draw[line width=2pt,color=ffqqtt] (471.24942499999804,0.008982345403312879) -- (472.49944999999803,0.008870762698989103);
\draw[line width=2pt,color=ffqqtt] (472.49944999999803,0.008870762698989103) -- (473.74947499999803,0.008760566124829025);
\draw[line width=2pt,color=ffqqtt] (473.74947499999803,0.008760566124829025) -- (474.999499999998,0.008651738461704965);
\draw[line width=2pt,color=ffqqtt] (474.999499999998,0.008651738461704965) -- (476.249524999998,0.008544262704392968);
\draw[line width=2pt,color=ffqqtt] (476.249524999998,0.008544262704392968) -- (477.499549999998,0.008438122058915535);
\draw[line width=2pt,color=ffqqtt] (477.499549999998,0.008438122058915535) -- (478.749574999998,0.008333299939917459);
\draw[line width=2pt,color=ffqqtt] (478.749574999998,0.008333299939917459) -- (479.999599999998,0.008229779968074231);
\draw[line width=2pt,color=ffqqtt] (479.999599999998,0.008229779968074231) -- (481.249624999998,0.008127545967532606);
\draw[line width=2pt,color=ffqqtt] (481.249624999998,0.008127545967532606) -- (482.499649999998,0.00802658196338303);
\draw[line width=2pt,color=ffqqtt] (482.499649999998,0.00802658196338303) -- (483.749674999998,0.007926872179163377);
\draw[line width=2pt,color=ffqqtt] (483.749674999998,0.007926872179163377) -- (484.999699999998,0.007828401034393808);
\draw[line width=2pt,color=ffqqtt] (484.999699999998,0.007828401034393808) -- (486.24972499999797,0.007731153142142139);
\draw[line width=2pt,color=ffqqtt] (486.24972499999797,0.007731153142142139) -- (487.49974999999796,0.007635113306619526);
\draw[line width=2pt,color=ffqqtt] (487.49974999999796,0.007635113306619526) -- (488.74977499999795,0.007540266520806015);
\draw[line width=2pt,color=ffqqtt] (488.74977499999795,0.007540266520806015) -- (489.99979999999795,0.007446597964105529);
\draw[line width=2pt,color=ffqqtt] (489.99979999999795,0.007446597964105529) -- (491.24982499999794,0.007354093000030077);
\draw[line width=2pt,color=ffqqtt] (491.24982499999794,0.007354093000030077) -- (492.49984999999793,0.00726273717391263);
\draw[line width=2pt,color=ffqqtt] (492.49984999999793,0.00726273717391263) -- (493.7498749999979,0.00717251621064851);
\draw[line width=2pt,color=ffqqtt] (493.7498749999979,0.00717251621064851) -- (494.9998999999979,0.007083416012464737);
\draw[line width=2pt,color=ffqqtt] (494.9998999999979,0.007083416012464737) -- (496.2499249999979,0.006995422656717185);
\draw[line width=2pt,color=ffqqtt] (496.2499249999979,0.006995422656717185) -- (497.4999499999979,0.006908522393715011);
\draw[line width=2pt,color=ffqqtt] (497.4999499999979,0.006908522393715011) -- (498.7499749999979,0.006822701644572169);
\draw[line width=2pt,color=ffqqtt] (498.7499749999979,0.006822701644572169) -- (499.9999999999979,0.006737946999085613);
\begin{scriptsize}
\draw[color=ffqqtt] (42.233548387096775,0.9016509433962259) node {$\sigma^2$};
\end{scriptsize}
\end{axis}
\end{tikzpicture}
\caption{Policy variance as a function of optimization time.}
\label{fig-pg-asymp}
\end{subfigure}
\caption{Policy Gradient on a bandit problem across optimization time.}
\end{figure}

To provide additional intuition about Lemma \ref{lem-hexplore}, we now analyze how Gaussian Policy Gradients behave across optimization time. Consider a simple bandit task shown in Figure \ref{fig-cb-def}, where there is only one state and the critic is defined by $\hat{Q}(a) = -a^2$. Applying equation \eqref{eq-cov-gpg}, we have that, as the policy gradient update is applied again and again, the variance evolves according to the equation $\dot{\sigma} = - \sigma$. Assuming the initial value of $\sigma$ is 1, the standard deviation at time $t$ is given by $e^{-t}$. In this equation, $t$ is optimization time, which is external to the bandit problem. 

Figure \ref{fig-pg-asymp} plots the standard deviation as a function of optimization time. The plot confirms the intuition that policy gradient algorithms try to find the policy with the maximum expected value of $\hat{Q}$. In particular, $\sigma^2$ decays to zero because the critic $\hat{Q}$ has one maximum $a=0$, where the optimal policy puts all the probability mass. This is also consistent with Lemma \ref{lem-hexplore}, which corresponds to fixing $t=1$. 

\subsection{Action Clipping}
\label{sec-gpg-clip}
We now describe how GPG works in environments where the action space has bounded support\footnote{We assume without loss of generality that the support interval is $[0,1]$.}. This setting occurs frequently in practice, since real-world systems often have physical constraints such as a bound on how fast a robot arm can accelerate. The typical solution to this problem is simply to start with a policy $\pi_b$ with unbounded support and then, when an action is to be taken, clip it to the desired range, so that sampling an action 
\begin{gather*}
  a \sim \pi (a \mid s) \quad \text{is equivalent to} \quad a = \max(\min(b,1),0) \;\; \text{with} \;\; b \sim \pi_b(b \mid s).
\end{gather*}
The justification for this process is that we can simply treat the clipping operation $\max(\min(b,1),0)$ as part of the environment specification. Formally, this means that we transform the original MDP $M$ defined as $M = (S,A,R_d,p,p_0,\gamma)$ with $A=[0,1]^d$ into another MDP $M' = (S,A',R_d',p',p_0,\gamma)$, where $A' = \mathbb{R}^d$ and $p'$ and $R_d'$ are defined as
\[
p'(s' \vert b, s) \; = \; p \left (s' \vert \max(\min(b,1),0), s \right ) \quad \text{and} \quad R_d'(r \vert b, s) \; = \; R_d \left (r \vphantom{s'} \vert \max(\min(b,1),0), s \right) \!.
\]
Since $M'$ has an unbounded action space, we can use the RL machinery for unbounded actions to solve it. Since any MDP is guaranteed to have an optimal deterministic policy, we call this deterministic solution $\pi_D^\star : S \rightarrow A$. Now, $\pi_D^\star$ can be transformed into a policy for $M$ of the form $ \max(\min(\pi_D^\star(s),1),0)$. In practice, the MDP $M'$ is never constructed explicitly---the described process in equivalent to using an RL algorithm meant for $A = \mathbb{R}^d$ and then, when the action is generated, simply clipping it (Algorithm \ref{alg-epg-clip}).

\begin{algorithm}[t]
\begin{algorithmic}[1]
 \State $s \gets s_0$, $t \gets 0$
 \State initialize optimizer, initialize policy $\pi$ parameterized by $\theta$
\While{not converged}
 \label{line-rep-pg} \State $g_t \gets \gamma^t$ \textsc{do-integral}($\hat{Q_b}, s, \pi_\theta $)
 \State $\theta \gets  \theta \; + \;  $optimizer.\textsc{update}$(g_t) $
 \State $b \sim \pi(\cdot \mid s)$
 \State $a = c(b)$ \Comment{Clipping function $c(b) = \max(\min(\pi_D^\star(s),1),0)$.}
 \State $s',r \gets $ simulator.\textsc{perform-action}(a)
 \State $\hat{Q_b}$.\textsc{update}($s,b,r,s'$) \Comment{Update using the unclipped action $b$.}
 \State $t \gets t + 1$
 \State $s \gets s'$
\EndWhile

\end{algorithmic}
\caption{ Policy gradients with clipped actions. } \label{alg-epg-clip}
\end{algorithm}

However, while such an algorithm does not introduce new bias in the sense that reward obtained in $M$ and $M'$ will be the same, it can lead to problems with slow convergence in the policy gradient settings. To see why, consider the one-dimensional example in Figure \ref{fig-vanishing-gradient}, where the policy mean is located far away from the clipping boundary. This can arise due to a combination of random initialization of the policy network and generalization error across states. 

\begin{figure}[tb]
    \centering
        \begingroup%
  \makeatletter%
  \providecommand\color[2][]{%
    \errmessage{(Inkscape) Color is used for the text in Inkscape, but the package 'color.sty' is not loaded}%
    \renewcommand\color[2][]{}%
  }%
  \providecommand\transparent[1]{%
    \errmessage{(Inkscape) Transparency is used (non-zero) for the text in Inkscape, but the package 'transparent.sty' is not loaded}%
    \renewcommand\transparent[1]{}%
  }%
  \providecommand\rotatebox[2]{#2}%
  \ifx\svgwidth\undefined%
    \setlength{\unitlength}{361.29314756bp}%
    \ifx\svgscale\undefined%
      \relax%
    \else%
      \setlength{\unitlength}{\unitlength * \real{\svgscale}}%
    \fi%
  \else%
    \setlength{\unitlength}{\svgwidth}%
  \fi%
  \global\let\svgwidth\undefined%
  \global\let\svgscale\undefined%
  \makeatother%
  \begin{picture}(1,0.38305564)%
    \put(0,0){\includegraphics[width=\unitlength,page=1]{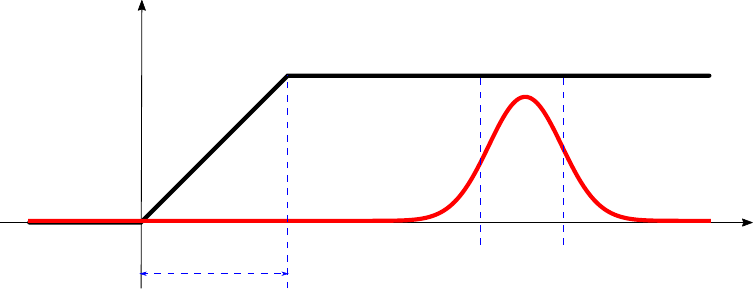}}%
    \put(0.2161753,0.19420357){\color[rgb]{0,0,0}\makebox(0,0)[lb]{\smash{$g(b)$}}}%
    \put(0.94368927,0.06069099){\color[rgb]{0,0,0}\makebox(0,0)[lb]{\smash{$b$}}}%
    \put(0.74873134,0.19200982){\color[rgb]{0,0,0}\makebox(0,0)[lb]{\smash{$\pi$}}}%
    \put(0.19618225,0.02937654){\color[rgb]{0,0,0}\makebox(0,0)[lb]{\smash{$\{b : g'(b) \neq 0\}$}}}%
    \put(0.74723613,0.06203558){\color[rgb]{0,0,0}\makebox(0,0)[lb]{\smash{$b_2$}}}%
    \put(0.63713033,0.06102545){\color[rgb]{0,0,0}\makebox(0,0)[lb]{\smash{$b_1$}}}%
    \put(0.38267881,0.06150414){\color[rgb]{0,0,0}\makebox(0,0)[lb]{\smash{$b_L$}}}%
  \end{picture}%
\endgroup%

        \caption{Vanishing gradients when using hard clipping. The agent cannot determine whether $b$ is too small or too large from $b_1$ and $b_2$ alone. It is necessary to sample from the interval $\{b : g'(b) \neq 0\}$ in order to obtain a meaningful policy update but this is  unlikely for the current policy (shown as the red curve). }
        \label{fig-vanishing-gradient}
\end{figure}

With hard clipping, the agent cannot distinguish between $b_1$ and $b_2$ since squashing reduces them both to the same value, i.e., $g(b_1) = g(b_2)$. Hence, the corresponding $Q$ values are identical and, based on trajectories using $b_1$ and $b_2$, there is no way of knowing how the mean of the policy should be adjusted. In order to get a useful gradient that moves the distribution into the interval $[0, b_L]$, a sample $b_\star < b_L$ has to be chosen. Since the $b$'s are samples from a Gaussian with infinite support, it will eventually happen, yielding a nonzero gradient. However, if this interval falls into a distant part of the tail of $\pi_b$, convergence will be slow.

\begin{figure}[tb]
    \centering
        \begingroup%
  \makeatletter%
  \providecommand\color[2][]{%
    \errmessage{(Inkscape) Color is used for the text in Inkscape, but the package 'color.sty' is not loaded}%
    \renewcommand\color[2][]{}%
  }%
  \providecommand\transparent[1]{%
    \errmessage{(Inkscape) Transparency is used (non-zero) for the text in Inkscape, but the package 'transparent.sty' is not loaded}%
    \renewcommand\transparent[1]{}%
  }%
  \providecommand\rotatebox[2]{#2}%
  \ifx\svgwidth\undefined%
    \setlength{\unitlength}{361.29314756bp}%
    \ifx\svgscale\undefined%
      \relax%
    \else%
      \setlength{\unitlength}{\unitlength * \real{\svgscale}}%
    \fi%
  \else%
    \setlength{\unitlength}{\svgwidth}%
  \fi%
  \global\let\svgwidth\undefined%
  \global\let\svgscale\undefined%
  \makeatother%
  \begin{picture}(1,0.38305564)%
    \put(0,0){\includegraphics[width=\unitlength,page=1]{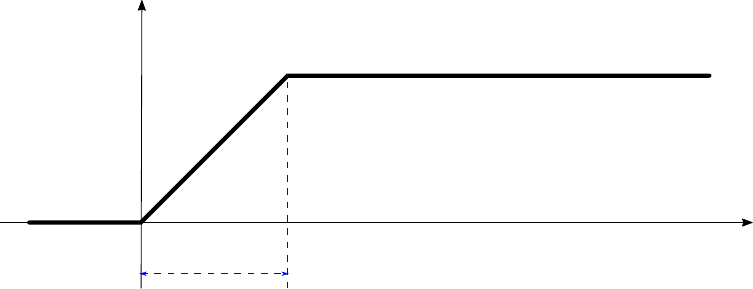}}%
    \put(0.2161753,0.19420357){\color[rgb]{0,0,0}\makebox(0,0)[lb]{\smash{$g(b)$}}}%
    \put(0.94368927,0.06069099){\color[rgb]{0,0,0}\makebox(0,0)[lb]{\smash{$b$}}}%
    \put(0.19618225,0.02937654){\color[rgb]{0,0,0}\makebox(0,0)[lb]{\smash{$\{b : g'(b) \neq 0\}$}}}%
    \put(0.38267881,0.06150414){\color[rgb]{0,0,0}\makebox(0,0)[lb]{\smash{$b_L$}}}%
    \put(0,0){\includegraphics[width=\unitlength,page=2]{gpg-clipping.pdf}}%
  \end{picture}%
\endgroup%

        \caption{GPG avoids the vanishing gradient problem. Once a policy with a small variance (denoted in red) enters the flat area where $ b > b_L$, exploration noise is set to $\sigma_0$ (the new distribution is in blue). }
        \label{fig-clipped-gradient-gpg}
\end{figure}

GPG mitigates this problem because it always explores with standard deviation $\sigma_0$ in the flat region. This is shown in Figure \ref{fig-clipped-gradient-gpg}. For actions $b > b_L$, the critic is constant. Since a constant critic has a zero Hessian, the standard deviation of the policy is set to $\sigma_0 e^0 = \sigma_0$, which makes it likely that a point $b < b_L$ is sampled and a useful gradient is obtained for typical values of $\sigma_0$. In contrast, consider the case where stochastic policy gradients were used to tune the standard deviation. It follows from setting $A(s) = 0$ in \eqref{eq-cov-gpg}) that for SPG, the policy gradient would be zero in expectation. Then, if the policy standard deviation is initialized to a small value, it likely to to remain stuck and take prohibitively long to leave the plateau. Another way of mitigating the hard clipping problem is to use a differentiable squashing function, which we describe in Section \ref{sec-expfamily}.

\subsection{Quadratic Critics and their Approximations}
Gaussian policy gradients require a quadratic critic \emph{given the state}. This assumption, which is different from assuming a quadratic dependency \emph{on the state}, is typically sufficient for two reasons. First, discrete-time linear quadratic regulators (LQR) with time-varying feedback, a class of problems widely studied in classical control\footnote{Indeed, the Hessian discussed in Section \ref{ss-hessian-exp} can be considered a type of reward model. } theory, are known to have a $Q$-function that is quadratic in the action vector given the state 
(\citealt[Equation 1]{bradtkeReinforcementLearningApplied1993};
\citealt[Equation 10]{tenhagens.h.g.LinearQuadraticRegulation1998};
\citealt{petersPolicyGradientMethods2003};
\citealt[Equation 8.81]{crassidis2011optimal}). Second, it is often assumed \citep{li2004iterative} that a quadratic critic (or a quadratic approximation to a general critic) is enough to capture enough local structure to perform a policy optimization step, in much the same way as Newton's method for deterministic unconstrained optimization, which locally approximates a function with a quadratic, can be used to optimize a non-quadratic function across several iterations. In Corollary \ref{lem-agpg} below, we describe such an approximation method applied to GPG where we approximate $Q$ with a quadratic function in the neighborhood of the policy mean.
\begin{corollary}[Approximate Gaussian Policy Gradients with Arbitrary Critic]
\label{lem-agpg} 
\ \\
If the policy is Gaussian, i.e. $\pi(\cdot \vert s) \sim \mathcal{N}(\mu_s,\Sigma^{1/2}_s)$ with $\mu_s$ and $\Sigma^{1/2}_s$ parameterized by $\theta$ as in Lemma \ref{lem-gpg} and any critic $\hat{Q}(a,s)$ doubly differentiable with respect to actions for each state, then $ I^{\hat{Q}}_{\pi(s), \mu_s} \approx (\nabla_\theta  \mu_s) \nabla_a {\hat{Q}}(a = \mu_s, s) $ and $I^{\hat{Q}}_{\pi(s), \Sigma^{1/2}_s} \approx (\nabla_\theta  \Sigma^{1/2}_s) H(\mu_s, s) \Sigma^{1/2}_s$, where $H(\mu_s, s)$ is the Hessian of ${\hat{Q}}$ with respect to $a$, evaluated at $\mu_s$ for a fixed $s$.
\end{corollary}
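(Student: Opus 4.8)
The plan is to reduce the general case to the quadric case of Lemma~\ref{lem-gpg} by locally approximating the critic. Fix a state $s$ and Taylor-expand $Q$ to second order about the current policy mean $a=\mu_s$:
\[
\tilde{Q}(a,s) = Q(\mu_s,s) + (a-\mu_s)^\top \nabla_a Q(a=\mu_s,s) + \tfrac12 (a-\mu_s)^\top H(\mu_s,s)(a-\mu_s),
\]
which is well defined since $Q$ is twice differentiable in $a$. Collecting powers of $a$, $\tilde{Q}$ has exactly the quadric form $a^\top A(s) a + a^\top B(s) + \mathrm{const}(s)$ required by Lemma~\ref{lem-gpg}, with quadratic coefficient $A(s) = \tfrac12 H(\mu_s,s)$ and linear coefficient $B(s) = \nabla_a Q(a=\mu_s,s) - H(\mu_s,s)\mu_s$. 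A point worth stressing up front is that $I^Q_\pi(s)$ differentiates only $\log\pi$, never the critic; hence, although $A(s)$ and $B(s)$ depend on $\theta$ through $\mu_s$, they are held fixed throughout this computation, so $\tilde{Q}$ is a legitimate quadric critic and Lemma~\ref{lem-gpg} applies verbatim.

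I would then read the two components off Lemma~\ref{lem-gpg}. The state-only constant acts as a baseline and integrates to zero. For the mean, the lemma returns $(\nabla\mu_s)$ times the action-gradient of the quadric at its expansion point; since $\nabla_a\tilde{Q}(a,s) = \nabla_a Q(a=\mu_s,s) + H(\mu_s,s)(a-\mu_s)$ collapses to $\nabla_a Q(a=\mu_s,s)$ at $a=\mu_s$, this gives $I^{\tilde{Q}}_{\pi(s),\mu_s} = (\nabla\mu_s)\nabla_a Q(a=\mu_s,s)$. For the covariance, the lemma's answer depends only on the quadratic coefficient via~\eqref{eq-cov-gpg}; because $\tilde{Q}$ has constant Hessian $H(\mu_s,s)$, substitution yields $I^{\tilde{Q}}_{\pi(s),\Sigma^{1/2}_s} = (\nabla\Sigma^{1/2}_s)\Sigma^{1/2}_s H(\mu_s,s)$. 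The $\approx$ in the statement is precisely the replacement of $Q$ by $\tilde{Q}$: the discrepancy is the EPG integral of the Taylor remainder $Q-\tilde{Q}$ against $\pi$, which vanishes identically when $Q$ is quadric and, in general, shrinks as $\Sigma_s\to 0$, since odd central Gaussian moments vanish and the leading correction pairs the third (and higher) action-derivatives of $Q$ with central moments of $\pi$.

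The main obstacle is bookkeeping rather than analysis. First, one must justify freezing the $\theta$-dependent coefficients $A(s),B(s)$, i.e. that it is correct to model the critic locally and let $\nabla$ act only on $\log\pi$; this mirrors the Newton / iLQR viewpoint discussed before the corollary and is what legitimises invoking Lemma~\ref{lem-gpg}. Second, one must track the factor conventions carefully so that the quadratic coefficient $\tfrac12 H$ reproduces the full Hessian $H$ in the covariance term while the combined linear-plus-quadratic gradient reproduces $\nabla_a Q(a=\mu_s,s)$ exactly at $a=\mu_s$ (the $H(\mu_s,s)\mu_s$ contributions cancel). Making the remainder bound fully quantitative would require controlling third-order action-derivatives of $Q$ over the support of $\pi$, but since the corollary only claims $\approx$, the qualitative remainder argument above is all that is needed.
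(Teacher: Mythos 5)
Your proposal is correct and follows essentially the same route as the paper's proof: a second-order Taylor expansion of $Q$ about $\mu_s$, identification of the quadric coefficients $A(s)=\tfrac12 H(\mu_s,s)$ and $B(s)=\nabla_a Q(a=\mu_s,s)-H(\mu_s,s)\mu_s$, and an application of Lemma \ref{lem-gpg} in which the $H(\mu_s,s)\mu_s$ terms cancel in the mean update and the factor of $2$ restores the full Hessian in the covariance update. Your additional remarks on freezing the $\theta$-dependence of the coefficients and on the Taylor remainder go slightly beyond what the paper writes down, but do not change the argument.
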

\begin{proof}
We begin by approximating the critic (for a given $s$) using the first two terms of the Taylor expansion of ${\hat{Q}}$ in $\mu_s$. This allows us to approximate the critic as
\begin{align*}
    \textstyle {\hat{Q}}(a,s) &  \textstyle \approx {\hat{Q}}(\mu_s, s) + (a - \mu_s)^\top \atmus{\nabla_a {\hat{Q}} (a , s)} + \frac{1}{2} (a - \mu_s)^\top H(\mu_s, s) (a - \mu_s) \\
    & \textstyle = \frac{1}{2} a^\top H(\mu_s, s) a + a^\top \left( \atmus{\nabla_a {\hat{Q}} (a, s)} - H(\mu_s, s) \mu_s \right)  + \text{const}_s.
\end{align*}
We used the notation $\text{const}_s$ to denote a term constant in the action (but dependent on the state). Because of the series truncation, the function on the right-hand side is quadratic and we can then use Lemma \ref{lem-gpg}, obtaining
\begin{align*}
    \textstyle I^{\hat{Q}}_{\pi(s), \mu_s} &\textstyle = \nabla_\theta  \mu_s (2 \frac{1}{2}  H(\mu_s, s) \mu_s + \atmus{\nabla_a {\hat{Q}} (a, s)} - H(\mu_s, s) \mu_s) \\
    &\textstyle = \nabla_\theta  \mu_s \atmus{ \nabla_a {\hat{Q}} (a, s)}, \;\;\text{and}  \\
    \textstyle I^{\hat{Q}}_{\pi(s), \Sigma^{1/2}_s} &\textstyle = \nabla_{\Sigma^{1/2}_s} (2 \frac{1}{2} H(\mu_s, s) \Sigma^{1/2}_s) = \nabla_{\Sigma^{1/2}_s}  H(\mu_s, s) \Sigma^{1/2}_s.
\end{align*} 
\end{proof}
To actually obtain the Hessian, we could use automatic differentiation to compute it analytically. Sometimes this may not be possible---for example when ReLU units are used, the Hessian is always zero or undefined. In these cases, we can approximate the Hessian by generating a number of random action-values around $\mu_s$, computing the ${\hat{Q}}$ values, and (locally) fitting a quadratic, akin to sigma-point methods in control \citep{roth2016nonlinear}.

\section{Universal Expected Policy Gradients}
\label{sec-expfamily}
Having covered the most common case of continuous Gaussian policies, we now extend the analysis to other policy classes. We provide two cases of such results in the following sections: exponential family policies with multivariate polynomial critics (of arbitrary order) and arbitrary policies (possessing a mean) with linear critics. Our main claim is that an analytic solution to the EPG integral is possible for almost any system; hence we describe EPG as a \emph{universal} method.\footnote{Of course no method can be truly universal for a \emph{completely arbitrary} problem. Our claim is that EPG is universal \emph{for the class of systems} arising from lemmas in this section. However, this class is so broad that we feel the term `universal` is justified. This is similar to the claim that neural networks based on sigmoid nonlinearities are universal, even though then can only approximate continuous functions, as opposed to completely arbitrary ones. }

\subsection{Exponential Family Policies and Polynomial Critics}
\label{sec-universal}
We now describe a general technique to obtain analytic EPG updates for the case when the policy belongs to a certain exponential family and the critic is an arbitrary polynomial. This result is significant since polynomials can approximate any continuous function on a bounded interval with arbitrary accuracy \citep{weierstrass1885analytische, stone1948generalized}. Since our result holds for a nontrivial class of distributions in the exponential family, it implies that analytic solutions for EPG can almost always be obtained in practice and hence that the Monte Carlo sampling to estimate the inner integral that is typical in SPG is not necessary in many cases.

\begin{restatable}[EPG for Exponential Families w. Polynomial Sufficient Statistics]{lmm}{epgexpfamily}\label{lem-epg-ef}\ \\
Consider the class of policies parameterized by $\theta$ defined by the formula
\begin{gather*}
\label{pc-exp-fam-c}
\pi( a \mid s) = e^{(\est)^\top T^s(a) - U^s_{\est} + W^s(a)},
\end{gather*}
where each entry in the vector $T^s(a)$ is a (possibly multivariate) polynomial in the entries of the vector $a$. Moreover, assume that the critic $\hat{Q}(a, s)$ is (a possibly multivariate) polynomial in the entries of $a$. Then, the policy gradient update is a closed form expression in terms of the uncentered moments of $\pi(\cdot \mid s)$ and can be written as
\begin{gather*}
\label{poly-epg-cf}
I^Q_\pi(s) = (\nabla_\theta  (\est)^\top) ((C^s_{TQ})^\top m_\pi) - (\nabla_\theta  U^s_{\est}) ((C^s_{Q})^\top m_\pi),
\end{gather*}
where $C^s_Q$ is the vector containing the coefficients of the polynomial ${\hat{Q}}(\cdot,s)$, $C^s_{TQ}$ is the vector containing the coefficients of the polynomial $T^s(a) {\hat{Q}}(a, s)$, i.e., a multiplication of $T^s$ and ${\hat{Q}}(a, s)$. Moreover, $m_\pi$ is a vector of uncentered moments of $\pi$ (in the order matching the polynomials).
\end{restatable}

The lemma is proved in the Appendix. The cross-moments themselves can be obtained from the \emph{moment generating function} (MGF) of $\pi$. Indeed, for a distribution of the form of \eqref{pc-exp-fam-c}, the MGF of $T^s(a)$ is guaranteed to exist and has a closed form \citep{Bickel:2006aa}. Hence, the computation of the moments reduces to the computation of derivatives. See details in Appendix \ref{ef-moments}.

In Lemma \ref{lem-epg-ef}, the assumption that $T^s$ and ${\hat{Q}}(a,s)$ are polynomial is with respect to the action $a$. The dependence on the state can be arbitrary, e.g., a multi-layered neural network.

Of course, while polynomials are universal approximators, they may not be the most efficient or stable ones. The importance of Lemma \ref{lem-epg-ef} is currently mainly conceptual---analytic EPG is possible for a universal class of approximators (polynomials) which shows that EPG is analytically tractable in principle for any continuous $Q$-function.\footnote{The universality of polynomials holds only for bounded intervals \citep{weierstrass1885analytische}, while the support of the policy may be unbounded. We do not address the unbounded approximation case here other than by saying that, in practice, the critic is learned from samples and is thus typically only accurate on a bounded interval anyway.} It is an open research question whether more suitable universal approximators admitting analytic EPG solutions can be identified.

\subsection{Reparameterized Exponential Families and Reparameterized Critics}
\label{sec-repar}
In Lemma \ref{lem-epg-ef}, we assumed that the function $T^s(a)$ (called the sufficient statistic of the exponential family) is polynomial. We now relax this assumption. Our approach is to start with a policy $\pi_b$ which \emph{does} have a polynomial sufficient statistic and then introduce a suitable reparameterization function $g: \mathbb{R}^d \rightarrow A$. The policy is then defined so that sampling an action
\[
  a \sim \pi (a \mid s) \quad \text{is equivalent to} \quad a = g(b) \;\; \text{with} \;\; b \sim \pi_b(b \mid s) = e^{(\est)^\top T^s(b) - U^s_{\est} + W^s(b)},
\]
where $b$ is the random variable representing the action before the squashing.  Assuming that $g^{-1}$ exists and the Jacobian\footnote{We avoid the standard Jacobian notation $\mathbf{J}$ because it is too similar to the total RL return $J$.} $\jacobian  g$ is non-singular almost everywhere, the PDF\footnote{We abuse  notation slightly by using $\pi(a \mid s)$ for both the probability distribution and its PDF.} of the policy $\pi$ can be written as
\begin{gather*}
  \label{eq-repar-pdf}
  \pi (a \mid s) = \pi_b(g^{-1}(a) \mid s) \frac{1}{ \det \jacobian  g(g^{-1}(a)) } = \pi_b(b \mid s) \frac{1}{ \det \jacobian  g(b) }.
\end{gather*}
The following lemma develops an EPG method for such policies.
\begin{restatable}[]{lmm}{epgexpfamilyrepar}
  \label{lem-rep-gpg}
  Consider an invertible and differentiable function $g$. Define a policy $\pi$ as in \eqref{eq-repar-pdf}. Assume that the Jacobian of $g$ is non-singular except on a set of $\pi_b$-measure zero. Consider a critic ${\hat{Q}}$. Denote as ${\hat{Q}}_b$ a reparameterized critic such that for all $a$, ${\hat{Q}}_b(g^{-1}(a), s) = {\hat{Q}}(a, s)$. Then the policy gradient update is given by the formula $I^{{\hat{Q}}}_\pi(s) = I^{{\hat{Q}}_b}_{\pi_b}(s)$.
\end{restatable}

We are now ready to state our universality result. The idea is to obtain a reparameterized version of EPG (and Lemma \ref{lem-epg-ef}) by reparametrizing the critic and the policy using the same transformation $g$. We do so in the following corollary, which is the most general constructive result in this article.

\begin{corollary}[EPG for Exponential Families with Reparametrization]\label{cor-epg-rep}\ \\
Consider the class of policies, parameterized by $\theta$, defined as in \eqref{pc-exp-fam-c}. Consider reparametrization function $g$ and define $T^s_b$, $V^s_b$ and ${\hat{Q}}^s_b$ as $T^s_b(g^{-1}(a)) = T^s(a)$, $W^s_b(g^{-1}(a)) = W^s(a)$ and ${\hat{Q}}_b(g^{-1}(a), s) = {\hat{Q}}(a,s)$ for every $a$. Assume the following:
\begin{enumerate}
  \item $g$ is invertible;
  \item The Jacobian of $g$ exists and is non-singular except on a set of $\pi_b$-measure zero, where $\pi_b$ is the reparameterized policy as in \eqref{eq-repar-pdf}; and
  \item $T^s_b$ and ${\hat{Q}}^s_b$ are polynomial as in Lemma \ref{lem-epg-ef}.
\end{enumerate}
Then a closed-form policy gradient update can be obtained as
\begin{gather*}
\label{r-poly-epg-cf}
I^{\hat{Q}}_\pi(s) = (\nabla_\theta  \eta_\theta^\top) ((C^s_{T_bQ_b})^\top m^s_{\pi_b}) - (\nabla_\theta  U^s_{\eta_\theta}) ((C^s_{Q_b})^\top m^s_{\pi_b}).
\end{gather*}
\end{corollary}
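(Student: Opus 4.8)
The plan is to chain the two results already established: Lemma \ref{lem-rep-gpg}, which collapses the reparameterised update back to an update on the base policy, and Lemma \ref{lem-epg-ef}, which supplies a closed form whenever the base policy lies in the exponential family \eqref{pc-exp-fam-c} with a polynomial sufficient statistic. The only genuine work is to verify that the base policy $\pi_b$ produced by the reparameterisation is of exactly that form with a polynomial sufficient statistic, so that Lemma \ref{lem-epg-ef} becomes applicable to it.

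First I would invoke Lemma \ref{lem-rep-gpg}. Assumptions 1 and 2 supply the invertibility of $g$ and the nonsingularity of its Jacobian off a $\pi_b$-null set, and $Q_b$ is defined precisely so that $Q_b(g^{-1}(a)) = Q(a)$; hence the lemma applies verbatim and yields $I^Q_\pi(s) = I^{Q_b}_{\pi_b}(s)$. This reduces the corollary to computing the EPG update for the base policy against the reparameterised critic.

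Second I would exhibit $\pi_b$ in exponential-family form. Starting from \eqref{eq-repar-pdf}, write $\pi_b(b \mid s) = \pi(g(b) \mid s)\, \det \nabla g(b)$ and substitute the exponential-family form \eqref{pc-exp-fam-c} of $\pi$. Using $T(g(b)) = T_b(b)$ and $W(g(b)) = W_b(b)$, which follow immediately from the defining relations $T_b(g^{-1}(a)) = T(a)$ and $W_b(g^{-1}(a)) = W(a)$, and folding the Jacobian factor into the exponent, gives
\[
\pi_b(b \mid s) = e^{\eta_\theta^\top T_b(b) - U_{\eta_\theta} + W_b(b) + \log \det \nabla g(b)}.
\]
The key observation, and the main thing to check, is that the extra term $\log \det \nabla g(b)$ carries no dependence on $\theta$ (since $g$ is fixed), so it merges with $W_b$ into a single base-measure term that is annihilated by $\nabla$, exactly as $W$ is in \eqref{pc-exp-fam-c}. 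Thus $\pi_b$ is of the form \eqref{pc-exp-fam-c} with natural parameter $\eta_\theta$, log-partition $U_{\eta_\theta}$, and polynomial sufficient statistic $T_b$ (assumption 3).

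Finally I would apply Lemma \ref{lem-epg-ef} to the pair $(\pi_b, Q_b)$: both $T_b$ and $Q_b$ are polynomial by assumption 3, so \eqref{poly-epg-cf} gives $I^{Q_b}_{\pi_b}(s) = (\nabla \eta_\theta^\top)(C_{T_b Q_b}^\top m_{\pi_b}) - (\nabla U_{\eta_\theta})(C_{Q_b}^\top m_{\pi_b})$. Combining this with the reduction from the first step produces \eqref{r-poly-epg-cf}. I expect the exponential-family verification in the third paragraph to be the main obstacle: one must confirm that absorbing the Jacobian term leaves the entire $\theta$-dependence confined to $\eta_\theta$ and $U_{\eta_\theta}$, since Lemma \ref{lem-epg-ef}'s derivation crucially relies on $\nabla$ killing the base-measure contribution, and any residual $\theta$-dependence hidden in the reparameterisation would break that step.
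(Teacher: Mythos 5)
Your proposal is correct and follows exactly the paper's own route: the paper's proof is literally ``Apply Lemmas \ref{lem-rep-gpg} and then \ref{lem-epg-ef}.'' Your additional verification that $\pi_b$ remains in the form \eqref{pc-exp-fam-c} after absorbing the $\theta$-independent Jacobian term $\log \det \nabla g(b)$ into the base measure is a detail the paper leaves implicit, and you identify correctly why it matters.
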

\begin{proof}
  Apply Lemmas \ref{lem-rep-gpg} and then \ref{lem-epg-ef}.
\end{proof}

Lemma \ref{lem-rep-gpg} also has a practical application in case we want to deal with bounded action spaces. As we discussed in Section \ref{sec-gpg-clip}, hard clipping can cause the problem of vanishing gradients and the default solution should be to use GPG. In case we can't use GPG, for instance when the dimensionality of the action space is so large that computing the covariance of the policy is too costly, we can alleviate the vanishing gradients problem by using a strictly monotonic squashing function $g$. One implication of Lemma \ref{lem-rep-gpg} is that, if we set $\pi_b$ to be Gaussian, we can invoke Lemma \ref{lem-gpg} to obtain exact analytic updates for useful policy classes such as Log-Normal and Logit-Normal (obtained by setting $g$ to the sigmoid and the exponential function respectively), as long as we choose our critic ${\hat{Q}}^s$ to be quadratic in $g^{-1}(a)$, i.e., ${\hat{Q}}^s_b$ is quadratic in $b$. The reparameterized version of EPG is the same as Algorithm \ref{alg-epg-clip} except it uses a squashing function $g$ instead of the clipping function $c$.

\subsection{Arbitrary Policies and Linear Critics}

Next, we consider the case where the stochastic policy is almost completely arbitrary, i.e., it only has to possess a mean and need not even be in the already general exponential family of policies used in Lemma \ref{lem-epg-ef} and Corollary \ref{cor-epg-rep}, but the critic is constrained to be linear in the actions. We have the following lemma, which is a slight modification of an observation made in connection with the $Q$-Prop algorithm \cite[Eq. 7]{gu2016q}.

\begin{lemma}[EPG for Arbitrary Stochastic Policies and Linear Critics]\label{lem-lin-crit}\ \\
Consider an arbitrary (non-degenerate) probability distribution $\pi(\cdot \mid s)$ which has a mean. Assume that the critic $\hat{Q}(a, s)$ is of the form $A_s^\top a$ for some coefficient vector $A_s$. Then the policy gradient update is given by $I^{\hat{Q}}_\pi(s) = A_s^\top \nabla_\theta  \mu_{\pi(\cdot \mid s)}$ where $\mu_{\pi(\cdot \mid s)}$ denotes the integral $\int_a a d\pi(a \mid s) $ (the mean).
\end{lemma}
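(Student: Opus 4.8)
The plan is to reduce $I^Q_\pi(s)$ to a derivative of the policy's mean by means of the score-function (log-derivative) identity, after which the linearity of the critic does all the remaining work. Starting from the definition
\[
I^Q_\pi(s) = \int_a d\pi(a \mid s)\, \nabla \log \pi(a \mid s)\, Q(a),
\]
I would write the measure through its density and apply $\pi(a \mid s)\,\nabla \log \pi(a \mid s) = \nabla \pi(a \mid s)$ to obtain $I^Q_\pi(s) = \int_a (\nabla \pi(a \mid s))\, Q(a)\, da$. This converts the awkward product of a density with a score function into a plain gradient of the density integrated against the critic, which is the only structural manipulation the argument needs.

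Next I would substitute the linear critic $Q(a) = A_s^\top a$. Since $A_s$ is a property of the critic and does not depend on the policy parameters $\theta$ (recall $\nabla$ is always taken with respect to $\theta$), it factors out of both the integral and the gradient. Interchanging differentiation and integration then yields
\[
I^Q_\pi(s) = \int_a (\nabla \pi(a \mid s))\, A_s^\top a\, da = A_s^\top \int_a a\, \nabla \pi(a \mid s)\, da = A_s^\top \nabla \int_a a\, \pi(a \mid s)\, da = A_s^\top \nabla \mu_{\pi(\cdot \mid s)},
\]
which is exactly the claimed expression, using the definition $\mu_{\pi(\cdot\mid s)} = \int_a a\, d\pi(a \mid s)$.

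The algebra here is trivial; the two places that actually require care are the validity of the score-function identity and the interchange of $\nabla$ and $\int_a$. The first presupposes that $\pi(\cdot \mid s)$ admits a density differentiable in $\theta$ and that $\nabla \log \pi$ is well defined on the support, which is the implicit regularity hiding behind the word \emph{arbitrary} in the statement: it means arbitrary in \emph{shape}, not lacking smoothness in $\theta$. The second, the differentiation-under-the-integral-sign step, is the main technical obstacle: it needs a dominating integrable function (for instance a uniform-in-$\theta$ bound on $\|a\,\nabla\pi(a\mid s)\|$ in a neighbourhood of each $\theta$), which is precisely why the lemma assumes the distribution is nondegenerate and possesses a mean. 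Under these mild conditions the exchange follows from dominated convergence, and, crucially, no quadric or exponential-family structure on $\pi$ is required, in contrast to Lemma \ref{lem-gpg} and Lemma \ref{lem-epg-ef}; only the linearity of $Q$ in $a$ is used.
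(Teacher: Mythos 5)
Your proof is correct and follows essentially the same route as the paper's: rewrite the score-weighted integral as $\int_a (\nabla \pi(a \mid s))\, Q(a)\, da$, substitute the linear critic, factor out $A_s^\top$, and exchange $\nabla$ with the integral to recognise the gradient of the mean. The only difference is that you spell out the regularity conditions (differentiability in $\theta$ and domination for the interchange) that the paper leaves implicit.
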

\begin{proof}
The lemma is proven by rewriting the policy gradient update as
\begin{align*}
I^Q_\pi(s) &= \int_a \nabla_\theta  \pi(a \mid s) {\hat{Q}}(a, s) da 
= \int_a \nabla_\theta  \pi(a \mid s) A_s^\top a da \\
&= A_s^\top \nabla_\theta  \underbrace{\int_a \pi(a \mid s) a da}_{\mu_{\pi(\cdot \mid s)}} 
= A_s^\top \nabla_\theta  (\mu_{\pi(\cdot \mid s)} ).
\end{align*}
\end{proof}
Since DPG already provides the same result for Dirac-delta policies (see Corollary \ref{cor-dpg}), we conclude that using linear critics means we can have an analytic solution for any reasonable policy class.

To see why the above lemma is useful, first consider systems that arise as a discretization of continuous time systems with a fine enough time scale and differentiable dynamics and rewards. If we assume that the true $Q$ is smooth in the actions and that the magnitude of the allowed action goes to zero as the time step decreases, then a linear critic is sufficient as an approximation of $Q$ because we can approximate any smooth function with a linear function in any sufficiently small neighborhood of a given point and then choose the time step to be small enough so an action does not leave that neighborhood. We can then use Lemma \ref{lem-lin-crit} to perform policy gradients with any policy.\footnote{Of course the update derived in Lemma \ref{lem-lin-crit} only provides a direction in which to change the policy mean (which means that exploration has to be performed using some other mechanism). This is because a linear critic does not contain enough information to determine exploration. }

\subsection{If All Else Fails: EPG with Numerical Quadrature}
\label{ss-numq}
If, despite the broad framework shown in this article, an analytical solution is impossible, we can still perform integration numerically. EPG can still be beneficial in these cases: if the action space is low dimensional, numerical quadrature is cheap; if it is high dimensional, it is still often worthwhile to balance the expense of simulating the system with the cost of quadrature. Actually, even in the extreme case of expensive quadrature but cheap simulation, the limited resources available for quadrature could still be better spent on EPG with smart quadrature than SPG with simple Monte Carlo.

The crucial insight behind numerical EPG is that the integral given as\footnote{The second expression is known as the reparameterized gradient and was introduced by \citet{heess2015learning} in the context of RL.}
\[
I^{\hat{Q}}_\pi = \int_a d \pi(a \mid s) \nabla_\theta  \log \pi(a \mid s) \hat{Q}(a, s) = \int_a d \pi(a \mid s) \nabla_\theta  {\hat{Q}}(\pi(a_i),s)
\]
only depends on two fully known quantities: the current policy $\pi$ and the current approximate critic $\hat{Q}$. Therefore, we can use any standard numerical integration method to compute it. For example, multi-sample Monte-Carlo quadrature for the policy gradient with respect to the mean is given by
\begin{gather*}
    \label{eq-multi-sample-mc}
I^{\hat{Q}}_{\pi(s), \mu_s} = \frac1m \sum_{i=1}^m (\nabla_\theta  {\hat{Q}}(\pi(a_i),s)) \quad \text{where} \quad a_i \sim \pi(\cdot \vert s).
\end{gather*}
The actions at which the integrand is evaluated do not have to be sampled---one can also use a method such as the Gauss-Legendre quadrature where the abscissae are designed.

\subsection{Probability Distributions over Discrete Actions and Softmax Policies}
\label{sec-discrete-epg}
The main idea of EPG can also be applied to the setting of discrete actions. In this case, the integral in $I^{Q}_\pi(s)$ becomes a sum, and the policy gradient update takes the form
\begin{gather*}
I^{Q}_\pi(s) = \sum_a  \pi(a \mid s) \nabla \log \pi(a \mid s) {Q}(a, s) = \sum_a  \nabla \pi(a \mid s) {Q}(a, s).
\label{eq-epg-discrete}
\end{gather*}
In this case, a softmax parametrization of the discrete policy, also known as a Gibbs or Boltzmann policy, is often chosen. The following observation provides a slightly optimized formula for the sum.

\begin{observation}[Expected Policy Gradients for Discrete Softmax Policies]\label{obs-discrete-epg}\ \\
If the action space is discrete, and the policy is a discrete softmax distribution, i.e., $\pi(\cdot \vert s) \propto e^{h_{\theta}(\cdot, s)} $ for some function $h_\theta$ parametrized by $\theta$, then
\[
I^{Q}_\pi(s) = (\nabla h) u.
\]
Here, $\nabla h$ is a Jacobian matrix and $u$ is a vector whose elements are defined as
\begin{gather*} \textstyle
\{u\}_i = \pi(a_i \vert s) \left( \sum_{j} \pi(a_j \vert s) (Q(a_i, s) - Q(a_j, s)) \right) = \pi(a_i \vert s) Q(a_i, s) - V(s). \label{obs-da-d}
\end{gather*}
\end{observation}
\begin{proof}
Denote by $Z(s)$ the normalization factor of the policy, i.e. $Z(s) = \sum_i e^{h_\theta(a_i,s)}$ and
$\pi(a_i \vert s) = \frac{e^{h_\theta(a_i, s)}}{Z(s)}$.
First, we expand the term $\nabla \pi(a_i \vert s)$ as 
\begin{align*}
\nabla \pi(a_i \vert s) &= \frac{e^{h_\theta(a_i, s)}}{Z(s)} \nabla h_\theta(a_i,s) - \frac{e^{h_\theta(a_i, s)} (\sum_j e^{h_\theta(a_j, s)} \nabla h_\theta(a_j,s))}{Z(s)^2} = \\
&= \textstyle \pi(a_i \vert s) \nabla h_\theta(a_i,s) - \pi(a_i \vert s) \sum_j \pi(a_j \vert s) \nabla h_\theta(a_j,s).
\end{align*}
We now plug this into the definition of $I^{Q}_\pi(s)$, obtaining
\begin{align*}
\textstyle
I^{Q}_\pi(s) &= \textstyle \sum_i \nabla \pi(a_i \vert s) Q(a_i, s) =\\
&= \textstyle \left ( \sum_{i\vphantom{j}} \pi(a_i \vert s) \nabla h_\theta(a_i,s) Q(a_i, s) \right ) - \textstyle \left( \sum_j \pi(a_j \vert s) \nabla h_\theta(a_j,s) \right) \textstyle \left ( \sum_{i\vphantom{j}} \pi(a_i \vert s) Q(a_i, s) \right ) = \\
&= \textstyle \left ( \sum_{i\vphantom{j}} \pi(a_i \vert s) \nabla h_\theta(a_i,s) Q(a_i, s) \right ) - \textstyle \left( \sum_{i\vphantom{j}} \pi(a_i \vert s) \nabla h_\theta(a_i,s) \right) \textstyle \left (\sum_j \pi(a_j \vert s) Q(a_j, s) \right ) = \\
&= \textstyle \sum_{ij} \pi(a_i \vert s) \pi(a_j \vert s) \nabla h_\theta(a_i,s) (Q(a_i, s) - Q(a_j, s)) \\
&= \textstyle \sum_i \pi(a_i \vert s) \nabla h_\theta(a_i,s) (Q(a_i, s) - V(s)).
\end{align*}
The last two lines give the desired result. 
\end{proof}

We include this observation because the required simplifications, leading to \eqref{obs-da-d} may not always be performed when using automatic differentiation software. Also, \eqref{obs-da-d} makes clear that only differences between $Q$-values matter, not absolute values. 

To apply Observation \ref{obs-discrete-epg} in practice, $Q$ and $V$ are replaced by their approximations $\hat{Q}$ and $\hat{V}$ respectively, similarly to the continuous case. However, the use of EPG for discrete policies did not improve performance for a task we tried, a result we discuss in Section \ref{sec-exp-disc}.

\section{Experiments}
\label{sec-gpg-exp}

\begin{figure}[ht]
    \centering
         \captionsetup[subfigure]{aboveskip=0.5em,belowskip=0.5em}
        \begin{subfigure}{0.42\textwidth}
            \caption{HalfCheetah-v2}
            \includegraphics[width=\textwidth]{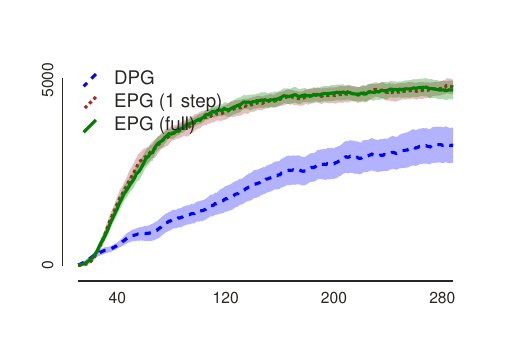}
        \end{subfigure}
        ~
        \begin{subfigure}{0.42\textwidth}
            \caption{InvertedPendulum-v2}
            \includegraphics[width=\textwidth]{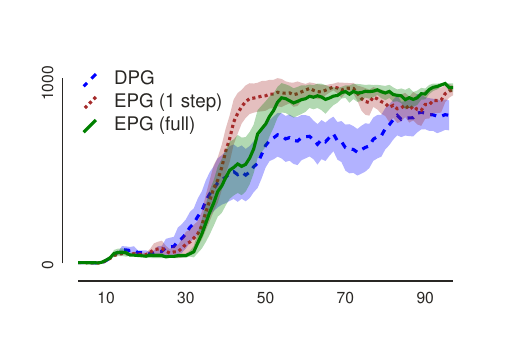}
        \end{subfigure}
        \\
        \begin{subfigure}{0.42\textwidth}
            \caption{Reacher-v2}
            \includegraphics[width=\textwidth]{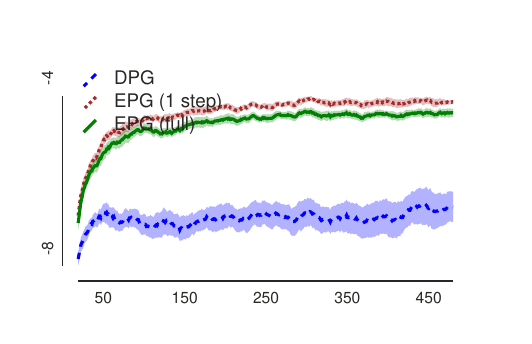}
        \end{subfigure}
        ~
        \begin{subfigure}{0.42\textwidth}
            \caption{Walker2d-v2}
            \includegraphics[width=\textwidth]{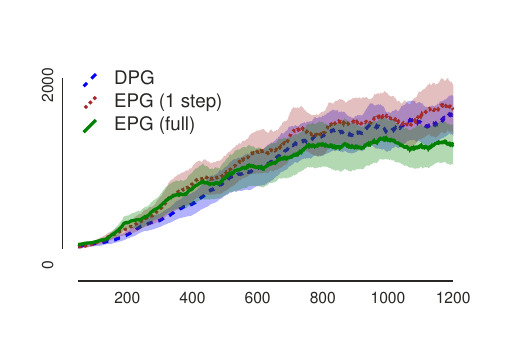}
        \end{subfigure}
        \\
        ~
        \begin{subfigure}{0.42\textwidth}
            \caption{InvertedDoublePendulum-v2}
            \includegraphics[width=\textwidth]{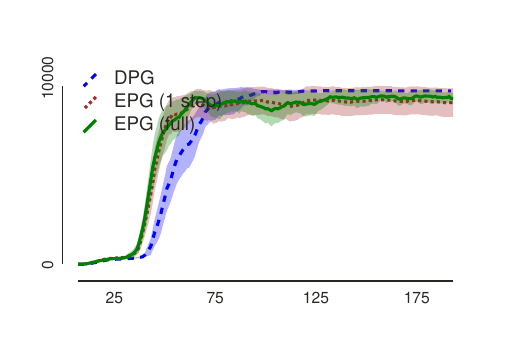}
             
        \end{subfigure}
        \caption{Learning curves (mean and 90\% interval) showing exploration using EPG, 1-step EPG and the OU noise used in DPG. Returns for Reacher-v2 are clipped at -10. All results are obtained from 20 runs. Horizontal axis shows thousands of steps.}
        \label{fig-epg-1step-dpg}
    \end{figure}

    \begin{figure}[ht]
    \centering
            \captionsetup[subfigure]{aboveskip=0.5em,belowskip=0.5em}
        \begin{subfigure}{0.42\textwidth}
            \caption{HalfCheetah-v2}
            \includegraphics[width=\textwidth]{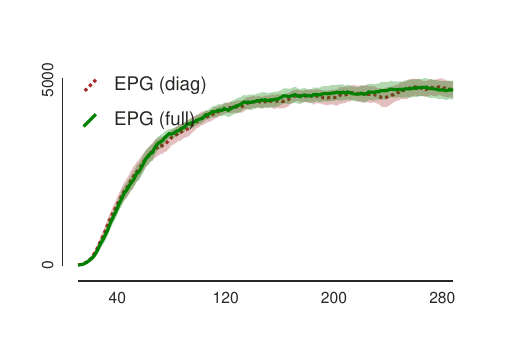}
        \end{subfigure}
        ~
        \begin{subfigure}{0.42\textwidth}
            \caption{Reacher-v2}
            \includegraphics[width=\textwidth]{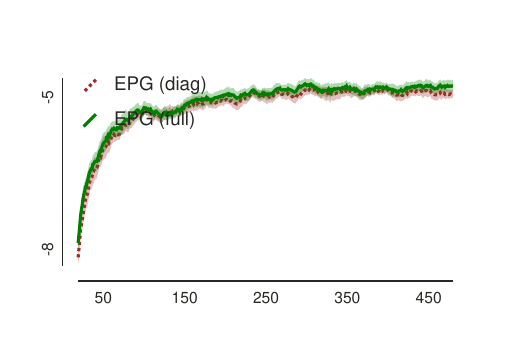}
        \end{subfigure}
        ~ \\
        \begin{subfigure}{0.42\textwidth}
            \caption{Walker2d-v2}
            \includegraphics[width=\textwidth]{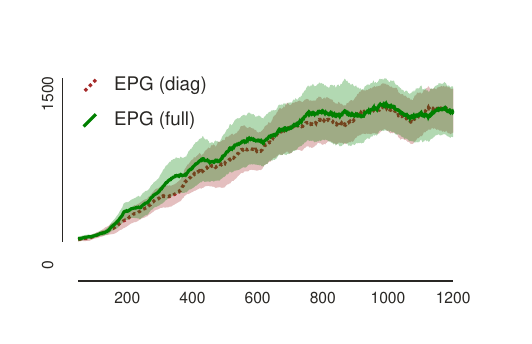}
        \end{subfigure}

        \caption{Learning curves (mean and 90\% interval) comparing EPG with diagonal and full Hessian. Returns for Reacher-v2 are clipped at -10. All results are obtained from 20 runs. Horizontal axis shows thousands of steps. Results for the pendulum domains are not show since they only have one action dimension.}
        \label{fig-exp-diagexp}
    \end{figure}

    \begin{figure}[ht]
        \centering
                \captionsetup[subfigure]{aboveskip=0.5em,belowskip=0.5em}
            \begin{subfigure}{0.42\textwidth}
                \caption{HalfCheetah-v2}
                \includegraphics[width=\textwidth]{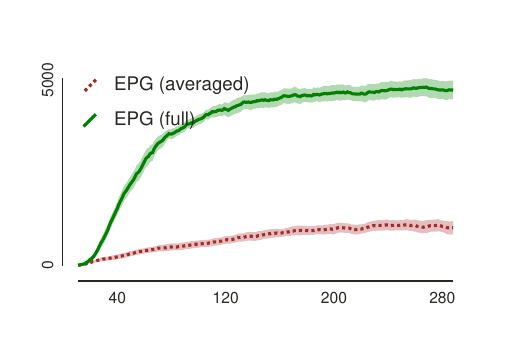}
            \end{subfigure}
            ~
            \begin{subfigure}{0.42\textwidth}
                \caption{InvertedPendulum-v2}
                \includegraphics[width=\textwidth]{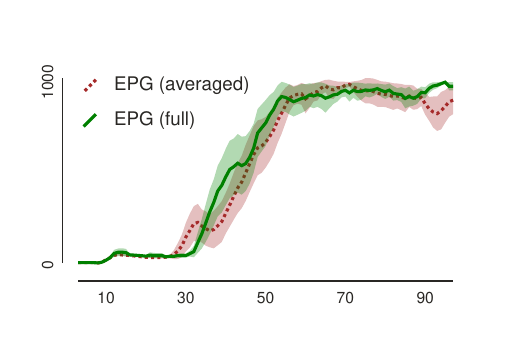}
            \end{subfigure}
            \\
            \begin{subfigure}{0.42\textwidth}
                \caption{Reacher-v2}
                \includegraphics[width=\textwidth]{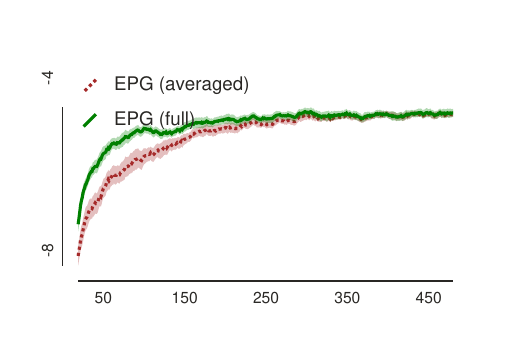}
            \end{subfigure}
            ~
            \begin{subfigure}{0.42\textwidth}
                \caption{Walker2d-v2}
                \includegraphics[width=\textwidth]{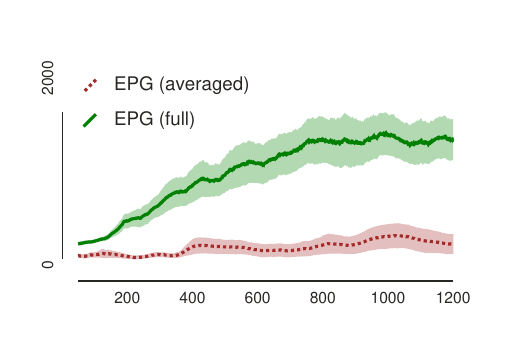}
            \end{subfigure}
            \\
            ~
            \begin{subfigure}{0.42\textwidth}
                \caption{InvertedDoublePendulum-v2}
                \includegraphics[width=\textwidth]{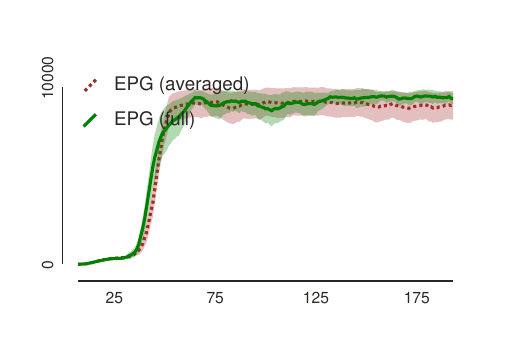}
                 
            \end{subfigure}
            \caption{Learning curves (mean and 90\% interval) showing exploration using vanilla EPG and a variant that uses a averaged covariance (i.e. one that is constant for every state). Returns for Reacher-v2 are clipped at -10. All results are obtained from 20 runs. Horizontal axis shows thousands of steps.}
            \label{fig-exp-expglobal}
        \end{figure}

While EPG has many potential uses, we focus on empirically evaluating its applications to exploration and variance reduction in the actor. To benchmark our algorithms, we use five continuous-action domains, modeled with the MuJoCo physics simulator \citep{todorov2012mujoco}: HalfCheetah-v2, InvertedPendulum-v2, Reacher2d-v2, Walker2d-v2, and InvertedDoublePendulum-v2, as well as one discrete-action domain: Atari Pong. In Section \ref{exp-exploration}, we evaluate the benefits of exploring using a covariance matrix that comes from the Hessian exponential. In Section \ref{exp-quad}, we compare the types of quadrature typically used in connection with policy gradients. In Section 6.3, we discuss hyperparameter tuning. In Section \ref{exp-ppo}, we compare EPG with \emph{proximal policy optimization} (PPO) \citep{schulman2017proximal}. In Section \ref{sec-exp-disc}, we apply EPG to the discrete-action domain, Atari Pong.

\subsection{Continuous Control: Exploration}
\label{exp-exploration}
The EPG framework can be used to derive a new policy gradient algorithm (see Algorithm \ref{alg-epg-fixpoint} and Lemma \ref{lem-hexplore}), where the covariance of the exploration policy is obtained by either taking the matrix exponent of the critic as given in \eqref{eq=exp-sigma} or its approximation, which we call 1-step EPG, as given in \eqref{eq=1step-sigma}. In practice, both versions of EPG differ from deep DPG \citep{lillicrap2015continuous, silver2014deterministic} only in the exploration strategy, though their theoretical underpinnings are also different. 

The Hessian is obtained using a sigma-point method, as follows. At each step, the agent samples 100 action values from $\hat{Q}(\cdot,s)$ and a quadratic is fit to them in the $L_2$ norm. Since this is a least-squares problem, it can be accomplished by solving a linear system. The Hessian computation could be greatly sped up by using an approximate method, or even skipped completely if we use a quadratic critic. However, we did not optimize this part of the algorithm since it is orthogonal to the core insight of GPG that the Hessian is useful for exploration.

We now evaluate both EPG and 1-step EPG, as an alternative to the standard Ornstein-Uhlenbeck (OU) exploration used by Deterministic Policy Gradients. The results in Figure \ref{fig-epg-1step-dpg} show that EPG's exploration strategy yields much better performance than DPG with OU. 1-step EPG performs just well, or even slightly better than the version with the exponent. This is not surprising -- the function $\max(0,1+\lambda)$ is a good approximation of $e^\lambda$ in for the range of eigenvalues seen during training.  

In order to benchmark more accurately which aspects of EPG are important for performance, we performed two additional ablations. Figure \ref{fig-exp-diagexp} compares EPG with a diagonal version, which uses a diagonal Hessian. The diagonal Hessian is fitted using the same process as the full Hessian, except that the local quadratic approximation is constrained so that the off-diagonal entries are zero. The performance of diagonal EPG is similar to the full version, showing that the critic curvature that has practical significance for exploration can be estimated component-wise. This result is consistent with most recent work on the topic \citep{haarnojaSoftActorCriticOffPolicy2018, fujimotoAddressingFunctionApproximation2018}. 

To make an even more minimal version of EPG, we tried to estimate the Hessian globally, i.e., using the same estimate for every state.  The results in Figure \ref{fig-exp-expglobal} show using such a global Hessian is suboptimal.  

\subsection{Continuous Control: Updates for the Policy Mean}
\label{exp-quad}

\begin{figure}[ht]
    \centering
            \captionsetup[subfigure]{aboveskip=0.5em,belowskip=0.5em}
        \begin{subfigure}{0.42\textwidth}
            \caption{HalfCheetah-v2}
            \includegraphics[width=\textwidth]{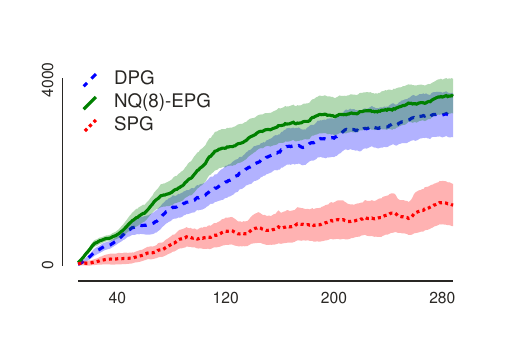}
        \end{subfigure}
        ~
        \begin{subfigure}{0.42\textwidth}
            \caption{InvertedPendulum-v2}
            \includegraphics[width=\textwidth]{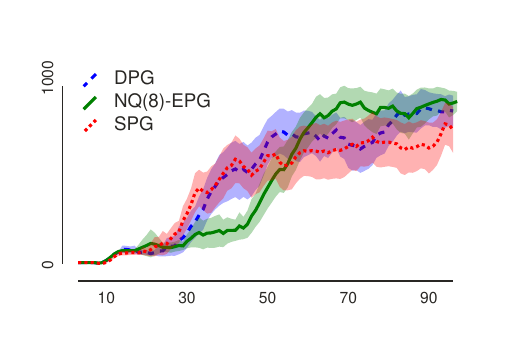}
        \end{subfigure}
        \\
        \begin{subfigure}{0.42\textwidth}
            \caption{Reacher-v2}
            \includegraphics[width=\textwidth]{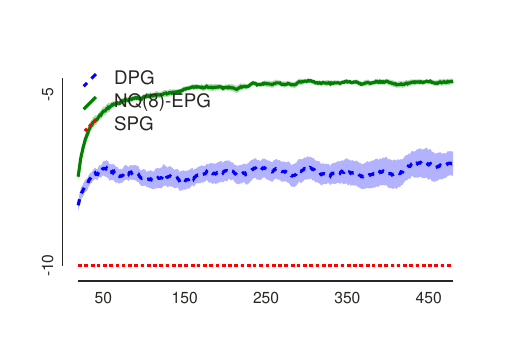}
        \end{subfigure}
        ~
        \begin{subfigure}{0.42\textwidth}
            \caption{Walker2d-v2}
            \includegraphics[width=\textwidth]{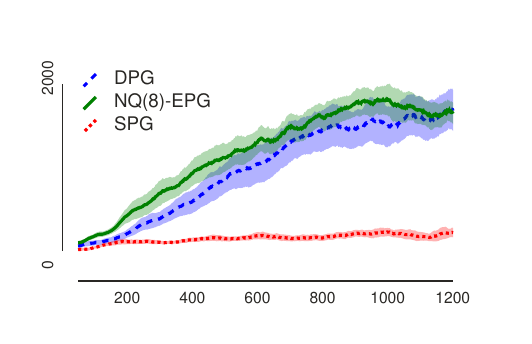}
        \end{subfigure}
        \\
        ~
        \begin{subfigure}{0.42\textwidth}
            \caption{InvertedDoublePendulum-v2}
            \includegraphics[width=\textwidth]{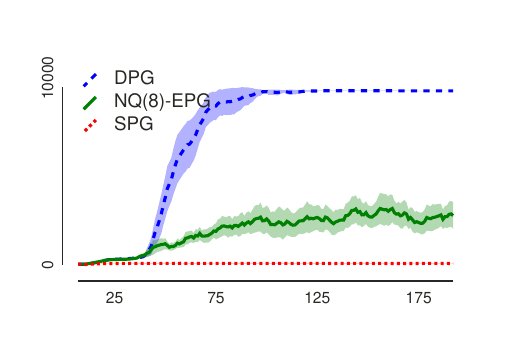}
             
        \end{subfigure}
        \caption{Learning curves (mean and 90\% interval) showing learning performance when updating the policy mean using Expected Policy Gradients with numerical quadrature, as compared to deterministic policy gradients and stochastic policy gradients. Returns for Reacher-v2 are clipped at -10. All results are obtained from 20 runs. Horizontal axis shows thousands of steps.}
        \label{fig-actor-epg-dpg-spg}
    \end{figure}

\begin{figure}[ht]
\centering
        \captionsetup[subfigure]{aboveskip=0.5em,belowskip=0.5em}
    \begin{subfigure}{0.42\textwidth}
        \caption{HalfCheetah-v2}
        \includegraphics[width=\textwidth]{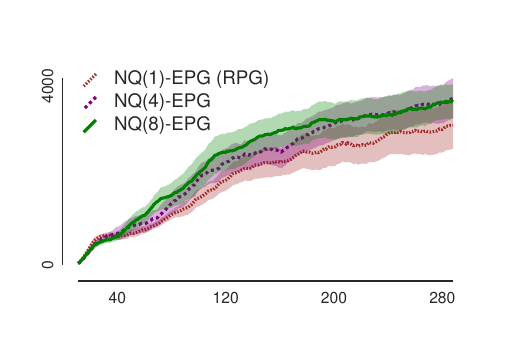}
    \end{subfigure}
    ~
    \begin{subfigure}{0.42\textwidth}
        \caption{InvertedPendulum-v2}
        \includegraphics[width=\textwidth]{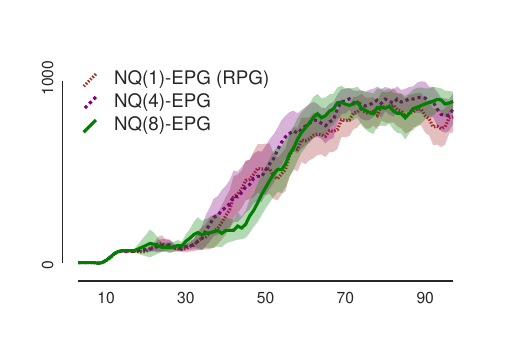}
    \end{subfigure}
    \\
    \begin{subfigure}{0.42\textwidth}
        \caption{Reacher-v2}
        \includegraphics[width=\textwidth]{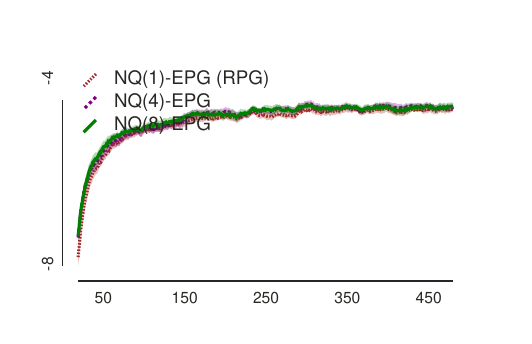}
    \end{subfigure}
    ~
    \begin{subfigure}{0.42\textwidth}
        \caption{Walker2d-v2}
        \includegraphics[width=\textwidth]{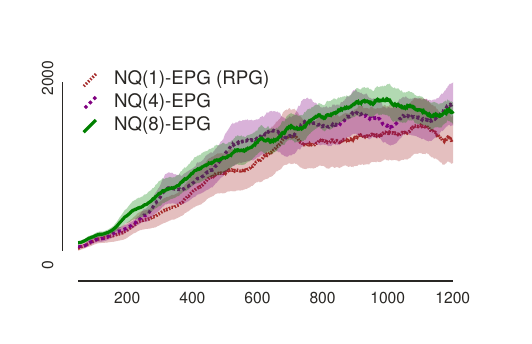}
    \end{subfigure}
    \\
    ~
    \begin{subfigure}{0.42\textwidth}
        \caption{InvertedDoublePendulum-v2}
        \includegraphics[width=\textwidth]{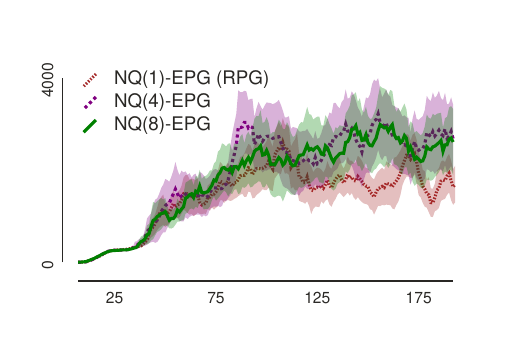}
         
    \end{subfigure}
    \caption{Learning curves (mean and 90\% interval) showing learning performance when updating the policy mean using multi-sample quadrature for variants with 1, 4 and 8 samples. Quadrature with 1 sample is equivalent to reparametrized policy gradients (RPG). Returns for Reacher-v2 are clipped at -10. All results are obtained from 20 runs. Horizontal axis shows thousands of steps.}
    \label{fig-exp-quadrature}
\end{figure}

In this section, we compare different ways of updating the policy mean. We test an expected policy gradient method based on numerical quadrature NQ($m$)-EPG, which computes estimates of the policy gradient using multiple samples of the policy gradient integral and compare it to DPG and SPG. The quadrature is performed as in \eqref{eq-multi-sample-mc}.

In order to capture the effect of the actor update only, we used OU exploration with noise $0.2$ for both NQ($m$)-EPG and DPG. We used Gaussian exploration for SPG. Details of hyperparameters are given in Appendix \ref{sec-exp-details}. Our results in Figure \ref{fig-actor-epg-dpg-spg} show that using numerical quadrature as in \eqref{eq-multi-sample-mc} generally improves performance. The results in Figure \ref{fig-exp-quadrature} show that adding more samples to numerical quadrature does not change performance much, at least for the simple type of quadrature in \eqref{eq-multi-sample-mc}. This is unsurprising given that NQ(1)-EPG (with one sample) corresponds to Reparameterized Policy Gradients, which are known to perform well \citep{haarnojaSoftActorCriticOffPolicy2018}. To obtain a larger improvement on these tasks one would need to use either more sophisticated quadrature or a local parametric approximation similar to the sigma-point method in Section \ref{exp-exploration}.

The behavior of the InvertedDoublePendulum domain, where numerical quadrature performs significantly worse than standard deterministic policy gradient update, is interesting. We believe this behavior is due to the task's inherent instability -- any deviation from the true value leads to sub-optimal updates. One way of addressing this problem would be to reduce the exploration noise. However, in order to get meaningful comparisons, we elected to run our experiments with the same hyperparameters across tasks.

Furthermore, SPG does poorly, solving only the easiest domain (InvertedPendulum-v2) in reasonable time, achieving slow progress on  HalfCheetah-v2, and failing entirely on the other domains. This is not surprising since DPG was introduced precisely to solve the problem of high variance SPG estimates on this type of task. In InvertedPendulum-v2, SPG initially learns quickly, outperforming the other methods, because noisy gradient updates provide a crude, indirect form of exploration that happens to suit this problem. Clearly, this is inadequate for more complex domains: even for this simple domain it leads to sub-par performance late in learning.

\subsection{Sensitivity of EPG to hyperparameters}
\label{exp-hyper}
The hyperparameters for DPG and those of EPG that are not related to exploration were taken from an existing benchmark \citep{islam2017reproducibility, brockman2016openai}. They are detailed in Appendix \ref{sec-exp-details}. Our EPG exploration technique has just one hyperparameter $\sigma_0$ while OU has two (standard deviation and mean reversion constant). We optimized $\sigma_0$ on the HalfCheetah domain (Figure \ref{fig-hyperparameters-EPG}) and settled on the value $\sigma_0 = 0.5$.

\begin{figure}[ht]
    \centering
    \includegraphics[width=0.42\textwidth]{plots-epg-final/hyperparameters-EPG/all.pdf}
        
        \caption{Learning curves (mean and 90\% interval) for HalfCheetah-v2 showing different values of $\sigma_0$ for EPG. All results are obtained from 20 runs. Horizontal axis shows thousands of steps.}
        \label{fig-hyperparameters-EPG}
    \end{figure}

To ensure a fair comparison, we also optimized hyperparameters for DPG; details and learning curves are in Appendix \ref{sec-exp-details}.

\subsection{Comparison with PPO.}
\label{exp-ppo}
One feature of EPG is stability in the learning outcome, i.e., low variance across runs. EPG's stability raises the question whether the instability of an algorithm (i.e., an inverted or oscillating learning curve) is caused primarily by inefficient exploration or by excessively large differences between subsequent policies. To address this question, we compare our results with \emph{proximal policy optimization} (PPO) \citep{schulman2017proximal}, a policy gradient algorithm that explicitly penalizes large differences between successive policies. The results are shown in Figure \ref{fig-ppo-epg}, where we allowed to run PPO until its performance plateaus. On one hand, the results show that EPG is indeed more stable (represented by a narrower confidence interval). On the other hand, PPO performed better overall on the Walker task. This suggests that both the stability relating to exploration and the stability relating to changes in policy space can play a role in policy gradients. In principle, it would be possible to achieve \emph{both} kinds of stability by exploiting the curvature of the critic to obtain the covariance of the policy while at the same time constraining the sequence of policies to be close to one another. Due to the amount of engineering involved in tuning such an algorithm, we leave this idea to future work.

\begin{figure}[ht]
    \centering
            \captionsetup[subfigure]{aboveskip=0.5em,belowskip=0.5em}
        \begin{subfigure}{0.42\textwidth}
            \caption{HalfCheetah-v2}
            \includegraphics[width=\textwidth]{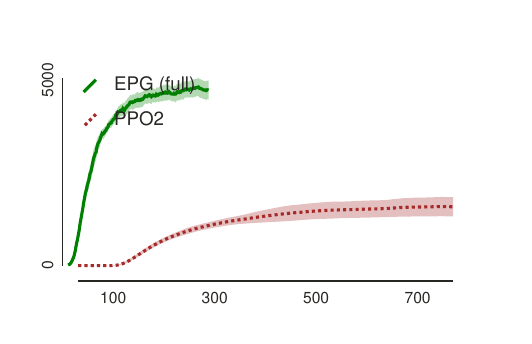}
        \end{subfigure}
        ~
        \begin{subfigure}{0.42\textwidth}
            \caption{Walker2d-v2}
            \includegraphics[width=\textwidth]{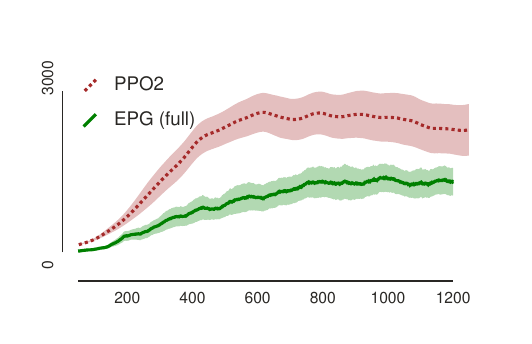}
        \end{subfigure}
        \caption{Learning curves (mean and 90\% interval) showing learning performance of PPO (using the default parameters) as compared to EPG. All results are obtained from 20 runs. Horizontal axis shows thousands of steps.}
        \label{fig-ppo-epg}
    \end{figure}

\subsection{EPG for Discrete Action Spaces}
\label{sec-exp-disc}

To evaluate the usefulness of Expected Policy Gradients for discrete action spaces, we applied it to the Atari version of Pong. The results are presented in Figure \ref{fig-res-discrete}. We used the OpenAI version of A2C \citep{mnih2016asynchronous} for stochastic policy gradients (labeled SPG in the plot) as a baseline, together with the default hyperparameters of the OpenAI implementation \citep{baselines}. To make a discrete version of EPG, we modified the A2C critic to also learn $\hat{Q}$. We also modified the policy gradient update to EPG as in \eqref{eq-epg-discrete} and kept the remaining settings of the algorithm the same. Figure \ref{fig-res-discrete} demonstrates that for Pong, there is no measurable benefit from introducing a sum over actions to the policy gradient estimate. In discrete domains like Pong, the learned critic is often inaccurate, which can be a much greater problem than the variance of the policy estimator. Hence, even reducing the variance in the policy gradient estimator to zero, as EPG does, does not help performance.

\begin{figure}[htb]
    \centering
    \includegraphics[width=0.45\textwidth]{./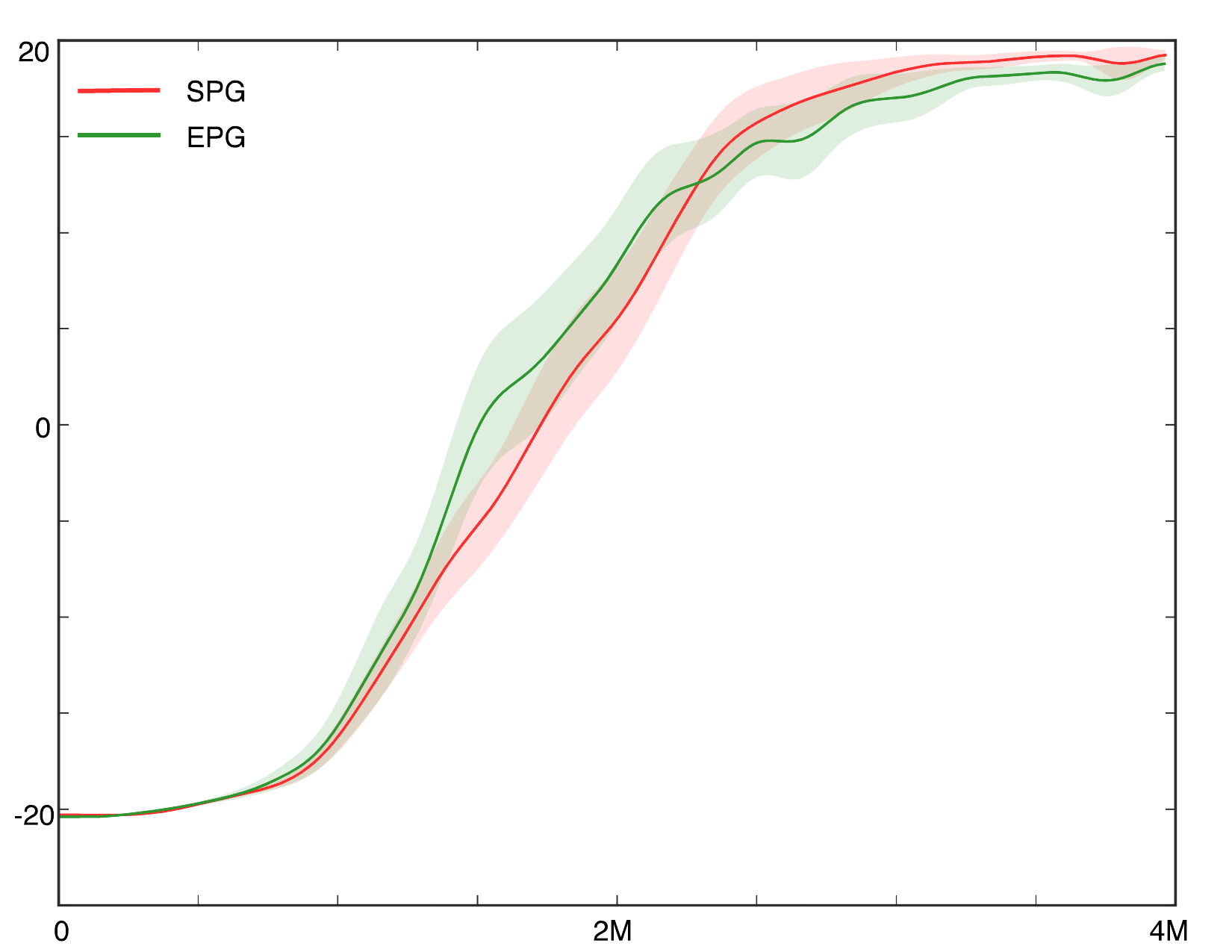}
    \caption{EPG vs SPG on a discrete action domain (Atari Pong). The curves show the mean and 90\% confidence interval and were obtained by 24 runs of each algorithm. Horizontal scale is in millions of steps.}
    \label{fig-res-discrete}
\end{figure}

\section{Related Work}
\label{sec-rel}
In this section, we discuss the relationship between EPG and several other methods.
\subsection{Sampling Methods for SPG}
EPG has some similarities with VINE sampling \citep{schulman2015trust}, which uses an (intrinsically noisy) Monte Carlo quadrature with many samples. However, there are important differences. First, VINE relies entirely on reward rollouts and does not use an explicit critic. This means that VINE has to perform many independent rollouts of $Q(\cdot, s)$ for each $s$, requiring a simulator with reset. A second, related difference is that VINE uses the \emph{same} actions in the estimation of $I^{\hat{Q}}_\pi$ that it executes in the environment. While this is necessary with purely Monte Carlo rollouts, Section \ref{ss-numq} shows that there is no such need in general if we have an explicit critic. Ultimately, the main weakness of VINE is that it is a purely Monte Carlo method. However, the example in Figure \ref{fig-mc-p} (Section \ref{sec-epg-main}) shows that even with a computationally expensive many-sample Monte Carlo method, the problem of variance in the gradient estimator remains, regardless of the baseline.

EPG is also related to variance minimization techniques that interpolate between two estimators \citep{gu2016q}. However, EPG uses a quadratic (not linear) critic, which is crucial for exploration. Furthermore, it completely eliminates variance in the inner integral, as opposed to just reducing it.

A more direct way of coping with variance in policy gradients is to simply reduce the learning rate when the variance of the gradient would otherwise explode, using, e.g., \emph{Adam} \citep{kingma2014adam} or the adaptive step size method \citep{pirotta2013adaptive}. However, this results in slow learning when the variance is high. Another way of reducing variance is by defining new estimators. \citet{parmasTotalStochasticGradient2018} introduces a framework for doing that based on a graphical representation that also allows for including model-based approaches. This framework provides another way of generalizing the deterministic and the stochastic policy gradient theorem by looking them as different ways of estimating the same quantity. Unlike EPG, it does not perform analytic integration.

\subsection{Sarsa and Q-Learning}
It has been known since the introduction of policy gradient methods \citep{sutton2000policy} that they represent a kind of slow-motion policy improvement as opposed to a greedy improvement performed by methods such as (expected) sarsa with action maximization or Q-learning. The two main reasons for the slow-motion improvement are that a greedy maximization operator may not be available (e.g., for continuous or large discrete action spaces) and that a greedy step may be too large because the critic only approximates the value function locally. The argument for a sarsa-like method is that it may converge faster and does not need an additional optimization for the actor. Recently, approaches combining the features of both methods have been investigated. One-step Newton's method for $Q$-functions that are quadratic in the actions has been used to produce a sarsa-like algorithm for continuous domains \citep{gu2016continuous}, previously only tractable with policy gradient methods. For discrete action spaces, softmax $Q$-learning, a family of methods with a hybrid loss combining sarsa and $Q$-learning, has recently been linked to policy gradients via an entropy term \citep{o2016combining}. In this paper, GPG with Hessian-based exploration (Section \ref{ss-hessian-exp}) can be seen as another kind of hybrid. Specifically, it changes the mean of the policy slowly, similar to a vanilla policy gradient method, and computes the covariance greedily, similar to sarsa.

\subsection{DPG}
The update for the policy mean obtained in Corollary \ref{lem-agpg} is  the same as the DPG update, linking the two methods
\[
I^Q_\pi(s) = \atmus{\nabla_a \Qv(a,s)} \nabla_\theta  \mu_s.
\]

We now formalize the equivalences between EPG and DPG. First, any EPG method with a linear critic (or an arbitrary critic approximated by the first term in the Taylor expansion) is equivalent to DPG with actions from a given state $s$ drawn from an exploration policy of the form
\[
a \sim \pi(s) + n(a \vert s), \quad \text{where} \quad \excs{a \sim n}{a}{s} = 0.
\]
Here, the PDF of the zero-mean exploration noise $n(\cdot \vert s)$ must not depend on the policy parameters. This fact follows directly from Lemma \ref{lem-lin-crit}, which says that, in essence, a linear critic only gives information on how to shift the mean of the policy and no information about other moments. Second, on-policy GPG with a quadratic critic (or an arbitrary critic approximated by the first two terms in the Taylor expansion) is equivalent to DPG with a Gaussian exploration policy where the covariance is computed as in Section \ref{ss-hessian-exp}. This follows from Corollary \ref{lem-agpg}. Third, and most generally, for any critic at all (not necessarily quadratic), DPG is a kind of EPG for a particular choice of quadrature (using a Dirac measure). This follows from Theorem \ref{th-gpgt}.

Surprisingly, this means that DPG, normally considered to be off-policy, can also be seen as on-policy when exploring with Gaussian noise defined as above for the quadratic critic or any noise for the linear critic. Furthermore, the compatible critic for DPG \citep{silver2014deterministic} is indeed linear in the actions. Hence, this relationship holds whenever DPG uses a compatible critic.\footnote{The notion of compatibility of a critic is different for stochastic and deterministic policy gradients.} Furthermore, Lemma \ref{lem-gpg} lends new legitimacy to the common practice of replacing the critic required by the DPG theory, which approximates $\nabla_a Q$, with one that approximates $Q$ itself, as done in SPG and EPG.

\subsection{Entropy-Based Methods}
On-policy SPG sometimes includes an entropy term \citep{peters2010relative} in the gradient in order to aid exploration by making the policy more stochastic. The gradient of the differential entropy $\mathcal{H}(s)$ of the policy at state $s$ is defined as\footnote{For discrete action spaces, the same derivation with integrals replaced by sums holds for the entropy.}
\begin{align*}
    \textstyle  -  \textstyle \nabla_\theta  \mathcal{H}(s) & =  \textstyle \nabla_\theta  \int_a d \pi(a \vert s) \log \pi (a \vert s) \\ & \textstyle = \textstyle \int_a  da \nabla_\theta  \pi(a \vert s) \log \pi (a \vert s) +  \int_a  d \pi(a \vert s) \nabla_\theta  \log \pi (a \vert s) \\
 & \textstyle = \textstyle \int_a da \nabla_\theta  \pi(a \vert s) \log \pi (a \vert s) +  \int_a  d \pi(a \vert s) \frac{1}{ \pi (a \vert s)} \nabla_\theta  \pi(a \vert s) \\ & \textstyle = \textstyle \int_a da \nabla_\theta  \pi(a \vert s) \log \pi (a \vert s) +  \nabla_\theta  \textstyle \underbrace{\textstyle \int_a  d \pi(a \vert s)}_{1} \\
 & \textstyle =  \textstyle \int_a  da \nabla_\theta  \pi(a \vert s) \log \pi (a \vert s) = \textstyle  \int_a  d \pi(a \vert s) \nabla_\theta  \log \pi(a \vert s) \log \pi (a \vert s).
\end{align*}
Typically, we add the entropy update to the policy gradient update with a weight $\alpha$, obtaining
\begin{align}
\label{entropy-critic}
I_G^E(s) \textstyle &= I_G(s) + \alpha \nabla_\theta  \mathcal{H}(s) \nonumber \\
&= \textstyle \int_a  d \pi(a \vert s) \nabla_\theta  \log \pi(a \vert s) (\Qv(a,s) - \alpha \log \pi (a \vert s)).
\end{align}
This equation makes clear that performing entropy regularization is equivalent to using a different critic with $Q$-values shifted by $\alpha \log \pi (a \vert s)$. This holds for both EPG and SPG, including SPG with discrete actions where the integral over actions is replaced with a sum. This follows because adding entropy regularization to the objective of optimizing the total discounted reward in an RL setting corresponds to shifting the reward function by a term proportional to $\log \pi (a \vert s)$ \citep{neu2017unified, nachum2017bridging}. Indeed, the \emph{path consistency learning algorithm} \citep{nachum2017bridging} contains a formula similar to \eqref{entropy-critic}, though we obtained ours independently.

Next, we derive a further specialization of \eqref{entropy-critic} for the case where the parameters $\theta$ are shared between the actor and the critic. We start with the policy gradient identity given by \eqref{epg-main-eq} and replace the true critic $Q$ with the approximate critic $\hat{Q}$. Since this holds for any stochastic policy, we choose one of the form
\begin{gather*}
\label{policy-exp}
\pi(a \vert s) = \frac{1}{Z(s)} e^{\hat{Q}(a, s)}, \quad \text{where} \quad Z(s) = \int_a e^{\hat{Q}(a, s)} da.
\end{gather*}
For the continuous case, we assume that the integral in \eqref{policy-exp} converges for each state. Here, we assume that the approximate critic is parameterized by $\theta$. Because of the form of \eqref{policy-exp}, the policy is parameterized by $\theta$ as well. Now, for the policy class given by \eqref{policy-exp}, we can simplify the gradient update even further, obtaining
\begin{align*}
I_G^E(s) &=
\textstyle \int_a  d \pi(a \vert s) \nabla_\theta  \log \pi(a \vert s) (\hat{Q}(a,s) - \alpha \log \pi (a \vert s)) \\
&= \textstyle \int_a  \pi(a \vert s) \nabla_\theta  \log \pi(a \vert s) (\hat{Q}(a,s) - \alpha \underbrace{\log e^{\hat{Q}(a, s)}}_{\hat{Q}(a,s)} - \alpha \log Z(s)) \\
 &= (1 - \alpha) \textstyle \int_a  \pi(a \vert s) \nabla_\theta  \log \pi(a \vert s) \hat{Q}(a,s) \\
&=  - (1 - \alpha) \nabla_\theta  \mathcal{H}(s).
\end{align*}
In the above derivation, we could drop the term $\log Z(s)$ since it does not depend on $a$, as with a baseline. This shows that, in the case of sharing parameters between the critic and the policy as above, methods such as A3C \citep{mnih2016asynchronous}, which have both an entropy loss and a policy gradient loss, are redundant since entropy regularization does nothing except scale the learning rate.\footnote{In this argument, we ignore the effects of sampling on exploration.} Alternatively, for this shared parameterization, a policy gradient method simply subtracts entropy from the policy. In practice, this means that a policy gradient method with this kind of parameter sharing is quite similar to learning the critic alone and simply acting according to the argmax of the $Q$ values rather than representing the policy explicitly, producing a method similar to sarsa.

Another family of entropy-based methods are soft actor-critic methods \citep{haarnojaSoftActorCriticOffPolicy2018}. They combine a policy learned with an entropy bonus similar to \eqref{entropy-critic}, but with $\alpha = 1$ and a different critic. In particular, for stochastic policies the gradient of the soft actor loss \citep[Equation 12]{haarnojaSoftActorCriticOffPolicy2018}, can be written as
\begin{align*}
    I_G^\text{soft} \textstyle &= \textstyle \int_a  d \pi(a \vert s) \nabla_\theta  \log \pi(a \vert s) (\hat{Q}^\text{soft}(a,s) - \log \pi (a \vert s)) = \\
    & = \nabla_\theta  \left[ \int_a d \pi(a \vert s)  \log \pi(a \vert s) \hat{Q}^\text{soft}(a,s)  -  \mathcal{H}(s) \right].
\end{align*}
Here, $\hat{Q}^\text{soft}$ is a separate critic, learned in a way analogous to the regular critic $\hat{Q}$, but based off a reward function $R(a,s)^\text{soft} = R(a,s) + \gamma \mathcal{H}(s)$, where the term $\gamma \mathcal{H}(s)$ is an intrinsic entropy bonus. The formula above assumes that $\hat{Q}^\text{soft}$ is a fixed learned approximation, i.e., it does not depend on the policy parameters. The rationale for using such a critic is largely orthogonal to this paper and given by \citet{haarnojaSoftActorCriticOffPolicy2018}. Transforming the integral above by a change of variables, we obtain the formula
\[
    I_G^\text{soft} \textstyle = \textstyle \int_\epsilon d N(\epsilon) \nabla_\theta  \log \pi(a \vert s) + (\nabla_a \log \pi(a \vert s) - \nabla_a Q^\text{soft}(a,s) ) \nabla_\theta  f(\epsilon, s).
\] 
Here, the reparameterization function $f$ is chosen such that the pdf of $\pi$ matches the pdf of $f(\epsilon)$ and $\epsilon \sim N$ is sampled from the standard normal. Soft actor-critic uses a one-sample Monte Carlo estimate of the integral above, which can be written as
\[
    I_G^\text{soft} \textstyle \approx \textstyle \nabla_\theta  \log \pi(a \vert s) + (\nabla_a \log \pi(a \vert s) - \nabla_a Q^\text{soft}(a,s) ) \nabla_\theta  f(\epsilon, s), 
\] 
where $\epsilon \sim N(\epsilon)$ and $a = f(\epsilon)$.

\subsection{Off-Policy Actor-Critic}
Off-policy learning with policy gradients typically follows the framework of \emph{off-policy actor-critic} \citep{degris2012off}. Denote the behavior policy as $b(a \mid s)$ and the corresponding discounted-ergodic measure as $\rho_b$. The method uses the reweighing approximation
\begin{align}
  \label{eq-offpac-rew}
  \nabla_\theta  J &= \int_s d \rho(s) \int_a d \pi(a \mid s) \nabla_\theta  \log \pi(a \mid s) (\Qv(a,s)) \nonumber \\
  &\approx \int_s d \rho_b(s) \int_a d \pi(a \mid s) \nabla_\theta  \log \pi(a \mid s) (\Qv(a,s)).
\end{align}
The approximation is necessary since, as the samples are generated using the policy $b$, it is not known how to approximate the integral with $\rho$ from samples, while it is easy to do so for an integral with $\rho_b$. A natural off-policy version of EPG emerges from this approximation (see Algorithm \ref{alg-op-epg}), which simply replaces the inner integral with $I^Q_\pi$, giving
\begin{gather*}
  \label{eq-offpac-epg}
\int_s d \rho_b(s) \int_a d \pi(a \mid s) \nabla_\theta  \log \pi(a \mid s) (\Qv(a,s)) = \int_s d \rho_b(s) I^Q_\pi(s).
\end{gather*}
Here, we use an analytic solution to $I^Q_\pi(s)$ as before. The importance sampling term $\frac{\pi(a \mid s)}{b(a\mid s)}$ does not appear because, as the integral is computed analytically, there is no sampling in $I^Q_\pi(s)$, much less sampling with an importance correction. Of course, the algorithm also requires an off-policy critic for which an importance sampling correction is typically necessary. Indeed, \eqref{eq-offpac-epg} makes clear that off-policy actor-critic differs from SPG in two places: the use of $\rho_b$ as in \eqref{eq-offpac-rew} and the use of an importance-sampled Monte Carlo estimator, rather than regular Monte Carlo, for the inner integral.

\begin{algorithm}[ht]
\begin{algorithmic}[1]
 \State $s \gets s_0$, $t \gets 0$
 \State initialize optimizer, initialize policy $\pi$ parameterized by $\theta$
\While{not converged}
\State $g_t \gets \gamma^t$ \textsc{do-integral}($\hat{Q}, s, \pi_\theta $) \Comment{$g_t$ is the estimated policy gradient as per \eqref{epg-samples-i}}
 \State $\theta \gets  \theta \; + \;  $optimizer.\textsc{update}$(g_t) $
 \State $a \sim b(\cdot, s)$
 \State $s',r \gets $ simulator.\textsc{perform-action}(a)
 \State $\hat{Q}$.\textsc{update}($s,a,r,s',\pi,b$) \Comment{Off-policy critic algorithm}
 \State $t \gets t + 1$
 \State $s \gets s'$
\EndWhile

\end{algorithmic}
\caption{Off-policy expected policy gradients with reweighing approximation}
\label{alg-op-epg}
\end{algorithm}


\subsection{Value Gradient Methods}
\emph{Value gradient methods} \citep{fairbank2014value, fairbank2012value, heess2015learning} assume the same parametrization of the policy as policy gradients, i.e., $\pi$ is parameterized by $\theta$, and maximize $J$ by recursively computing the gradient of the value function. In our notation, the policy gradient has the following connection with the value gradient of the initial state
\begin{gather*}
\label{eq-init-valgrad}
\nabla_\theta  J = \int_{s_0} dp_0(s_0) \nabla_\theta  \Vv(s)_0.
\end{gather*}
Value gradient methods use a recursive equation that computes $\nabla_\theta  \Vv(s)$ using $\nabla_\theta  \Vv(s)'$ where $s'$ is the successor state. In practice, this means that a trajectory is truncated and the computation goes backward from the last state all the way to $s_0$, where \eqref{eq-init-valgrad} is applied, so that the resulting estimate of $\nabla_\theta  J$ can be used to update the policy. The recursive formulae for $\nabla_\theta  \Vv(s)$ are based on the differentiated Bellman equation
\begin{gather*}
\label{eq-rec-valgrad}
\nabla_\theta  V = \nabla_\theta   \int_a d\pi(a \vert s) \left ( R(a, s)  + \gamma \int_{s'} p(s' \vert a,s ) \Vv(s)' \right).
\end{gather*}

Different value gradient methods differ in the form of the recursive update for the value gradient obtained from \eqref{eq-rec-valgrad}. For example, \emph{stochastic value gradients} (SVG) introduce a reparameterization both of $\pi$ and $p(s' \vert a,s )$ i.e.
\begin{gather*}
s' \sim p(\cdot \vert a,s ) \quad \Leftrightarrow \quad s' = f(a,s,\xi) \;\; \text{with} \;\; \xi \sim \mathcal{B}_1, \\
a \sim \pi(\cdot \vert s) \quad \Leftrightarrow \quad a = h(s,\eta) \;\; \text{with} \;\; \eta \sim \mathcal{B}_2.
\end{gather*}
Here, we denote the base noise distributions as $\mathcal{B}_1$ and $\mathcal{B}_2$, while $f$ and $h$ are deterministic functions. The function $f$ can be thought of as an MDP transition model. SVG rewrites \eqref{eq-rec-valgrad} using this reparametrization, to obtain

\begin{align}
\textstyle \nabla_\theta  V &= \textstyle \nabla_\theta   \int_\eta  d\mathcal{B}_2(\eta) \left ( R(s, h(s,\eta))  + \gamma  \int_{s'} d\mathcal{B}_1(\xi) V(f(h(s,\eta),s,\xi )) \right) = \nonumber \\
 &=  \textstyle \int_\eta  d\mathcal{B}_2(\eta) \left (\textstyle \nabla_\theta  R(s, h(s,\eta))  + \gamma  \int_{\xi} d\mathcal{B}_1(\xi) \nabla_\theta  V(\underbrace{f(h(s,\eta),s,\xi )}_{s'}) \right). \label{eq-svg-u}
\end{align}
Here, the quantities $\nabla_\theta  R(s, h(s,\eta))$ and $\nabla_\theta  V(f(h(s,\eta),s,\xi ))$ can be computed by the chain rule from the known reward model $R$ and a transition model $f$. SVG learns the approximate model $\hat{f}, \hat{R}, \hat{\xi}, \hat{\eta}$ from samples uses a sample-based approximation to \eqref{eq-svg-u} to obtain the value gradient recursion.

By contrast, we now derive a related but simpler value gradient method that does not require a model or a reparameterized policy,\footnote{SVG($\infty$) and SVG(1) require a model and a policy reparameterization while SVG(0) requires only a policy reparameterization. In fact, SVG(0) can be thought of as a direct analog of DPG or reparametrized gradient methods in the value gradient setting.} starting with \eqref{eq-rec-valgrad}. The value gradient is given by
\begin{align}
\nabla_\theta  \Vv(s) &= \textstyle \nabla_\theta   \int_a d\pi(a \vert s) \left ( R(a, s)  + \gamma \int_{s'} p(s' \vert a,s )  \right) \nonumber \\
&=  \textstyle \int_a da \nabla_\theta  \pi(a \vert s) R(a, s)  + \gamma \nabla_\theta  \pi(a \vert s) \left ( \int_{s'}  p(s' \vert a,s) \Vv(s)'  \right) \\  &\quad\quad\quad\quad\quad \quad\quad\quad\quad\quad \quad\quad\quad\quad\quad \textstyle + \pi(a \vert s) \nabla_\theta  \left( \int_{s'} p(s' \vert a,s ) \Vv(s)' \right) \nonumber \\
&= \textstyle \int_{a,s'} \pi(a \vert s) p(s' \vert a,s) \big( \nabla_\theta  \log \pi(a \vert s) R(a,s)  \\  &\quad\quad\quad\quad\quad \quad\quad\quad\quad\quad \quad\quad\quad\quad\quad \textstyle +  \nabla_\theta  \Vv(s)' + \nabla_\theta  \log \pi(a \vert s) \Vv(s)' \big).
\label{eq-cr-valgrad}
\end{align}
We can use random sampling to approximate \eqref{eq-cr-valgrad} 
\[
\hat{\nabla}_\theta   \Vv(s) \approx \left( \nabla_\theta  \log \pi(a \vert s) R(a,s)  +  \hat{ \nabla_\theta }   \Vv(s)' ) + \nabla_\theta  \log \pi(a \vert s) \hat{V}(s') \right).
\]
Here, the pair $a$ corresponds to the action taken at $s$ and $s'$ to the successor state. This method requires learning a critic, while SVG requires a model.

An  additional connection between value gradient methods and policy gradients is that, since the quantity $I_G(s)$ in Theorem \ref{th-gpgt} can be written as $I_G(s) = \nabla_\theta  \Vv(s) - \gamma \int_{s'} d p_\pi(s' \mid s) \nabla_\theta  \Vv(s)')$, we can think of this theorem as showing how to obtain a policy gradient from a value gradient without backwards iteration.

\subsection{Methods using Sums or Integrals over Actions}
\label{me-sum-i}
Several methods that involve summations over actions pre-date EPG. For discrete actions, it was first explicitly introduced in an unrefereed draft by \citet{sutton2000comparing} and independently developed by \citet{bahdanau2016actor}. The same idea was also independently developed as Mean Actor Critic \citep{2017arXiv170900503A}, concurrently with EPG. 
Our work differs from these other efforts in that we treat both continuous and discrete action spaces and analyze the algorithm in a rigorous theoretical framework. We believe this contribution is significant because, experimentally, the performance benefit of EPG occurs for systems with continuous action spaces. 

For continuous action spaces, the idea of numerical integration over actions is first mentioned by \citet{kakade2002natural}, where a continuous Linear-Quadratic-Gaussian (LQC) control with positive-definite quadratic critic is solved. However, this method is computationally inefficient.\footnote{\citet{kakade2002natural} uses a numerical quadrature ran to convergence to evaluate the policy gradient exactly. It does not describe the exact type of quadrature used.} \citet{petersPolicyGradientMethods2003} suggested extending  \citep{sutton2000comparing} to continuous action spaces for linear critics. They apply  quadrature to pre-compute a matrix, which can then be used to obtain the policy gradient by matrix-vector multiplication. However, their method crucially depends on using a linear critic. The idea of performing integration over actions is further explored in Interpolated Policy Gradients \citep{guInterpolatedPolicyGradient2017}. The IPG interpolation scheme between deterministic and stochastic gradients can be viewed as a specific type of Monte-Carlo quadrature that could be used in EPG. In addition, both the concept of the Normalized Advantage Function control variate \cite[Appendix, Section 11.3]{guInterpolatedPolicyGradient2017} and the quadratic assumption in MORE/MOTO \citep{abdolmalekiModelbasedRelativeEntropy2015, akrourModelFreeTrajectoryOptimization2016} can be viewed as the special case of EPG with quadratic critics where the Hessian is positive/negative definite.\footnote{MORE \citep{abdolmalekiModelbasedRelativeEntropy2015} does cover some cases of indefinite Hessians, but at the cost of heuristically setting the value of a Lagrange multiplier, which may lead to constraint violations.} Bayesian Actor-Critic \citep{ghavamzadeh2016bayesian} also employs a similar idea in the context of learning a critic as a Gaussian Process, where the kernel is chosen such that the mean estimate of the actor GP roughly corresponds to choosing a specific subset of quadratic critics with positive-definite Hessians. The advantage of EPG is that it works with \emph{any} Hessian, and we derive an explicit formula for the exploration covariance, for which indefinite Hessians are crucial for good performance. EPG also allows families of critics more general than quadratic functions. 

After EPG was originally published, \citet{nachumSmoothedActionValue2018} introduced a similar approach to policy gradients, called the smoothed action-value function. Since the authors present it as novel, we provide an explicit comparison here. First, the concept of the smoothed action function is, up to differentiation, essentially the same as our definition of the inner integral $I_G$; in particular, we have the identity
\[
    I^Q_\pi(s) = \nabla_\theta \tilde{Q}^\pi(s, \mu).
\]
Here, the expression to the right is the ``smoothed action value" \cite[Eq. 9]{nachumSmoothedActionValue2018}, while the term $I^Q_\pi(s)$ is defined in \eqref{epg-main-eq} of this article as well as in \cite[Equation 9]{epg-aaai}. \citet[Section 5]{nachumSmoothedActionValue2018} claim that they can handle any critic while avoiding quadrature, as opposed to EPG which requires either numerical quadrature or a tractable analytical form of the critic. \citet[Section 5]{nachumSmoothedActionValue2018} describe the process of what they call ``estimating an integral" as different from approximating an integral, which is what EPG with numerical quadrature does. However, the method they introduce for computing $\tilde{Q}$ given by \citet[Equation 19]{nachumSmoothedActionValue2018} in fact is a kind of numerical quadrature. Indeed, it requires computing averages over function evaluations at a number of what they call ``phantom actions", i.e., actions used only to compute the value of the smoothed action-value function. These phantom actions are normally called \emph{abscissae} in the numerical integration literature. Therefore, the only substantial difference between smoothed actor-critic \citep{nachumSmoothedActionValue2018} and our original work on EPG is the use of a different kind of quadrature and the fact that performing quadrature is integrated into the process of learning the critic -- \citet{nachumSmoothedActionValue2018} learn the \emph{integrated critic} directly, rather than learning a regular critic and then integrating over it.

\subsection{Methods Using the Geometry of the Policy Space}
Policy gradient methods can be improved by adjusting the policy update in a way that respects distances in the space of probability distributions \citep{kakade2002natural, amari1998natural, peters2008natural, furmston2012unifying, furmston2016approximate, schulman2015trust, NachumTrustpclOffpolicy}, leading to a \emph{natural gradient} method. Trust Region Policy Optimization \citep{schulman2015trust}, a representative recent method in this family, is based on an optimization problems of either the form
\begin{gather*}
\theta^\star = \argmax_{ \{ \theta: \; {\scriptscriptstyle \text{KL} \left( \pi_\theta, \pi_{\text{old}} \right) \, < \,  \delta } \} }  J_\theta,   \label{eq-trpo-constr} \quad \text{or} \\
\theta^\star = \argmax_{ \theta } \left( J_\theta  - \text{KL} \left( \pi_\theta, \pi_{\text{old}} \right) \right). \label{eq-trpo-sum}
\end{gather*}
Here, the total discounted return $J$ is defined as in \eqref{epg-main-eq} and $\pi_{\text{old}} $ is the policy from the previous time-step. \eqref{eq-trpo-constr} gives the version of TRPO as implemented, while \eqref{eq-trpo-sum} gives a version that comes form the theoretical analysis \citep{schulman2015trust}. TRPO relies on Monte-Carlo approximation to the policy gradient integral in the gradient of $J$, while performing analytic integration in the KL term. In principle, one can combine EPG with TRPO by computing \emph{both} integrals analytically. This is done by Model-Free Trajectory Optimization (MOTO) \citep{akrourModelFreeTrajectoryOptimization2016}, albeit only for critics that are both quadratic and negative-definite. Another example is the recent work of \citet{abdolmalekiMaximumPosterioriPolicy2018}, where the KL constraint is introduced using the RL-as-inference framework. Because the idea of using a KL constraint is orthogonal to the main thrust of this article, we leave the study of such hybrid algorithms to future work. 

In Section \ref{exp-ppo}, we instead compare EPG with PPO \citep{schulman2017proximal}, an existing established approximation to natural gradients with good empirical performance.

\section{Conclusions}
This paper proposed a new framework for reasoning about policy gradient methods called \emph{expected policy gradients} (EPG) that integrates across the action selected by the stochastic policy, thus reducing variance compared to existing stochastic policy gradient methods. We proved a new general policy gradient theorem subsuming the stochastic and deterministic policy gradient theorems, which covers any reasonable class of policies. We showed that analytical results for the policy update exist and, in the most common cases, lead to a practical algorithm (the analytic updates are summarized in Table \ref{tab-an-results}). We also gave universality results which state that, under certain broad conditions, the quadrature required by EPG can be performed analytically. For Gaussian policies, we also developed a novel approach to exploration that infers the exploration covariance from the Hessian of the critic. The analysis of EPG yielded new insights about DPG and delineated the links between the two methods. We have also described a version of EPG that works with discrete policies. Finally, we discussed the connections between EPG and other common RL techniques, notably sarsa, $Q$-learning, entropy regularization and methods taking account of the geometry of the policy space. We performed an extensive empirical evaluation of versions of EPG based on analytic and numerical quadrature.

\begin{table}[tb]
    \centering
    \small
        \begin{tabular}{ p{3.8cm} p{2cm} p{2.5cm} p{4.3cm} }
         \bf Policy Class & \bf Squashing & \bm{$\hat{Q}$} & \bf Analytic Update \\
         Normal, $a \in \mathbb{R}^d$ & none & $a^\top A_s a + a^\top B_s$ & \multirow{3}{*}{
         $\!\!\!\!\!\!\!\!\!\Bigg\}$
         \begin{minipage}{4.2cm} \tiny
         $ I^Q_{\pi(s), \mu_s} = (\nabla_\theta  \mu_s) (2 A_s \mu_s + B_s) $,\newline $I^Q_{\pi(s), \Sigma^{1/2}_s} = (\nabla_\theta  \Sigma^{1/2}_s)  A_s \Sigma^{1/2}_s$
         \end{minipage}
         } \\
         Logit-Normal; $a \in [0,1]^d$ & $a = \text{expit}(b)$ & $b^\top A_s b + b^\top B_s$ &  \\
         Log-Normal; $a \in [0,\infty]^d$ & $a = e^b$ & $b^\top A_s b + b^\top B_s$ &  \\
         any policy & none & $B_s^\top a$ & $I^Q_\pi(s) =  \nabla_\theta  \mu_{\pi(\cdot \mid s)}  B_s$ \\
        \end{tabular}
    \caption{A summary of the most useful analytic results for expected policy gradients. For bounded action spaces, we assume that the bounding interval is $[0,1]$ or $[0,\infty]$.}
    \label{tab-an-results}
\end{table}

\nocite{parisi2016multi}

\acks{This project has received funding from the European Research Council (ERC) under the European Union's Horizon 2020 research and innovation programme (grant agreement number 637713).  Experiments performed at Oxford were made possible by a generous equipment grant from NVIDIA. Moreover, we appreciate the comments from the JMLR reviewers, which led to improvements in our original submission. We also  thank Paavo Parmas and Rika Antonova for helpful feedback as well as Kaiqing Zhang and Zac Chen for pointing out typos in the original draft. }

\newpage

\appendix
\section{ }

\subsection{Proofs and Detailed Definitions}
\label{sec-proofs}
First, we prove two lemmas concerning the discounted-ergodic measure $\rho(s)$, of the Markov chain induced by fixing a policy in an MDP. They have been implicitly realized for some time but as far as we could find, never proved explicitly in RL literature.

\begin{definition}[ Time-dependent occupancy] The time-dependent state occupancy is defined as
\label{def-tdo}
\begin{gather*}
p(s \mid t=0 ) = p_0(s) \;\; \text{and}\\
p(s' \mid t=i+1 ) = \int_s p(s' \mid s) p(s \mid t=i ) \quad \text{for} \quad i \geq 0.
\end{gather*}
\end{definition}

\begin{definition}[Truncated trajectory]
We define a trajectory truncated after $N$ steps as $\tau_N = (s_0,a_0,r_0,s_1,a_1,r_1,\dots,s_N)$.
\end{definition}

\begin{observation}[Expectation with respect to truncated trajectory]
Since $\tau_N = (s_0,s_1,s_2,\dots,s_N)$ is associated with the density $ \prod_{i=0}^{N-1} p(s_{i+1} \mid s_i) p_0(s_0)$, we have
\begin{align*} \textstyle
& \textstyle \exs{\tau_N} {\sum_{i=0}^{N} \gamma^i f(s_i)} = \\
 & \quad = \textstyle \int_{s_0, s_1, \dots, s_N} \left (\prod_{i=0}^{N-1} p(s_{i+1} \mid s_i) \right) p_0(s_0) \left( \sum_{i=0}^{N} \gamma^i f(s_i) \right) ds_0 ds_1 \dots ds_N= \\
 & \quad= \textstyle \sum_{i=0}^{N} \int_{s_0, s_1, \dots, s_N} \left ( p_0(s_0) \prod_{i=0}^{N-1} p(s_{i+1} \mid s_i) \right) \gamma^i f(s_i) ds_0 ds_1 \dots ds_N = \\
 & \quad= \textstyle \sum_{i=0}^{N} \int_s p(s \mid t=i) \gamma^i f(s) ds
\end{align*}
for any function $f$.

\end{observation}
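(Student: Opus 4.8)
The plan is to prove the identity directly from the definition of expectation against the trajectory density, using only linearity of integration together with the recursive structure of Definition \ref{def-tdo}. First I would write $\exs{\tau_N}{\sum_{i=0}^{N} \gamma^i f(s_i)}$ as the integral of the joint density $p_0(s_0)\prod_{j=0}^{N-1} p(s_{j+1}\mid s_j)$ against $\sum_{i=0}^{N}\gamma^i f(s_i)$ over all coordinates $s_0,\dots,s_N$. Since the sum is finite, interchanging it with the integral (justified under a mild integrability assumption on $f$, e.g.\ boundedness, which is implicitly in force) yields $\sum_{i=0}^{N}\gamma^i \int_{s_0,\dots,s_N} p_0(s_0)\prod_{j=0}^{N-1} p(s_{j+1}\mid s_j)\, f(s_i)\, ds_0\cdots ds_N$. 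This reduces the problem to evaluating, for each fixed $i$, the marginalisation of the joint density down to the single coordinate $s_i$.

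The core step is to show that marginalising the joint density over every coordinate except $s_i$ produces exactly $p(s_i \mid t=i)$. I would split this into integrating out the ``future'' states $s_{i+1},\dots,s_N$ and the ``past'' states $s_0,\dots,s_{i-1}$. For the future, I integrate from $s_N$ inward: $\int_{s_N} p(s_N \mid s_{N-1})\, ds_N = 1$ because each transition kernel is a normalised measure, and repeating this collapses $s_{N-1},\dots,s_{i+1}$ in turn (the integrand $f(s_i)$ does not depend on any of them), leaving the factor $p_0(s_0)\prod_{j=0}^{i-1} p(s_{j+1}\mid s_j)$.

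For the past, I integrate from $s_0$ outward and argue by induction on $i$ that $\int_{s_0,\dots,s_{i-1}} p_0(s_0)\prod_{j=0}^{i-1} p(s_{j+1}\mid s_j)\, ds_0\cdots ds_{i-1} = p(s_i \mid t=i)$. The base case $i=0$ is immediate, since the leftover factor is just $p_0(s_0)=p(s_0\mid t=0)$, and the inductive step is precisely the recursion $p(s' \mid t=i+1) = \int_s p(s'\mid s)\, p(s \mid t=i)$ from Definition \ref{def-tdo}. Combining the two marginalisations gives $\int_{s_0,\dots,s_N} p_0(s_0)\prod_{j=0}^{N-1} p(s_{j+1}\mid s_j)\, f(s_i)\, ds_0\cdots ds_N = \int_{s_i} p(s_i\mid t=i)\, f(s_i)\, ds_i$, and summing over $i$ with the $\gamma^i$ weights yields the claimed $\sum_{i=0}^{N}\gamma^i \int_s p(s\mid t=i)\, f(s)\, ds$.

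I expect the main obstacle to be purely notational bookkeeping rather than anything conceptual: organising the iterated integral so that the future kernels collapse cleanly to one while the past kernels assemble into $p(s_i\mid t=i)$ via the recursion, and being explicit that the interchange of the finite sum with the integral is legitimate. No deeper machinery than Definition \ref{def-tdo} and Fubini--Tonelli is needed.
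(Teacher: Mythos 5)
Your proposal is correct and follows essentially the same route as the paper: expand the expectation against the joint trajectory density, interchange the finite sum with the integral, and marginalise each summand down to the coordinate $s_i$, which yields $p(s\mid t=i)$ via the recursion in Definition \ref{def-tdo}. The paper leaves the final marginalisation implicit in its last displayed equality; your explicit split into collapsing the future kernels (each integrating to one) and assembling the past kernels by induction is exactly the bookkeeping that justifies that step.
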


\begin{definition}[Expectation with respect to infinite trajectory]\ \\
For any bounded function $f$, we have
\begin{gather*}
\exs{\tau} {\sum_{i=0}^{\infty} \gamma^i f(s_i)} \triangleq \lim_{N \rightarrow \infty} \exs{\tau_N} {\sum_{i=0}^{N} \gamma^i f(s_i)}.
\end{gather*}
Here, the sum on the left-hand side is part of the symbol being defined.
\end{definition}

\begin{observation}[Property of expectation with respect to infinite trajectory]\ \\
We have 
\label{obs-eprop}
\begin{align*} \textstyle
\exs{\tau} {\sum_{i=0}^{\infty} \gamma^i f(s_i)}
 &= \textstyle \lim_{N \rightarrow \infty} \exs{\tau_N} {\sum_{i=0}^{N} \gamma^i f(s_i)} = \\
 &= \textstyle \lim_{N \rightarrow \infty} \sum_{i=0}^{N} \int_s p(s \mid t=i) \gamma^i f(s) d s  = \\
 &= \sum_{i=0}^\infty \int_s d p(s \mid t=i) \gamma^i f(s)
\end{align*}
for any bounded function $f$.
\end{observation}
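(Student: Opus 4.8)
The plan is to establish the three displayed equalities one at a time; each reduces to an earlier definition or observation, and only the last requires any genuine argument.

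The first equality is nothing more than the Definition of the expectation with respect to an infinite trajectory: the left-hand symbol $\exs{\tau}{\sum_{i=0}^{\infty} \gamma^i f(s_i)}$ is \emph{defined} to be the limit $\lim_{N \rightarrow \infty} \exs{\tau_N}{\sum_{i=0}^{N} \gamma^i f(s_i)}$, so I would simply invoke that definition. For the second equality, I would apply the earlier Observation on the expectation with respect to a truncated trajectory inside the limit: that observation already gives, for every fixed $N$, the identity $\exs{\tau_N}{\sum_{i=0}^{N} \gamma^i f(s_i)} = \sum_{i=0}^{N} \int_s p(s \mid t=i) \gamma^i f(s)\, ds$, and substituting this expression under the limit yields the claim directly.

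The third equality is where the only real work lies: I must show that the limit of the partial sums $\lim_{N \rightarrow \infty} \sum_{i=0}^{N} c_i$, with $c_i = \int_s p(s \mid t=i) \gamma^i f(s)\, ds$, coincides with the infinite series $\sum_{i=0}^{\infty} c_i$. This holds as soon as the series converges. Since $f$ is bounded, I would fix $M$ with $|f| \leq M$; because each time-dependent occupancy $p(\cdot \mid t=i)$ of Definition \ref{def-tdo} is a probability measure of total mass one, this gives $|c_i| \leq \gamma^i M$, and as $\gamma \in [0,1)$ the geometric majorant $\sum_{i=0}^{\infty} \gamma^i M = M/(1-\gamma)$ is finite. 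Hence $\sum_i c_i$ converges absolutely, the limit of its partial sums equals its sum by the definition of an infinite series, and the third equality follows.

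The main obstacle is the (mild) convergence argument in the final step, and in particular making explicit that each $p(\cdot \mid t=i)$ has total mass one, so that boundedness of $f$ transfers to the uniform bound $|c_i| \leq \gamma^i M$, and that $\gamma < 1$ is exactly what renders the geometric bound summable. Once absolute convergence is secured, the equality between the limit of the partial sums and the infinite series is immediate. The only remaining point is cosmetic: $p(s \mid t=i) \cdots ds$ and $d p(s \mid t=i) \cdots$ denote the same Lebesgue integral against the measure $p(\cdot \mid t=i)$, following the notational convention fixed in Section \ref{sec:bg}.
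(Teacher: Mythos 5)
Your proposal is correct and follows essentially the same route as the paper: the first equality is the definition of the expectation with respect to an infinite trajectory, the second is the earlier observation on truncated trajectories applied inside the limit, and the third identifies the limit of partial sums with the series. The paper leaves the final convergence step implicit, whereas you justify it explicitly via the geometric bound $|c_i| \leq \gamma^i M$ from boundedness of $f$ and $\gamma < 1$; this is a welcome (and harmless) addition rather than a departure.
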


\begin{definition}[Discounted-ergodic occupancy measure $\rho$]
\label{def-deom} We define the discounted-ergodic occupancy measure as
\[
\rho(s) = \sum_{i=0}^{\infty} \gamma^i p(s \mid t=i).
\]
\end{definition}
The measure $\rho$ is not normalized in general. Intuitively, it can be thought of as `marginalizing out' the time in the system dynamics.

\begin{lemma}[Discounted-ergodic property]
\label{lem-d-ergoidic}
For any bounded function $f$, we have
\[
\int_s \rho(s) f(s) = \exs{\tau}{\sum_{i=0}^{\infty}  \gamma^i f(s_i)}.
\]
\end{lemma}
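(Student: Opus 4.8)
The plan is to bridge the left-hand side, which is an integral against the measure $\rho$, and the right-hand side, which is an expectation over infinite trajectories, using the two facts already assembled in the Appendix: the definition of $\rho$ (Definition \ref{def-deom}) as the discounted sum $\rho(s) = \sum_{i=0}^\infty \gamma^i p(s \mid t=i)$ of the time-dependent occupancies, and Observation \ref{obs-eprop}, which has already rewritten the trajectory expectation as $\exs{\tau}{\sum_{i=0}^\infty \gamma^i f(s_i)} = \sum_{i=0}^\infty \int_s dp(s \mid t=i)\, \gamma^i f(s)$. Given these, the lemma reduces to matching the two expressions.

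First I would substitute the definition of $\rho$ directly into the left-hand side $\int_s \rho(s) f(s)$, obtaining the integral of a sum, $\int_s \left( \sum_{i=0}^\infty \gamma^i p(s \mid t=i) \right) f(s)$. The remaining work is to interchange the summation and the integration so that this becomes $\sum_{i=0}^\infty \gamma^i \int_s p(s \mid t=i) f(s)$, which is exactly the final line of Observation \ref{obs-eprop} and therefore equals the desired trajectory expectation. Once the interchange is justified, the identity follows immediately by matching terms.

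The main obstacle — indeed essentially the only non-routine step — is justifying this interchange of $\sum$ and $\int$. I would invoke Tonelli/Fubini (equivalently, dominated convergence). Since $f$ is assumed bounded, say $|f| \le M$, and each $p(\cdot \mid t=i)$ is a probability measure so that $\int_s p(s \mid t=i) = 1$, the series of absolute values is dominated: $\sum_{i=0}^\infty \gamma^i \int_s |p(s \mid t=i) f(s)| \le M \sum_{i=0}^\infty \gamma^i = \frac{M}{1-\gamma} < \infty$, where finiteness uses $\gamma \in [0,1)$. This uniform bound legitimises swapping the sum and the integral, completing the identification with Observation \ref{obs-eprop} and hence proving the lemma.
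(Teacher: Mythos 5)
Your proposal is correct and follows essentially the same route as the paper's own proof: both reduce the identity to Observation \ref{obs-eprop} together with Definition \ref{def-deom} and an interchange of the sum over $i$ with the integral over $s$. The only difference is that you explicitly justify that interchange via Tonelli/dominated convergence using $|f|\le M$ and $\gamma\in[0,1)$, a step the paper leaves implicit.
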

\begin{proof}
We re-write the expectation as
\[
\exs{\tau}{\sum_{i=0}^{\infty} \gamma^i f(s_i) } = \sum_{i=0}^{\infty} \gamma^i \int_s p(s \mid t=i) f(s) ds = \int_s \underbrace{ \left[ \sum_{i=0}^{\infty} \gamma^i p(s \mid t=i) \right ]}_{\rho(s)} f(s) ds.
\]
Here, the first equality follows from Observation \ref{obs-eprop}.
\end{proof}
This property is useful since the expression on the left can be easily manipulated while the expression on the right can be estimated from samples using Monte Carlo.

\begin{lemma}[Generalized eigenfunction property]
\label{gep-lemma}
For any bounded function $f$, we have
\[
\gamma \int_s d \rho(s) \int_{s'} d p(s' \mid s) f(s') = \left( \int_s d \rho(s) f(s) \right) - \left ( \int_s d p_0(s) f(s) \right).
\]
\end{lemma}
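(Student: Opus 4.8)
The plan is to prove the identity directly by unfolding the definition of $\rho$ from Definition~\ref{def-deom} and exploiting the one-step recursion of Definition~\ref{def-tdo}. First I would substitute $\rho(s) = \sum_{i=0}^\infty \gamma^i p(s \mid t=i)$ into the left-hand side and pull the sum outside the integrals, obtaining
\[
\gamma \int_s d\rho(s) \int_{s'} d p(s' \mid s) f(s') = \gamma \sum_{i=0}^\infty \gamma^i \int_s p(s \mid t=i) \int_{s'} p(s' \mid s) f(s')\, ds'\, ds.
\]

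The second step recognises the inner integral over $s$ as precisely the defining recursion for the time-dependent occupancy, namely $\int_s p(s \mid t=i)\, p(s' \mid s)\, ds = p(s' \mid t=i+1)$. Substituting this collapses the double integral and yields $\sum_{i=0}^\infty \gamma^{i+1} \int_{s'} p(s' \mid t=i+1) f(s')\, ds'$. Reindexing with $j = i+1$ rewrites this as $\sum_{j=1}^\infty \gamma^j \int_s p(s \mid t=j) f(s)\, ds$, i.e.\ the series defining $\int_s d\rho(s) f(s)$ but with the $j=0$ term removed.

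The final step compares this with $\int_s d\rho(s) f(s) = \sum_{j=0}^\infty \gamma^j \int_s p(s \mid t=j) f(s)\, ds$: the two differ exactly by the missing $j=0$ term, which by the base case $p(s \mid t=0) = p_0(s)$ equals $\int_s d p_0(s) f(s)$. Rearranging then gives the claimed equality, so the whole argument is essentially a telescoping/index-shift on the discounted series.

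The only delicate point---and hence the main, if mild, obstacle---is justifying the interchange of the infinite sum with the two integrals in the first step. Since $f$ is bounded and $\gamma \in [0,1)$, the discounted series converges absolutely, so Tonelli's and Fubini's theorems license the swap; no further analytic subtleties arise, and in particular the argument never needs the ergodicity assumption, only the summability coming from discounting.
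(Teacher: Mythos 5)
Your proposal is correct and follows essentially the same route as the paper's proof: expand $\rho$ via Definition~\ref{def-deom}, absorb the one-step transition using the recursion in Definition~\ref{def-tdo}, shift the summation index, and identify the missing $i=0$ term with $\int_s dp_0(s) f(s)$. Your additional remark justifying the sum--integral interchange via boundedness of $f$ and $\gamma<1$ is a sound (and slightly more careful) touch that the paper leaves implicit.
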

\begin{proof}
We rewrite the expression in the left as
\begin{align*}
    \textstyle
\gamma \int_s d\rho(s) \int_{s'} d p(s' \mid s) f(s') &= \textstyle \gamma \sum_{i=0}^{\infty} \gamma^i \int_{s,s'} p(s \mid t=i) p(s' \mid s) f(s') ds ds'= \\
&= \textstyle \sum_{i=0}^{\infty} \gamma^{i+1} \int_{s'} dp(s' \mid t=i+1) f(s') \\
&= \textstyle \sum_{i=1}^{\infty} \gamma^{i} \int_{s'} dp(s' \mid t=i) f(s') \\
&= \textstyle \left ( \sum_{i=0}^{\infty} \gamma^{i} \int_{s'} dp(s' \mid t=i) f(s') \right) - \left ( \int_s dp_0(s) f(s) \right) \\
&= \textstyle \left( \int_s d\rho(s) f(s) \right) - \left ( \int_s dp_0(s) f(s) \right).
\end{align*}
Here, the first equality follows form definition \ref{def-deom}, the second one from definition \ref{def-tdo}. The last equality follows again from definition \ref{def-deom}.
\end{proof}

\begin{definition}[Markov Reward Process]\label{def-MRP}\ \\
A Markov Reward Process is a tuple $(p,p_0,R,\gamma)$, where $p(s' \vert s)$ is a transition kernel, $p_0$ is the distribution over initial states, $R(\cdot \vert s)$ is a reward distribution conditioned on the state and $\gamma$ is the discount constant.
\end{definition}

An MRP can be thought of as an MDP with a fixed policy and dynamics given by marginalizing out the actions $p_\pi(s' \mid s) = \int_a d \pi (a \mid s) p(s' \mid a, s)$. Since this paper considers the case of one policy, we abuse notation slightly by using the same symbol $\tau$ to denote trajectories including actions, i.e. $(s_0,a_0,r_0,s_1,a_1,r_1,\dots)$ and without them $(s_0,r_0,s_1,r_1,\dots)$.

\begin{lemma}[Second Moment Bellman Equation]
\label{smbe-lemma}
Consider a Markov Reward Process $(p,p_0,X,\gamma)$ where $p(s' \mid s)$ is a Markov process and $X( \cdot \mid s)$ is some probability density function.\footnote{Note that while $X$ occupies a place in the definition of the MRP usually called `reward distribution', we are using the symbol $X$, not $R$ since we shall apply the lemma to $X$es which are constructions distinct from the reward of the MDP we are solving.}  Denote the value function of the MRP as $V$. Denote the second moment function $S$ as
\[
S(s) = \excs{\tau}{ \left( \sum_{t=0}^{\infty} \gamma^t x_t \right)^2 }{s_0 = s}, \quad \text{where} \quad x_t \sim X(\cdot \mid s_t).
\]
Then $S$ is the value function of the MRP: $(p, p_0, u, \gamma^2)$, where $u(s)$ is a deterministic random variable given by
\[
u(s) = \vars{X( x \mid s)}{x} + \left(\exs{X( x \mid s)}{x}\right)^2 + 2 \gamma \exs{X( x \mid s)}{x} \exs{p( s' \mid s)}{\Vv(s)')} .
\]
\end{lemma}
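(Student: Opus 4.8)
The plan is to establish a one-step Bellman recursion for $S$ and then recognise it as the defining fixed-point equation for the value function of the MRP $(p, p_0, u, \gamma^2)$. Writing $G = \sum_{t=0}^{\infty} \gamma^t x_t$ for the random discounted return, the starting point is the decomposition $G = x_0 + \gamma G'$, where $G' = \sum_{t=0}^{\infty} \gamma^t x_{t+1}$ is the return of the tail trajectory begun at $s_1$. Squaring gives $G^2 = x_0^2 + 2\gamma x_0 G' + \gamma^2 (G')^2$, and essentially the whole proof amounts to taking $\excs{\tau}{\cdot}{s_0 = s}$ of this identity and identifying the three resulting terms with the pieces of $u(s)$ and the discounted future.

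First I would handle the square term: since $x_0 \sim X(\cdot \mid s)$, we have $\exc{x_0^2}{s_0 = s} = \vars{X(x \mid s)}{x} + \left(\exs{X(x \mid s)}{x}\right)^2$, which reproduces the first two summands of $u(s)$. For the tail term, conditioning on $s_1$ and invoking the Markov property gives $\exc{(G')^2}{s_0 = s} = \exs{p(s' \mid s)}{S(s')}$, since $S$ is by definition the second moment started at the conditioned state; the prefactor $\gamma^2$ then turns this into exactly the discounted-future contribution $\gamma^2 \exs{p(s' \mid s)}{S(s')}$ that a Bellman equation with discount $\gamma^2$ demands.

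The crux is the cross term $2\gamma \exc{x_0 G'}{s_0 = s}$. Here I would argue that, conditioned on $s_0 = s$, the reward $x_0$ (which depends only on $s_0$ through $X(\cdot \mid s_0)$) is independent of $G'$ (a function of $s_1, s_2, \dots$ and $x_1, x_2, \dots$), because the transition to $s_1$ does not depend on $x_0$ and the per-state rewards are drawn conditionally independently given the state sequence. This factorisation yields $\exc{x_0 G'}{s_0 = s} = \exs{X(x \mid s)}{x}\,\exc{G'}{s_0 = s}$, and a further conditioning on $s_1$ gives $\exc{G'}{s_0 = s} = \exs{p(s' \mid s)}{V(s')}$ in terms of the MRP value function $V$. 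The cross term therefore becomes $2\gamma \exs{X(x \mid s)}{x}\,\exs{p(s' \mid s)}{V(s')}$, the final summand of $u(s)$.

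Collecting the three pieces gives $S(s) = u(s) + \gamma^2 \exs{p(s' \mid s)}{S(s')}$, which is precisely the Bellman equation characterising the value function of $(p, p_0, u, \gamma^2)$; uniqueness of that fixed point (for $\gamma < 1$, so $\gamma^2 < 1$, with bounded $u$) then identifies $S$ as claimed. I expect the main obstacle to be making the conditional-independence argument for the cross term fully rigorous, together with the technical justification --- a Fubini or dominated-convergence (or bounded-reward) argument --- needed to split the expectation of the infinite-horizon quantity $G^2$ into these terms and to guarantee that $S$ and $V$ are finite so that the recursion is well posed.
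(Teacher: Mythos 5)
Your proof is correct and follows essentially the same route as the paper's: decompose the return as $x_0 + \gamma G'$, square, identify the square, cross, and tail terms with $u(s)$ and the $\gamma^2$-discounted future, and conclude by uniqueness of the Bellman fixed point. Your treatment is in fact slightly more careful than the paper's, since you spell out the conditional-independence argument for the cross term and flag the integrability issues that the paper leaves implicit.
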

\begin{proof}
We rewrite $S(s)$ as
\begin{align*} \textstyle
S(s) &= \textstyle \excs{\tau}{ \left( x_0 + \sum_{t=1}^{\infty} \gamma^t x_t \right)^2 }{s_0 = s} \\
&= \textstyle \excs{\tau}{ x_0^2 + 2 x_0 \left( \sum_{t=1}^{\infty} \gamma^t x_t \right) +  \left( \sum_{t=1}^{\infty} \gamma^t x_t \right)^2}{s_0 = s} \\
&= \textstyle \underbrace{ \textstyle \excs{\tau}{ x_0^2 }{s_0 = s} + \excs{\tau}{ 2 x_0 \left( \sum_{t=1}^{\infty} \gamma^t x_t \right)}{s_0 = s}}_{u(s)}
+ \underbrace{\excs{\tau}{ \textstyle \left( \sum_{t=1}^{\infty} \gamma^t x_t \right)^2}{s_0 = s}}_{\gamma^2  \exs{p( s' \mid s)}{S(s')}}.
\end{align*}
This is exactly the Bellman equation of the MRP $(p, p_0, u, \gamma^2)$. The theorem follows since the Bellman equation uniquely determines the value function.
\end{proof}

\begin{observation}[Dominated Value Functions]
\label{dom-o}\ \\
Consider two Markov Reward Processes $(p,p_0,X_1,\gamma)$ and $(p,p_0,X_2,\gamma)$, where $p(s' \mid s)$ is a Markov process (common to both MRPs) and $X_1(s)$, $X_2(s)$ are some deterministic random variables meeting the condition $X_1(s) \leq X_2(s)$ for every $s$. Then the value functions $V_1$ and $V_2$ of the respective MRPs satisfy $V_1(s) \leq V_2(s)$ for every $s$. Moreover, if we have that $X_1(s) < X_2(s)$ for all states, then the inequality between value functions is strict.
\end{observation}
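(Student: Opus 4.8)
The plan is to exploit the fact that the two MRPs share the same transition kernel $p$, so that conditioned on $s_0 = s$ the distribution over trajectories $\tau = (s_0, s_1, s_2, \dots)$ is identical for both processes; only the reward function differs. Since $X_1$ and $X_2$ are deterministic given the state, the value function of each MRP is simply
\[
V_i(s) = \excs{\tau}{\sum_{t=0}^\infty \gamma^t X_i(s_t)}{s_0 = s},
\]
where the discounted sum inside the expectation is interpreted through the limiting definition of the infinite-trajectory expectation, and the interchange of sum and expectation is justified exactly as in Observation \ref{obs-eprop} (now with the initial state conditioned to equal $s$ rather than drawn from $p_0$).

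First I would form the difference of the two value functions and use linearity of expectation over the common trajectory law:
\[
V_2(s) - V_1(s) = \excs{\tau}{\sum_{t=0}^\infty \gamma^t \left( X_2(s_t) - X_1(s_t) \right)}{s_0 = s}.
\]
By hypothesis $X_2(s') - X_1(s') \geq 0$ for every state $s'$, and each discount weight $\gamma^t$ is nonnegative, so every summand is nonnegative along every trajectory. Hence the expectation is nonnegative and $V_1(s) \leq V_2(s)$, which is the first claim.

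For the strict statement I would isolate the $t = 0$ term. Because $s_0 = s$ is fixed, that term equals $X_2(s) - X_1(s)$ deterministically, and under the strict hypothesis this quantity is strictly positive; every remaining term ($t \geq 1$) is still nonnegative. Therefore the difference is bounded below by $X_2(s) - X_1(s) > 0$, yielding $V_1(s) < V_2(s)$ for all $s$.

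The argument is really just monotonicity of the discounted-return functional in its (pointwise) reward, together with a strictly positive contribution from the initial step, so I do not expect a genuine obstacle. The only point demanding care is the interchange of the infinite sum with the expectation; this is supplied by the limiting definition of the infinite-trajectory expectation given earlier, which also implicitly requires the $X_i$ to be such that the value functions are finite, so that the subtraction $V_2 - V_1$ is well defined.
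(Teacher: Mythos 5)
Your proposal is correct and matches the paper's (very terse) proof, which simply expands each value function as a discounted series and compares the two series elementwise; your trajectory-expectation formulation and the isolation of the deterministic $t=0$ term for strictness are just a fleshed-out version of that same comparison. No substantive difference in approach.
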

\begin{proof}
Follows trivially by expanding the value function as a series and comparing series elementwise.
\end{proof}

\subsection{Proofs about Exponential Families}
\epgexpfamily*

    \begin{proof}
    We first rewrite the inner integral as an expectation, obtaining
    \begin{align*}
      I^{{\hat{Q}}}_\pi(s) &= \int_A d \pi(a \mid s) \nabla_\theta  \log \pi(a \mid s) {\hat{Q}}(a,s) \\
      &= \exs{a \sim \pi}{\nabla_\theta  \log \pi(a \mid s) {\hat{Q}}(a,s)} \\
      &= \exs{a \sim \pi}{ (\nabla_\theta  (\eta_\theta^\top T^s(a) - U^s_{\eta_\theta} + W^s(a)) ) {\hat{Q}}(a,s)} \\
      &= \exs{a \sim \pi}{ (\nabla_\theta  \eta_\theta^\top) T^s(a) {\hat{Q}}(a,s) - (\nabla_\theta  U^s_{\eta_\theta}) {\hat{Q}}(a,s)} \\
      &= (\nabla_\theta  \eta_\theta^\top)  \exs{a \sim \pi}{ T^s(a) {\hat{Q}}(a,s)} - (\nabla_\theta  U^s_{\eta_\theta}) \exs{a \sim \pi}{ {\hat{Q}}(a,s)}.
    \end{align*}
    Since $T^s(a)$ and ${\hat{Q}}(a,s)$ are polynomials, and the multiplication of polynomials is still polynomial, both expectations are expectations of polynomials. To compute the second expectation, we exploit the fact that, since ${\hat{Q}}(a,s)$ is a polynomial on $a$, it equals a sum of monomial terms
    \[
    \exs{a \sim \pi}{ {\hat{Q}}(a,s)} =
    \exs{a \sim \pi}{ \sum_{i=1}^D c_i \prod_{j=1}^d a_j^{p_i(j)}} =
    \sum_{i=1}^D c_i \underbrace{ \exs{a \sim \pi}{ \prod_{j=1}^d a_j^{p_i(j)} }}_{\text{cross-moment of} \;\; \pi}.
    \]
    On the right, the terms $\exs{a \sim \pi}{ \prod_{j=1}^d a_j^{p_i(j)} }$ (we do not explicitly denote the dependence on $s$ for clarity), are the uncentered $(p_i(1),p_i(2),\dots,p_i(d))$-cross-moments of $\pi$. If we arrange the coefficients $c_i$ into the vector $C^s_Q$ and the cross-moments into the vector $m^s_\pi$, we obtain the right term in \eqref{poly-epg-cf}. We can apply the same reasoning to the product of $T^s$ and ${\hat{Q}}(\cdot,s)$ to obtain the left term.
    \end{proof}

\epgexpfamilyrepar*
\begin{proof}
We rewrite the policy gradient update
    \begin{align*}
      I^{Q}_\pi(s) & = \int_A d \pi(a \mid s) \nabla_\theta  \log \pi(a \mid s) {\hat{Q}}(a, s)\\
      & = \int_{\mathbb{R}^d} d \pi(g(b) \mid s) \nabla_\theta  \log \pi(g(b) \mid s) {\hat{Q}}(g(b), s) \det \jacobian  g(b) \\
      & = \int_{\mathbb{R}^d} d \pi_b(b \mid s) \nabla_\theta  \log \pi(g(b) \mid s) {\hat{Q}}_b(b, s) \\
      & = \int_{\mathbb{R}^d} d \pi_b(b \mid s) (\nabla_\theta  \log \pi_b(b \mid s) - \underbrace{\nabla_\theta  \log \det \jacobian  g(b)}_{0}) {\hat{Q}}_b(b, s) = I^{{\hat{Q}}_b}_{\pi_b}(s).
    \end{align*}
    In the second equality, we perform the variable substitution $a = g(b)$. In the third equality we use \eqref{eq-repar-pdf} and the fact that ${\hat{Q}}_b(g^{-1}(a), s) = {\hat{Q}}(a, s)$. In the fourth equality we again use \eqref{eq-repar-pdf} and the fact that $\log \det \jacobian  g(b) = 0$ since $g$ is not parameterized by $\theta$.
    \end{proof}

\subsection{Computation of Moments for an Exponential Family}
\label{ef-moments}
Consider the moment generating function of $T^s(a)$, which we denote as $M_T$, for the exponential family of the form given in equation \eqref{pc-exp-fam-c}, that is
\[
M_T(v) = e^{U^s_{v + \est} - U^s_{\est}}.
\]
It is well-known that $M_T$ is finite in a neighborhood of the origin \citep{Bickel:2006aa}, and hence the cross moments can be obtained as
\[
 \exs{a \sim \pi}{ \prod_{j=1}^{K} {T^s(a)}_j^{p(j)} } =
 \left. \frac{\partial} { \partial^{p(1)} v_1, \partial^{p(2)} v_2, \dots, \partial^{p(K)} v_K } M_T(v) \right | _{v =  0 }.
\]
Here, we denoted as $K$ the size of the sufficient statistic (i.e. the length of the vector $T^s(a)$). However, we seek the cross-moments of $a$, not $T^s(a)$. If $T^s(a)$ contains a subset of indices which correspond to the vector $a$, then we can simply use the corresponding indices in the above equation. On the other hand, if this is not the case, we can introduce an extended distribution $\pi'( a \mid s) = e^{{(\eta)'}^\top T'(a) - U^s_{\est} + W^s(a)}$, where $T'$ is a vector concatenation of $T^s$ and $a$. We can then use the MGF of $T'(a)$, restricted to a suitable set of indices to get the moments.

\subsection{Experimental Details}
\label{sec-exp-details}
The exploration hyperparameters for EPG were $\sigma_0 = 0.5$ and $c = 1.0$ where the exploration covariance is $\sigma_0 e^{cH}$. These values were obtained using a grid search from the set $\{0.2,0.5,1\}$ for $\sigma_0$ over the HalfCheetah-v2 domain (see Figure \ref{fig-hyperparameters-EPG}). The remaining  parameters were based on previous attempts to obtain good sample efficiency for deterministic policy gradients \citep{islam2017reproducibility, brockman2016openai}. We provide a full list in Table \ref{table-hyperparamters}.

\begin{table}[ht]
    \small
    \begin{tabular}{ll}
    \multicolumn{2}{l}{\emph{Shared parameters}}                                                                                             \\
    Target network update constant $\tau$                               & 0.01                                                        \\
    Size of replay buffer                                              & 1000000                                                     \\
    Method of sampling from buffer                                     & uniform                                                     \\
    Ignored steps at beginning of training                   & 10000                                                       \\
    Reward scale for \\ \hspace{1cm} InvertedPendulum-v2 and \\ \hspace{1cm} InvertedDoublePendulum-v2 & 0.1                                                         \\
    Reward scale for other tasks                                       & 1 (no scaling)                                              \\
    Optimiser used for both actor and critic                          & Adam                                                        \\
    Learning rate                                                      & 1e-3                                                        \\
    Batch size                                                        & 64                                                        \\
    Structure of critic network                                        & hidden layers of 100, 100 neurons respectively,  \\
                                                                        & \hspace{1cm} ReLU nonlinearities       \\
    Structure of actor network                                         & hidden layers of 100, 50, 25 neurons respectively,      \\
                                                                        & \hspace{1cm} ReLU nonlinearities       \\
    Target network update constant $\tau$                               & 0.01                                                        \\ 
    \\
    \multicolumn{2}{l}{\emph{Experiment-specific parameters}}                                                                         \\
    DPG                               & $\sigma = 0.2, \psi=0.15$                                                        \\
    EPG                               & $\sigma_0 = 0.5$                                                        \\ 
    SPG                               & $\sigma = 0.2$, fixed                                                        \\ 
    NQ($m$)-EPG                       & $\sigma = 0.2, \psi=0.15$ (OU exploration)
    \end{tabular}
    \caption{List of hyperparameters}
    \label{table-hyperparamters}
\end{table}

The experiments described here extend our previous conference work \citep{epg-aaai}. The experiments here are not directly comparable because they were performed on different versions of MuJoCo environment (version 2 instead of 1) and using PyTorch \citep{paszkeAutomaticDifferentiationPyTorch2017} rather than TensorFlow \citep{abadiTensorFlowLargescaleMachine2015}, which have minor differences in the implementation of optimisation algorithms. Despite these differences, our new results are very similar.

For the PPO algorithm, we simply ran the PPO2 version published by OpenAI \citep{baselines} with its default parameters. For the experiments with discrete action spaces we used the hyperparameters of the A2C algorithm \citep{baselines}. 

\begin{figure}[t]
    \centering
        \captionsetup[subfigure]{aboveskip=0.5em,belowskip=0.5em}
        \begin{subfigure}{0.42\textwidth}
            \caption{HalfCheetah-v2}
            \includegraphics[width=\textwidth]{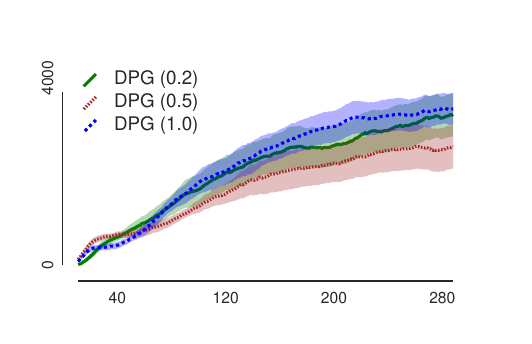}
        \end{subfigure}
        ~
        \vspace{1cm}
        \begin{subfigure}{0.42\textwidth}
            \caption{InvertedDoublePendulum-v2}
            \includegraphics[width=\textwidth]{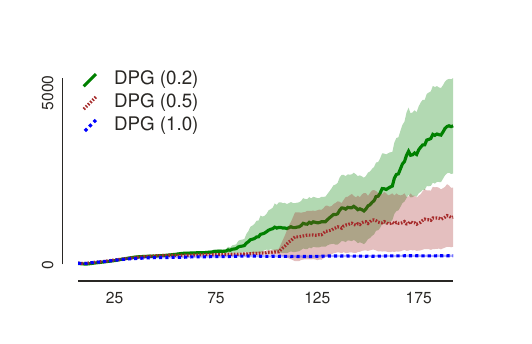}
        \end{subfigure}
        \caption{Learning curves (mean and 90\% interval) showing the performance of Deterministic Policy Gradients for different values of the exploration noise.  All results are obtained from 20 runs. Horizontal axis shows thousands of steps.}
        \label{fig-hyperparameters-DPG}
    \end{figure}

\subsection{Parameter Tuning for Deterministic Policy Gradients}
The original paper the combined Deterministic Policy Gradients with deep networks used $\sigma=0.2$ \citep{lillicrap2015continuous}. To ensure a fair comparison, we tested the algorithm using three different settings for the exploration noise $\sigma$: 0.2, 0.5 and 1. The parameter for the memory of the OU noise was set to $\frac{15}{20}\sigma$, following \citet{lillicrap2015continuous}. On the HalfCheetah domain (Figure \ref{fig-hyperparameters-DPG}), the performances were comparable, with the value 1 leading to the best performance (although the difference wasn't statistically significant). We initially wanted to use that value. However, exploring with such a large noise leads to catastrophic failure in the InvertedDoublePendulum task. We therefore settled on the value of 0.2, which performed reasonably on both tasks and was already widely used \citep{lillicrap2015continuous}. 

\vskip 0.2in
\bibliography{epg-journal-nonote}

\end{document}